\documentclass{article}





\usepackage[final,nonatbib]{neurips_2023}

\usepackage[utf8]{inputenc} 
\usepackage[T1]{fontenc}    
\usepackage{hyperref}       
\usepackage{url}            
\usepackage{booktabs}       
\usepackage{amsfonts}       
\usepackage{nicefrac}       
\usepackage{microtype}      
\usepackage{cite}
\usepackage{wrapfig}
\usepackage{amssymb}
\usepackage{graphicx}
\usepackage{xcolor}
\usepackage{amssymb}
\usepackage{diagbox}
\usepackage{amsmath}
\usepackage{siunitx}
\usepackage{amsthm}
\usepackage{comment}
\usepackage{bm}
\usepackage{multirow}
\usepackage{algorithm}
\usepackage{amssymb}
\usepackage{algorithmic}
\usepackage{wrapfig}
\usepackage{array}
\usepackage{booktabs,caption}
\usepackage{amsmath}
\usepackage{enumitem}
\usepackage{bbding}
\usepackage[flushleft]{threeparttable}
\newcolumntype{C}[1]{>{\centering\arraybackslash}p{#1}}
\newcolumntype{L}[1]{>{\raggedright\arraybackslash}p{#1}}
\newtheorem{theorem}{Theorem}
\newtheorem{lemma}{Lemma}

\title{ContiFormer: Continuous-Time Transformer for Irregular Time Series Modeling}

%

\author{
    Yuqi Chen$\textsuperscript{1,2}$\thanks{The work was conducted during the internship of Yuqi Chen and Yuchen Fang at Microsoft Research.}, 
    Kan Ren$\textsuperscript{2}$\thanks{Correspondence to Kan Ren, contact: rk.ren@outlook.com.}, 
    Yansen Wang$\textsuperscript{2}$, 
    Yuchen Fang$\textsuperscript{2,3}$, 
    Weiwei Sun$\textsuperscript{1}$, 
    Dongsheng Li$\textsuperscript{2}$ \\
    $\textsuperscript{1}$\ School of Computer Science \& Shanghai Key Laboratory of Data Science, Fudan University  \\
    $\textsuperscript{2}$\ Microsoft Research Asia, $\textsuperscript{3}$\ Shanghai Jiao Tong University \\
}

\begin{document}

\maketitle

\begin{abstract}
Modeling continuous-time dynamics on irregular time series is critical to account for data evolution and correlations that occur continuously. Traditional methods including recurrent neural networks or Transformer models leverage inductive bias via powerful neural architectures to capture complex patterns. However, due to their discrete characteristic, they have limitations in generalizing to continuous-time data paradigms. Though neural ordinary differential equations (Neural ODEs) and their variants have shown promising results in dealing with irregular time series, they often fail to capture the intricate correlations within these sequences. It is challenging yet demanding to concurrently model the relationship between input data points and capture the dynamic changes of the continuous-time system. To tackle this problem, we propose ContiFormer that extends the relation modeling of vanilla Transformer to the continuous-time domain, which explicitly incorporates the modeling abilities of continuous dynamics of Neural ODEs with the attention mechanism of Transformers. We mathematically characterize the expressive power of ContiFormer and illustrate that, by curated designs of function hypothesis, many Transformer variants specialized in irregular time series modeling can be covered as a special case of ContiFormer. A wide range of experiments on both synthetic and real-world datasets have illustrated the superior modeling capacities and prediction performance of ContiFormer on irregular time series data. The project link is \href{https://seqml.github.io/contiformer/}{https://seqml.github.io/contiformer/}.
  
\end{abstract}

\section{Introduction}

Irregular time series are prevalent in real-world applications like disease prevention, financial decision-making, and earthquake prediction~\cite{zuo2020transformer, bauer2016arrow, jia2019neural}. Their distinctive properties set them apart from regular time series data.
First, irregular time series data are characterized by irregularly generated or non-uniformly sampled observations with variable time intervals, as well as missing data due to technical issues, or data quality problems, which pose challenges for traditional time series analysis techniques~\cite{che2018recurrent, yue2022ts2vec}.
Second, even though the observations are irregularly sampled, the underlying data-generating process is assumed to be continuous~\cite{rubanova2019latent, kidger2020neural, park2022neural}.
Third, the relationships among the observations can be intricate and continuously evolving.
All these characteristics require elaborate modeling approaches for better understanding these data and making accurate predictions.

The continuity and intricate dependency of these data samples pose significant challenges for model design. 
Simply dividing the timeline into equally sized intervals can severely damage the continuity of the data~\cite{lipton2016directly}. 
Recent works have suggested that underlying continuous-time process is appreciated for irregular time series modeling~\cite{rubanova2019latent, kidger2020neural, che2018recurrent, mei2017neural}, which requires the modeling procedure to capture the continuous dynamic of the system. 
Furthermore, due to the continuous nature of data flow, we argue that the correlation within the observed data is also constantly changing over time.
For instance, stock prices of tech giants (e.g., MSFT and GOOG) show consistent evolving trends, yet they are affected by short-term events (e.g., large language model releases) and long-term factors. 

To tackle these challenges, researchers have pursued two main branches of solutions. 
Neural ordinary differential equations (Neural ODEs) and state space models (SSMs) have illustrated promising abilities for capturing the dynamic change of the system over time~\cite{smith2022simplified, kidger2022neural, chen2018neural, gu2021efficiently}. 
However, these methods overlook the intricate relationship between observations and their recursive nature can lead to cumulative errors if the number of iterations is numerous~\cite{chen2018neural}.
Another line of research capitalizes on the powerful inductive bias of neural networks, such as various recurrent neural networks~\cite{che2018recurrent, mei2017neural, schirmer2022modeling} and Transformer models~\cite{xu2019self, shukla2021multi, ma2022non, zhang2019attain}. 
However, their use of fixed-time encoding or learning upon certain kernel functions fails to capture the complicated input-dependent dynamic systems.

Based on the above analysis, modeling the relationship between observations while capturing the temporal dynamics is a challenging task. 
To address this problem, we propose a Continuous-Time Transformer, namely \textit{ContiFormer}, which incorporates the modeling abilities of continuous dynamics of Neural ODEs within the attention mechanism of Transformers and breaks the discrete nature of Transformer models.
Specifically, to capture the dynamics of the observations, ContiFormer begins by defining latent trajectories for each observation in the given irregularly sampled data points. 
Next, to capture the intricate yet continuously evolving relationship, it extends the discrete dot-product in Transformers to a continuous-time domain, where attention is calculated between continuous dynamics. 
With the proposed attention mechanism and Transformer architecture, ContiFormer effectively models complex continuous-time dynamic systems. 

The contributions of this paper can be summarized below.
\begin{itemize}[leftmargin=5mm]
    \item \textit{Continuous-Time Transformer.} To the best of our knowledge, we are the first to incorporate a continuous-time mechanism into attention calculation in Transformer, which is novel and captures the continuity of the underlying system of the irregularly sampled time-series data. 
    \item \textit{Parallelism Modeling.} To tackle the conflicts between continuous-time calculation and the parallel calculation property of the Transformer model, we propose a novel reparameterization method, allowing us to parallelly execute the continuous-time attention in the different time ranges. 
    \item \textit{Theoretical Analysis.} We mathematically characterize that various Transformer variants~\cite{xu2019self, li2020time, zhang2019attain, shukla2021multi, ma2022non} can be viewed as special instances of ContiFormer. Thus, our approach offers a broader scope that encompasses Transformer variants.
    \item \textit{Experiment Results.} We examine our method on various irregular time series settings, including time-series interpolation, classification, and prediction. The extensive experimental results have illustrated the superior performance of our method against strong baseline models.
\end{itemize}

\section{Related Work}

\paragraph{Time-Discretized Models.} 
There exists a branch of models based on the time-discrete assumption which transforms the time-space into a discrete one.
Recurrent Neural Networks (RNN)~\cite{hochreiter1997long, chung2014empirical} and Transformers~\cite{vaswani2017attention} are powerful time-discrete sequence models, which have achieved great success in natural language processing~\cite{devlin2018bert, liu2019roberta, zhang2021continuous}, computer vision~\cite{zhu2020deformable, guo2022cmt} and time series forecasting~\cite{wu2021autoformer, zhou2021informer, li2023towards, zhou2022fedformer, kitaev2020reformer, wu2022timesnet, zeng2023transformers}. 
Utilizing them for irregular time series data necessitates discretizing the timeline into time bins or padding missing values, potentially resulting in information loss and disregarding inter-observation dynamics.~\cite{choi2018mime, mozer2017discrete, lipton2016directly}.
An alternative is to construct models that can utilize these various time intervals~\cite{zhang2019attain, li2020time}. 
Additionally, several recent approaches have also 
encoded time information into features to model irregularly sampled time series, including time representation approaches~\cite{xu2019self, zhang2019attain} and kernel-based methods~\cite{ma2022non}, which allow learning for time dynamics.
For instance, Mercer~\cite{xu2019self} concatenates the event embedding with the time representation.
Despite these advances, these models are still insufficient in capturing input-dependent dynamic systems. 

\paragraph{Continuous-Time Models.} 
Other works shift to the continuous-time modeling paradigm.
To overcome the discrete nature of RNNs, 
a different strategy involves the exponential decay of the hidden state between observations~\cite{che2018recurrent, mei2017neural}. 
Nevertheless, the applicability of these techniques is restricted to decaying dynamics specified between successive observations~\cite{chen2018neural}. 
Neural ODEs are a class of models that leverage the theory of ordinary differential equations to define the dynamics of neural networks~\cite{chen2018neural, jia2019neural}.
Neural CDE~\cite{kidger2020neural} and Neural RDE~\cite{morrill2021neural} further utilize the well-defined mathematics of controlled differential equations~\cite{arribas2018derivatives, kidger2019deep} and rough path theory~\cite{boutaib2013dimension} to construct continuous-time RNN. 
ODE-RNN~\cite{rubanova2019latent} combines Neural ODE with Gated Recurrent Unit (GRU)~\cite{cho2014properties} to model continuous-time dynamics and input changes.
Whereas these approaches usually fail to capture the evolving relationship between observations~\cite{schirmer2022modeling}. CADN~\cite{chien2022learning} and ANCDE~\cite{jhin2023attentive} extend ODE-RNN and Neural CDE by incorporating an attention mechanism to capture the temporal relationships, respectively.
Another line of research explores the use of state space models (SSMs) to represent continuous-time dynamic systems~\cite{gu2021efficiently, smith2022simplified, gu2022parameterization,ansari2023neural}. However, these models adopt a recursive schema, which hampers its efficiency and can lead to cumulative errors~\cite{chen2018neural}.
In contrast, our proposed model, ContiFormer leverages the parallelism advantages of the Transformer architecture and employs a non-autoregressive paradigm. 
Furthermore, extending the Transformer to the continuous-time domain remains largely unexplored by researchers. Such extension provides a more flexible and powerful modeling ability for irregular time series data.

\section{Method}

The ContiFormer is a Transformer \cite{vaswani2017attention} model for processing time series data with irregular time intervals. 
Formally, an irregular time series is defined as $\Gamma=[(X_1, t_1), \ldots, (X_N, t_N)]$, where observations may occur at any time and the observation time points $\boldsymbol{\omega}=(t_1, \ldots, t_N)$ are with irregular intervals. $X_i$ is the input feature for the $i$-th observation.
We denote $X=[X_1; X_2; ..., X_N] \in \mathbb{R}^{N \times d}$.

As aforementioned, it is challenging yet demanding to concurrently model the relationship between input data points and capture the dynamic changes of the continuous-time system. 
To tackle the challenge, we propose a Continuous-Time Transformer architecture as shown in Figure ~\ref{fig:ctsa}. 
Each layer of ContiFormer takes as input an irregular time series $X$ and the sampled time $\boldsymbol{\omega}$ and outputs a latent continuous trajectory that captures the dynamic change of the underlying system. 
With such a design, ContiFormer transforms the discrete observation sequence into the continuous-time domain.
At each layer, the core attention module takes a continuous perspective and expands the dot-product operation in the vanilla Transformer to the continuous-time domain, which not only models the underlying continuous dynamics but also captures the evolving input-dependent process.
For better understanding, the description of each layer will omit the layer index, without causing confusion.

Throughout the remaining section, we adopt the notation $\boldsymbol{\omega}$ to denote a sequence of (reference) time points, while $t$ is the random variable representing a query time point. 
Matrices are donated using uppercase letters, and lowercase letters except $t$ represent continuous functions.

\begin{figure*}[t]
    \centering
    \includegraphics[width=0.97\textwidth]{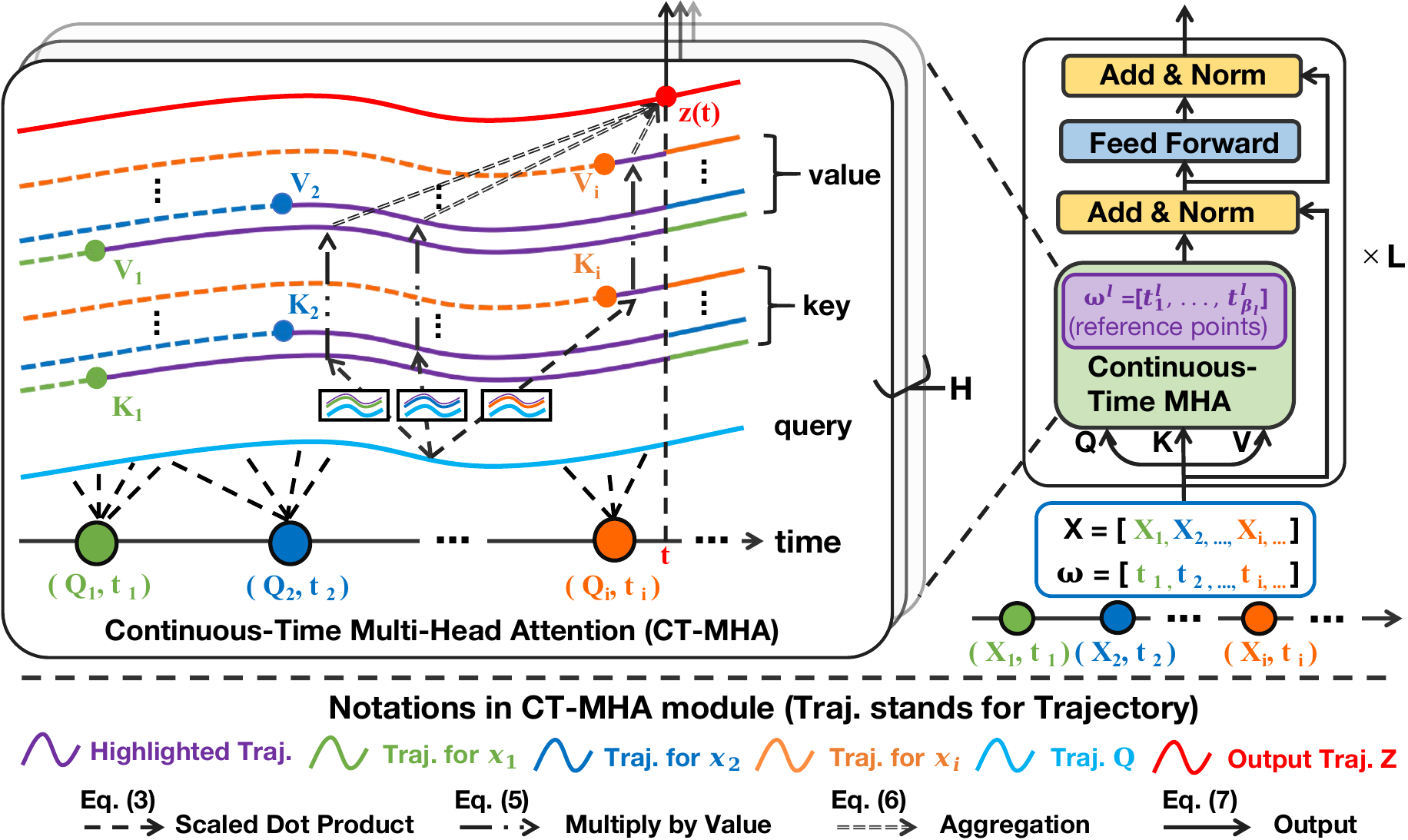}
    \caption{Architecture of the ContiFormer layer. ContiFormer takes an irregular time series and its corresponding sampled time points as input. Queries, keys, and values are obtained in continuous-time form. The attention mechanism (CT-MHA) performs a scaled inner product in a continuous-time manner to capture the evolving relationship between observations, resulting in a complex continuous dynamic system. Feedforward and layer normalization are adopted, similar to the Transformer. Finally, a sampling trick is employed to make ContiFormer stackable. Note that the highlighted trajectories in purple indicate the part of functions that are involved in the calculation of the output. 
    }
    \label{fig:ctsa}
\end{figure*}

\subsection{Continuous-Time Attention Mechanism} \label{sec:machamism}

The core of the ContiFormer layer is the proposed continuous-time multi-head attention (CT-MHA) module, as shown in Figure~\ref{fig:ctsa}.
At each layer, we first transform the input irregular time series $X$ into $Q=[Q_1; Q_2; \ldots; Q_N]$ for queries, $K=[K_1; K_2; \ldots; K_N]$ for keys, and $V=[V_1; V_2; \ldots; V_N]$ for values. 
At a high level, the CT-MHA module transforms the irregular time series inputs to latent trajectories and outputs a continuous dynamic system that captures the time-evolving relationship between observations. 
To accomplish this, it utilizes ordinary differential equations to define the latent trajectories for each observation.
Within the latent state, it assumes that the underlying dynamics evolve following linear ODEs.
Subsequently, it constructs a continuous query function by approximating the underlying sample process of the input.
Ultimately, it produces a continuous-time function that captures the evolving relationship and represents the complex dynamic system.

\paragraph{Continuous Dynamics from Observations} 

Transformer calculates scaled dot-product attention for queries, keys, and values ~\cite{vaswani2017attention}. In continuous form, we first employ ordinary differential equations to define the latent trajectories for each observation. Specifically, assuming that the first observation and last observation come at time point $t_1$ and $t_N$ respectively, we define continuous keys and values as
\begin{equation}
    \begin{aligned}
        \boldsymbol{\mathrm{k}}_i(t_i) & = K_i ~, \quad   \boldsymbol{\mathrm{k}}_i(t) = \boldsymbol{\mathrm{k}}_i(t_i) + \int_{t_i}^{t} f\,\Big( \tau, \boldsymbol{\mathrm{k}}_i(\tau); \theta_k \Big)\, \text{d} \tau ~,\\
        \boldsymbol{\mathrm{v}}_i(t_i) & = V_i ~, \quad   \boldsymbol{\mathrm{v}}_i(t) = \boldsymbol{\mathrm{v}}_i(t_i) + \int_{t_i}^{t} f\,\Big( \tau, \boldsymbol{\mathrm{v}}_i(\tau); \theta_v \Big)\, \text{d} \tau ~, \\
    \end{aligned}
\label{eq:kv}
\end{equation}
where $t\in [t_1, t_N]$, $\boldsymbol{\mathrm{k}}_i(\cdot), \boldsymbol{\mathrm{v}}_i(\cdot) \in \mathbb{R}^{d}$ represent the ordinary differential equation for the $i$-th observation with parameters $\theta_k$ and $\theta_v$, and the initial state of $\boldsymbol{\mathrm{k}}_i(t_i)$ and $\boldsymbol{\mathrm{v}}_i(t_i)$ respectively and the function $f\left( \cdot \right) \in \mathbb{R}^{d+1} \rightarrow \mathbb{R}^{d}$ controls the change of the dynamics.

\paragraph{Query Function} \label{ms:q}

While keys and values are associated with the input and output of the attention mechanism, a query specifies what information to extract from the input. 
To model a dynamic system, queries can be modeled as a function of time that represents the overall changes in the input to the dynamic system at a specific time point $t$, i.e., $\boldsymbol{\mathrm{q}}(t)$. 
Specifically, we adopt a common assumption that irregular time series is a discretization of an underlying continuous-time process. 
Thus, similar to ~\cite{kidger2020neural}, we define a closed-form continuous-time interpolation function (e.g., natural cubic spline) with knots at $t_1, \ldots, t_N$ such that $\boldsymbol{\mathrm{q}}(t_i) = Q_i$ as an approximation of the underlying process. 

\paragraph{Scaled Dot Product}

The self-attention mechanism is the key component in Transformer architecture. 
At its core, self-attention involves calculating the correlation between queries and keys. 
This is achieved through the inner product of two matrices in discrete form, i.e., $Q \cdot K^\top$. 
Extending the discrete inner-product to its continuous-time domain, given two real functions $f(x)$ and $g(x)$, we define the inner product of two functions in a closed interval $[a, b]$ as
\begin{equation}
    \left< f, g \right> = \int_{a}^b f(x) \cdot g(x) \text{d} x ~.
\end{equation}
Intuitively, it can be thought of as a way of quantifying how much the two functions ``align'' with each other over the interval. 
Inspired by the formulation of the inner product in the continuous-time domain, we model the evolving relationship between the $i$-th sample and the dynamic system at time point $t$ as the inner product of $\boldsymbol{\mathrm{q}}$ and $\boldsymbol{\mathrm{k}}_i$ in a closed interval $[t_i, t]$, i.e.,
\begin{equation}
    \boldsymbol{\mathrm{\alpha}}_{i}(t) =  \frac{\int_{t_i}^{t} \boldsymbol{\mathrm{q}}(\tau) \cdot \boldsymbol{\mathrm{k}}_i(\tau)^\top \text{d}\tau}{t - t_i} ~.
\label{eq:attn}
\end{equation}
Due to the nature of sequence data, it is common to encounter abnormal points, such as events with significantly large time differences. To avoid numeric instability during training, we divide the integrated solution by the time difference. As a consequence, Eq. (\ref{eq:attn}) exhibits discontinuity at $\boldsymbol{\mathrm{\alpha}}_i(t_i)$. To ensure the continuity of the function $\boldsymbol{\mathrm{\alpha}}_i(\cdot)$, we define $\boldsymbol{\mathrm{\alpha}}_i(t_i)$ as
\begin{equation}
\boldsymbol{\mathrm{\alpha}}_i(t_i) = \lim_{\epsilon \rightarrow 0} \frac{\int_{t_i}^{t_i + \epsilon} \boldsymbol{\mathrm{q}}(\tau) \cdot \boldsymbol{\mathrm{k}}_i(\tau)^\top \text{d}\tau}{\epsilon} = \boldsymbol{\mathrm{q}}(t_i) \cdot \boldsymbol{\mathrm{k}}_i(t_i)^\top ~.
\label{eq:ii}
\end{equation}

\paragraph{Expected Values}

Given a query time $t \in [t_1, t_N]$, the value of an observation at time point $t$ is defined as the expected value from $t_i$ to $t$. 
Without loss of generality, it holds that the expected values of an observation sampled at time $t_i$ is $\boldsymbol{\mathrm{v}}_i(t_i)$ or $V_i$. 
Formally, the expected value is defined as
\begin{equation}
    \widehat{\boldsymbol{\mathrm{v}}}_{i}(t) = \mathbb{E}_{t \sim [t_i, t]} \left[ \boldsymbol{\mathrm{v}}_i(t) \right] = \frac{\int_{t_i}^{t} \boldsymbol{\mathrm{v}}_i(\tau) \text{d} \tau}{t - t_i} ~.
    \label{eq:vv}
\end{equation}

\paragraph{Multi-Head Attention} 

The continuous-time attention mechanism is a powerful tool used in machine learning that allows for the modeling of complex, time-varying relationships between keys, queries, and values. Unlike traditional attention mechanisms that operate on discrete time steps, continuous-time attention allows for a more fine-grained analysis of data by modeling the input as a continuous function of time.
Specifically, given the forehead-defined queries, keys, and values in continuous-time space, the continuous-time attention given a query time $t$ can be formally defined as
\begin{equation}\
\begin{aligned}
    \operatorname{CT-ATTN}( Q, K, V, \boldsymbol{\omega})(t) & 
    = \sum_{i=1}^{N} \widehat{\boldsymbol{\mathrm{\alpha}}}_i(t) \cdot \widehat{\boldsymbol{\mathrm{v}}}_i(t) ~,\\
    \text { where } \widehat{\boldsymbol{\mathrm{\alpha}}}_i(t) & = \frac{\operatorname{exp} \left( \boldsymbol{\mathrm{\alpha}}_i(t) / \sqrt{d_k} \right)}{\sum_{j=1}^{N} \operatorname{exp} \left( \boldsymbol{\mathrm{\alpha}}_j(t) / \sqrt{d_k} \right)} ~. \\
\end{aligned}
\label{eqn:msha}
\end{equation}
Multi-head attention, an extension of the attention mechanism~\cite{vaswani2017attention}, allows simultaneous focus on different input aspects.
It stabilizes training by reducing attention weight variance.
We extend Eq. (\ref{eqn:msha}) by incorporating multiple sets of attention weights, i.e.,
\begin{equation}
\begin{aligned}
\operatorname{CT-MHA}(Q, K, V, \boldsymbol{\omega})(t) & =\operatorname{Concat}\left(\operatorname{head}_{(1)}(t), \ldots, \operatorname{head}_{(\mathrm{H})}(t)\right) W^O ~,\\
\text { where } \operatorname{head}_{(\mathrm{h})}(t) & =\operatorname{CT-ATTN}\left(Q W_{(\mathrm{h})}^Q, K W_{(\mathrm{h})}^K, V W_{(\mathrm{h})}^V, \boldsymbol{\omega} \right)(t) ~,
\end{aligned}
\label{eq:contiattn}
\end{equation}
where $W^O$, $W_{(\mathrm{h})}^Q$, $W_{(\mathrm{h})}^K$ and $W_{(\mathrm{h})}^V$ are parameter matrices, $\mathrm{h} \in [1,\mathrm{H}]$ and $\mathrm{H}$ is the head number.

\subsection{Continuous-Time Transformer}

Despite the widespread adoption of Transformers~\cite{vaswani2017attention} in various research fields, their extension for modeling continuous-time dynamic systems is underexplored.
We propose ContiFormer that directly builds upon the original implementation of vanilla Transformer while extending it to the continuous-time domain.
Specifically, we apply layer normalization (LN) after both the multi-head self-attention (MHA) and the feed-forward blocks (FFN). 
We formally characterize the ContiFormer layer below
\begin{equation}
\begin{aligned}
 \tilde{\boldsymbol{\mathrm{z}}}^{l}(t) &= \operatorname{LN}\Big( \operatorname{CT-MHA}(X^{l}, X^{l}, X^{l}, \boldsymbol{\omega}^{l})(t)  + \boldsymbol{\mathrm{x}}^{l}(t) \Big) ~,\\
 \boldsymbol{\mathrm{z}}^{l}(t) & = \operatorname{LN}\Big( \operatorname{FFN}(\tilde{\boldsymbol{\mathrm{z}}}^{l}(t) ) + \tilde{\boldsymbol{\mathrm{z}}}^{l}(t) \Big) ~,\\
\end{aligned}
\label{eq:layer}
\end{equation}
where $\boldsymbol{\mathrm{z}}^{l}(t)$ is the output from the $l$-th ContiFormer layer at time point $t$. 
Additionally, we adopt residual connection to avoid potential gradient vanishing~\cite{he2016deep}. 
To incorporate the residual connection with the continuous-time output, we approximate the underlying process of the discrete input $X^l$ using a continuous function $\boldsymbol{\mathrm{x}}^l(t)$ based on the closed-form continuous-time interpolation function.

\paragraph{Sampling Process} 

As described before, each ContiFormer layer derives a continuous function $\boldsymbol{\mathrm{z}}^l(t)$ w.r.t. time as the output, while receiving discrete sequence $X^l$ as input.
However, $\boldsymbol{\mathrm{z}}^l(t)$ can not be directly incorporated into neural network architectures that expect inputs in the form of fixed-dimensional vectors and sequences~\cite{shukla2021multi}, which places obstacles when stacking layers of ContiFormer.
To address this issue, we establish reference time points for the output of each layer. These points are used to discretize the layer output, and can correspond to either the input time points (i.e., $\boldsymbol{\omega}$) or task-specific time points. 
Specifically, assume that the reference points for the $l$-th layer is $\boldsymbol{\omega}^{l}=[t_1^l, t_2^l, ..., t_{\beta_l}^l]$, the input to the next layer $X^{l+1}$ can be sampled as $\{ \boldsymbol{\mathrm{z}}^{l}(t_j^l) | j \in [1, \beta_l] \}$.

\newpage
\subsection{Complexity Analysis}\label{sec:complexity}

\begin{wraptable}{r}{0.6\textwidth}
    \centering
    \vspace{-13pt}
    \caption{Per-layer complexity (Comp. per Layer), minimum number of sequential operations (Seq. Op.), and maximum path lengths (Max. Path Len.). $N$ is the sequence length, $d$ is the representation dimension, $T$ is the number of function evaluations (NFE) for the ODE solver in a single forward pass from $t_1$to $t_N$, and $S$ represents the NFE from $-1$ to $1$.}
    \resizebox{0.6\textwidth}{!}{%
    \begin{tabular}{cccc}
    \toprule
        Model & Comp. per Layer & Seq. Op. & Max. Path Len. \\
    \midrule
        Transformer~\cite{vaswani2017attention} & $O(N^2 \cdot d)$ & $O(1)$ & $O(1)$ \\
        RNN~\cite{hopfield1982neural} & $O(N \cdot d^2)$ & $O(N)$ & $O(N)$ \\
        Neural ODE~\cite{chen2018neural} & $O(T \cdot d^2)$ & $O(T)$ & $O(T)$ \\
    \midrule
        ContiFormer & $O(N^2 \cdot S \cdot d^2)$ & $O(S)$ & $O(1)$ \\
    \bottomrule
    \end{tabular}}
    \label{tab:complex}
    \vspace{-10pt}
\end{wraptable}

We consider an autoregressive task where the output of each observation is required for a particular classification or regression task. 
Therefore, the reference points for each layer are defined as $\boldsymbol{\omega}^{l} = \boldsymbol{\omega}$. 

To preserve the parallelization of Transformer architecture and meanwhile implement the continuous-time attention mechanism in Eq. (\ref{eqn:msha}), we first adopt time variable ODE~\cite{chen2021neuralstpp} to reparameterize ODEs into a single interval $[-1, 1]$, followed by numerical approximation method to approximate the integrals, i.e.,
\begin{equation}
    \boldsymbol{\mathrm{\alpha}}_i(t_j) = \frac{\int_{t_i}^{t_j} \boldsymbol{\mathrm{q}}(\tau) \cdot \boldsymbol{\mathrm{k}}_i(\tau)^\top \text{d}\tau}{t_j - t_i} = \frac{1}{2} \int_{-1}^{1} \tilde{\boldsymbol{\mathrm{q}}}_{i,j}(\tau) \cdot \tilde{\boldsymbol{\mathrm{k}}}_{i,j}(\tau)^\top \text{d} \tau \approx  \frac{1}{2} \sum_{p=1}^{P} \gamma_p \tilde{\boldsymbol{\mathrm{q}}}_{i,j}(\xi_p) \cdot \tilde{\boldsymbol{\mathrm{k}}}_{i,j}(\xi_p)^{\top} ~,
\label{eq:app}
\end{equation}
where $P$ denotes the number of intermediate steps to approximate an integral, $\xi_p \in [-1, 1]$ and $\gamma_p \geq 0$, $\tilde{\boldsymbol{\mathrm{k}}}_{i,j}(s) = \boldsymbol{\mathrm{k}}_i\left((s(t_j - t_i) + t_i + t_j)/2\right)$, and $\tilde{\boldsymbol{\mathrm{q}}}_{i,j}(s) = \boldsymbol{\mathrm{q}}_i\left((s(t_j - t_i) + t_i + t_j)/2\right)$. Finally, $\boldsymbol{\mathrm{\alpha}}_i(t_j)$ can be solved with one invokes of ODESolver that contains $N^2$ systems, i.e., 
\begin{equation}
    \underbrace{\begin{bmatrix}
    \tilde{\boldsymbol{\mathrm{k}}}_{1, 1}(\xi_{p-1}) \\
    \vdots \\
    \tilde{\boldsymbol{\mathrm{k}}}_{N,N}(\xi_{p-1})
    \end{bmatrix}}_{\boldsymbol{\mathrm{K}}(\xi_{p-1})} + \int_{\xi_{p-1}}^{\xi_{p}} f(s, \boldsymbol{\mathrm{K}}(s); \theta_k) ds =
    \underbrace{\begin{bmatrix}
    \tilde{\boldsymbol{\mathrm{k}}}_{1, 1}(\xi_{p}) \\
    \vdots \\
    \tilde{\boldsymbol{\mathrm{k}}}_{N, N}(\xi_{p})
    \end{bmatrix}}_{\boldsymbol{\mathrm{K}}(\xi_{p})} ~,
\label{eq:solve}
\end{equation}
where $p \in [1, ..., P]$ and we further define $\xi_0=-1$. Additionally, $\tilde{\boldsymbol{\mathrm{q}}}_{i,j}(\xi_p)$ can be obtained by interpolating on the close-form continuous-time query function. Similar approach is adopted for solving $\widehat{\boldsymbol{\mathrm{v}}}_i(t_j)$ in Eq. (\ref{eq:vv}).
The detailed implementation is deferred to Appendix \ref{sec:complex}.

We consider three criteria to analyze the complexity of different models, i.e., per-layer complexity, minimum number of sequential operations, and maximum path lengths~\cite{vaswani2017attention}. 
Complexity per layer refers to the computational resources required to process within a single layer of a neural network. 
Sequential operation refers to the execution of operations that are processed iteratively or in a sequential manner. 
Maximum path length measures the longest distance information must traverse to establish relationships, reflecting the model's ability to learn long-range dependencies.

As noted in Table \ref{tab:complex}, 
vanilla Transformer is an efficient model for sequence learning with $O(1)$ sequential operations and $O(1)$ path length because of the parallel attention mechanism, whereas the recurrent nature of RNN and Neural ODE rely on autoregressive property and suffer from a large number of sequential operations.
Utilizing Eq. (\ref{eq:app}) and leveraging the parallelism inherent in the Transformer architecture, our model, ContiFormer, achieves $O(S)$  sequential operations and $O(1)$ path length, while also enjoying the advantage of capturing complex continuous-time dynamic systems. We have $S \ll T$ and we generally set $S < N$ in our experiments.

\subsection{Representation Power of ContiFormer}

In the previous subsections, we introduce a novel framework for modeling dynamic systems. Then a natural question is: \textit{How powerful is ContiFormer?} 
We observe that by choosing proper weights, ContiFormer can be an extension of vanilla Transformer~\cite{vaswani2017attention}. 
Further, many Transformer variants tailored for irregular time series, including time embedding methods~\cite{xu2019self, shukla2021multi} and kernelized attention methods~\cite{ma2022non, li2020time}, can be special cases of our model. 
We provide an overview of the main theorem below and defer the proof to Appendix ~\ref{sec:proof}.

\begin{theorem}[Universal Attention Approximation Theorem]
Given query ($Q$) and key ($K$) matrices, such that $\|Q_i\|_2 < \infty, \|Q_i\|_0 = d$ for $i \in [1, ..., N]$.
For certain attention matrix, i.e., $\operatorname{Attn}(Q, K) \in \mathbb{R}^{N \times N}$ (see Appendix \ref{sec:variants} for more information), there always exists a family of continuously differentiable vector functions $\boldsymbol{\mathrm{k}}_1(\cdot), \boldsymbol{\mathrm{k}}_2(\cdot), \ldots, \boldsymbol{\mathrm{k}}_N(\cdot)$, such that the discrete definition of the continuous-time attention formulation, i.e., $[\boldsymbol{\mathrm{\alpha}}_1(\cdot), \boldsymbol{\mathrm{\alpha}}_2(\cdot), ..., \boldsymbol{\mathrm{\alpha}}_N(\cdot)]$ in Eq. (\ref{eq:ii}), given by
\begin{equation}
\operatorname{\widetilde{Attn}}(Q, K)=\left[\begin{array}{cccc}
\boldsymbol{\mathrm{\alpha}}_1(t_1) & \boldsymbol{\mathrm{\alpha}}_2(t_1) & \cdots & \boldsymbol{\mathrm{\alpha}}_N(t_1) \\
\boldsymbol{\mathrm{\alpha}}_1(t_2) & \boldsymbol{\mathrm{\alpha}}_2(t_2) & \cdots & \boldsymbol{\mathrm{\alpha}}_N(t_2) \\
\vdots & \vdots & \ddots & \vdots \\
\boldsymbol{\mathrm{\alpha}}_1(t_N) & \boldsymbol{\mathrm{\alpha}}_2(t_N) & \cdots & \boldsymbol{\mathrm{\alpha}}_N(t_N)
\end{array}\right] ~,
\label{eq:proof}
\end{equation}
satisfies that $\operatorname{\widetilde{Attn}}(Q, K) = \operatorname{Attn}(Q, K)$.
\end{theorem}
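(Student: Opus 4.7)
Plan: The strategy is to exploit the fact that the constraints on the functions $\boldsymbol{\mathrm{k}}_i(\cdot)$ decouple across $i$, so I would fix $i$ and construct $\boldsymbol{\mathrm{k}}_i$ separately. The query function $\boldsymbol{\mathrm{q}}(\cdot)$ is already pinned down by the interpolation through $(t_j, Q_j)_{j=1}^N$ and is fixed throughout. For each $i$, the task is then: find a continuously differentiable $\boldsymbol{\mathrm{k}}_i \colon [t_1, t_N] \to \mathbb{R}^d$ with $\boldsymbol{\mathrm{k}}_i(t_i) = K_i$ such that
\[
\int_{t_i}^{t_j} \boldsymbol{\mathrm{q}}(\tau) \cdot \boldsymbol{\mathrm{k}}_i(\tau)^\top \,\text{d}\tau \;=\; (t_j - t_i)\, A_{j,i}, \quad j \neq i,
\]
where $A := \operatorname{Attn}(Q,K)$. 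The diagonal case $j = i$ is automatically forced by Eq.~(\ref{eq:ii}) to equal $Q_i K_i^\top$; this is the only consistency restriction the target $\operatorname{Attn}$ must satisfy (and, as the reference to Appendix~\ref{sec:variants} suggests, matches the forms of the existing variants).

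The first step would be to collapse the vector problem to a scalar one. Define $g_i(\tau) := \boldsymbol{\mathrm{q}}(\tau)\cdot\boldsymbol{\mathrm{k}}_i(\tau)^\top$; the integral constraints become $\int_{t_i}^{t_j} g_i(\tau)\,\text{d}\tau = (t_j - t_i) A_{j,i}$, together with the boundary value $g_i(t_i) = Q_i K_i^\top$. Equivalently, its antiderivative $G_i(t) := \int_{t_i}^{t} g_i(\tau)\,\text{d}\tau$ must interpolate the prescribed values $G_i(t_j) = (t_j - t_i) A_{j,i}$ with $G_i(t_i) = 0$ and $G_i'(t_i) = Q_i K_i^\top$. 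Since this is a finite set of point and derivative constraints, a natural cubic (or quintic) spline yields a $C^2$ function $G_i$, whence $g_i := G_i'$ is $C^1$ and meets every requirement. The second step recovers a vector-valued $\boldsymbol{\mathrm{k}}_i$: choose a fixed direction $e \in \mathbb{R}^d$ and look for $\boldsymbol{\mathrm{k}}_i(\tau) = K_i + \phi_i(\tau)\, e$ with $\phi_i(t_i)=0$. Then $\boldsymbol{\mathrm{q}}(\tau)\boldsymbol{\mathrm{k}}_i(\tau)^\top = \boldsymbol{\mathrm{q}}(\tau)K_i^\top + \phi_i(\tau)\,\boldsymbol{\mathrm{q}}(\tau)e^\top$, and solving gives $\phi_i(\tau) = \bigl(g_i(\tau) - \boldsymbol{\mathrm{q}}(\tau) K_i^\top\bigr)\big/\bigl(\boldsymbol{\mathrm{q}}(\tau)e^\top\bigr)$, which vanishes at $\tau = t_i$ by construction.

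The main obstacle is ensuring the denominator $\boldsymbol{\mathrm{q}}(\tau) e^\top$ stays bounded away from zero so that $\phi_i$ is well-defined and $C^1$ on the whole interval. The hypothesis $\|Q_i\|_0 = d$ secures this only at the knots: with $e = \operatorname{sign}(Q_i)$ one has $Q_i e^\top = \|Q_i\|_1 > 0$, but a cubic spline interpolant can in principle overshoot and create zeros between knots. I would handle this either by working locally (on each sub-interval $[t_k, t_{k+1}]$, pick a direction $e^{(k)}$ for which $\boldsymbol{\mathrm{q}}\cdot e^{(k)\top}$ is bounded away from zero, construct $\boldsymbol{\mathrm{k}}_i$ there, and glue with $C^1$ matching at the knots by absorbing the slack into the free parameters of the scalar spline), or by first replacing the query interpolant with a mollified/monotone-preserving version so that it stays in the interior of a half-space containing all $Q_i$. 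Once this technicality is resolved, stitching together the constructions across $i$ produces the required family $\boldsymbol{\mathrm{k}}_1, \ldots, \boldsymbol{\mathrm{k}}_N$ that realizes $\widetilde{\operatorname{Attn}}(Q,K) = \operatorname{Attn}(Q,K)$ entry-by-entry. Specializing the construction to the particular kernel forms used by the variants cited (time embeddings, kernelized attention, etc.) then recovers each as a concrete instance.
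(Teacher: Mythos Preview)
Your proposal is correct and follows the same three-step skeleton as the paper: decouple by column index $i$, reduce to a scalar interpolation problem, then lift back to a vector-valued $\boldsymbol{\mathrm{k}}_i$. The specific realizations differ. Where you interpolate the antiderivative $G_i(t)=\int_{t_i}^{t} g_i$ and set $g_i:=G_i'$, the paper instead interpolates the averaged quantity $h_i(t)=\boldsymbol{\mathrm{\alpha}}_i(t)$ directly by a $C^2$ cubic spline through the targets $A_{j,i}$, and then differentiates the identity $\boldsymbol{\mathrm{\alpha}}_i(t)(t-t_i)=h_i(t)(t-t_i)$ to extract $g_i(t)=h_i'(t)(t-t_i)+h_i(t)$ (their Lemma~2). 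For the vector lift, you perturb along a single direction $e$ and divide by $\boldsymbol{\mathrm{q}}(\tau)e^\top$; the paper's Lemma~1 instead uses a full componentwise construction, writing $\boldsymbol{\mathrm{k}}_i(\tau)=\frac{\boldsymbol{\mathrm{q}}(\tau)}{\|\boldsymbol{\mathrm{q}}(\tau)\|_2^2}\,g_i(\tau)+k(\tau)$ with a correction term whose $m$-th coordinate is $c_m/\boldsymbol{\mathrm{q}}(\tau)[m]$. You are actually more careful than the paper about the non-vanishing issue: the paper's lemmas simply \emph{assume} $\|\boldsymbol{\mathrm{q}}(x)\|_0=d$ for every $x$, which does not follow from the theorem's hypothesis $\|Q_i\|_0=d$ at the knots, whereas you flag this gap and sketch local patching or a modified interpolant as a remedy.
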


\section{Experiments}

In this section, we evaluate ContiFormer on three types of tasks 
on irregular time series data, 
i.e., interpolation and extrapolation, classification, event prediction, and forecasting. Additionally, we conduct experiments on pendulum regression task~\cite{schirmer2022modeling, gu2021efficiently}, the results are listed in Appendix \ref{sec:pendulum}.

\paragraph{Implementation Details}

By default, we use the natural cubic spline to construct the continuous-time query function. The vector field in ODE is defined as $f(t, \textbf{x})=\text{Actfn}(\text{LN}(\text{Linear}^{d, d}(\text{Linear}^{d, d}(\textbf{x}) + \text{Linear}^{1, d}(t))))$, where $\text{Actfn}(\cdot)$ is either \textit{tanh} or \textit{sigmoid} activation function, $\text{Linear}^{a,b}(\cdot): \mathbb{R}^a \rightarrow \mathbb{R}^b$ is a linear transformation from dimension $a$ to dimension $b$, $\text{LN}$ denotes the layer normalization. We adopt the Gauss-Legendre Quadrature approximation to implement Eq. (\ref{eq:app}). In the experiment, we choose the Runge-Kutta-4~\cite{runge1895numerische} (RK4) algorithm to solve the ODE with a fixed step of fourth order and a step size of $0.1$. Thus, the number of forward passes to integrate from $-1$ to $1$, i.e., $S$ in Table \ref{tab:complex}, is $80$.  All the experiments were carried out on a single 16GB NVIDIA Tesla V100 GPU. 

\subsection{Modeling Continuous-Time Function}

\begin{figure*}[t]
    \centering
    \includegraphics[width=\textwidth]{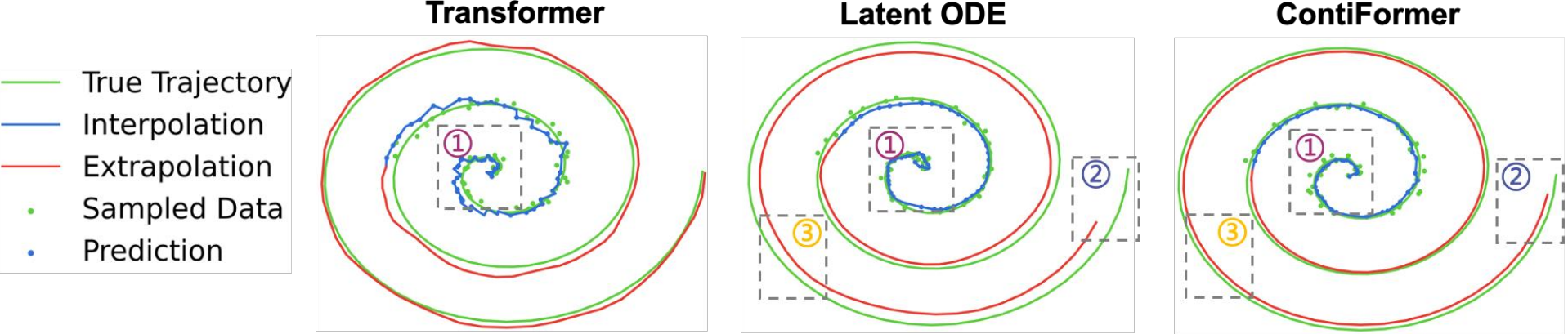}
    \caption{Interpolation and extrapolation of spirals with irregularly-samples time points by Transformer, Neural ODE, and our model. }
    \label{fig:interpolate}
    \vspace{-10pt}
\end{figure*}

\begin{wraptable}{r}{0.52\textwidth}
\centering
\vspace{-10pt}
\caption{Interpolation and extrapolation results of different models on 2-dimensional spirals. $\downarrow$ indicates the lower the better.}
\resizebox{0.52\textwidth}{!}{%
\begin{tabular}{c|ccc}
\toprule
     Metric & Transformer~\cite{vaswani2017attention} & Latent ODE~\cite{chen2018neural} & \textbf{ContiFormer} \\
\midrule
\midrule
\multicolumn{4}{c}{Task = \textit{Interpolation} ($\times 10^{-2}$)} \\
\midrule
RMSE ($\downarrow$) & 1.37 \small{$\pm$ 0.14} & 2.09 \small{$\pm$ 0.22} & \textbf{0.49 \small{$\pm$ 0.06}} \\
MAE ($\downarrow$) & 1.42 \small{$\pm$ 0.13} & 1.95 \small{$\pm$ 0.25} & \textbf{0.52 \small{$\pm$ 0.06}} \\
\midrule
\midrule
\multicolumn{4}{c}{Task = \textit{Extrapolation} ($\times 10^{-2}$)} \\
\midrule
RMSE ($\downarrow$) & 1.36 \small{$\pm$ 0.10} & 1.59 \small{$\pm$ 0.05} & \textbf{0.64 \small{$\pm$ 0.09}} \\
MAE ($\downarrow$) & 1.49 \small{$\pm$ 0.12} & 1.52 \small{$\pm$ 0.05} & \textbf{0.65 \small{$\pm$ 0.08}} \\
\bottomrule
\end{tabular}}
\label{tab:interp}
\vspace{-10pt}
\end{wraptable}

The first experiment studies the effectiveness of different models from different categories on continuous-time function approximation.
We generated a dataset of 300 2-dimensional 
spirals, sampled at 150 equally-spaced time points. 
To generate irregular time points, we randomly sample 50 points from each sub-sampled trajectory without replacement. 
We visualize the interpolation and extrapolation results of the irregular time series for different models, namely Latent ODE (w/ RNN encoder)~\cite{chen2018neural}, Transformer~\cite{vaswani2017attention} and our proposed ContiFormer. 
Besides, we list the interpolation and extrapolation results in Table ~\ref{tab:interp} using rooted mean squared error (RMSE) and mean absolute error (MAE) metrics. 
More visualization results and experimental settings can be found in Appendix \ref{sec:interpolate}.
As shown in Figure ~\ref{fig:interpolate}, both Latent ODE and ContiFormer can output a smooth and continuous function approximation, while Transformer fails to interpolate it given the noisy observations (\textcolor[rgb]{0.648, 0.203, 0.476}{\textcircled{\small\textbf{1}}}). 
From Table~\ref{tab:interp}, we can observe that our model outperforms both Transformer and Latent ODE by a large margin. 
The improvement lies in two aspects. 
First, compared to Transformer, our ContiFormer can produce an almost\footnote{The output of ContiFormer can be considered ``continuous'' only if we overlook the approximation and numerical errors in the ODESolver. Additionally, traditional Transformers are commonly trained with dropout to prevent overfitting, which can result in discontinuous outputs. To mitigate this issue, a straightforward approach is to set the dropout rate to 0, thereby producing an ``almost'' continuous output for ContiFormer.} continuous-time output, making it more suitable for modeling continuous-time functions. 
Second, compared to Latent ODE, our ContiFormer excels at retaining long-term information (\textcolor[rgb]{0.296, 0.339, 0.629}{\textcircled{\small\textbf{2}}}), leading to lower prediction error in extrapolating unseen time series. 
Conversely, Latent ODE is prone to cumulative errors (\textcolor[rgb]{1, 0.7617, 0} {\textcircled{\small\textbf{3}}}), resulting in poorer performance in extrapolation tasks. 
Therefore, we conclude that ContiFormer is a more suitable approach for modeling continuous-time functions on irregular time series.

\subsection{Irregularly-Sampled Time Series Classification} \label{istsc}

The second experiment evaluates our model on real-world irregular time series data.
To this end, we first examine the effectiveness of different models for irregular time series classification. 
We select 20 datasets from UEA Time Series Classification Archive~\cite{bagnall2018uea} with diverse characteristics in terms of the number, dimensionality, and length of time series samples. 
More detailed information and experimental results can be found in Appendix ~\ref{sec:classify}. 
To generate irregular time series data, we follow the setting of ~\cite{kidger2020neural} to randomly drop either 30\%, 50\% or 70\% observations.

We compare the performance of ContiFormer with RNN-based methods (GRU-$\Delta$t~\cite{kidger2020neural}, GRU-D~\cite{che2018recurrent}), Neural ODE-based methods (ODE-RNN~\cite{rubanova2019latent}, CADN~\cite{chien2021continuous}, Neural CDE~\cite{kidger2020neural}), SSM-based models (S5~\cite{smith2022simplified}), and attention-based methods (TST~\cite{zerveas2021transformer}, mTAN~\cite{shukla2021multi}).

\begin{table*}[t]
\centering
\caption{Experimental results on irregular time series classification. Avg. ACC. stands for average accuracy over 20 datasets and Avg. Rank stands for average ranking over 20 datasets. $\uparrow$ ($\downarrow$) indicates the higher (lower) the better.}
\resizebox{\textwidth}{!}{%
\begin{tabular}{c|cc|cc|cc}
\toprule
     Mask Ratio & \multicolumn{2}{c}{30\% Dropped} & \multicolumn{2}{c}{50\% Dropped} & \multicolumn{2}{c}{70\% Dropped} \\
    \midrule
     Metric & Avg. ACC. ($\uparrow$) & Avg. Rank ($\downarrow$) & Avg. ACC. ($\uparrow$) & Avg. Rank ($\downarrow$) &Avg. ACC. ($\uparrow$) & Avg. Rank ($\downarrow$)  \\
     \cmidrule(l){1-1}
    \cmidrule(l){2-3}
    \cmidrule(l){4-5}
    \cmidrule(l){6-7}
    GRU-D~\cite{kidger2020neural} & 0.7284 & 6 & 0.7117 & 5.8 & 0.6725 & 6.15 \\
    GRU-$\Delta$t~\cite{che2018recurrent} & 0.7298 & 5.75 & 0.7157 & 5.65 & 0.6795 & 5.7 \\
    ODE-RNN~\cite{rubanova2019latent} & 0.7304 & 5.45 & 0.7000 & 5.55 & 0.6594 & 6.4 \\
    Neural CDE~\cite{kidger2020neural} & 0.7142 & 6.85 & 0.6929 & 6.5 & 0.6753 & 6.3 \\
    mTAN~\cite{shukla2021multi} & 0.7381 & 5.7 & 0.7118 & 5.85 & 0.6955 & 5.4 \\
    CADN~\cite{chien2021continuous} & 0.7402 & 5.4 & 0.7211 & 5.55  & 0.7183 & 3.9\\
    S5~\cite{smith2022simplified} & 0.7854 & 4.4 & 0.7638 & 4.25 & 0.7401 & 4.45 \\
    TST~\cite{zerveas2021transformer} & 0.8089 & 2.75 & 0.7793 & 3.15 & 0.7297 & 4.2 \\
    \midrule
    \textbf{ContiFormer} & \textbf{0.8126} & \textbf{2.4} & \textbf{0.7997} & \textbf{1.9} & \textbf{0.7749} & \textbf{2.1} \\
\bottomrule
\end{tabular}}
\label{tab:classify}
\vspace{-10pt}
\end{table*}

\begin{wrapfigure}{r}{0.5\textwidth}
  \centering
  \includegraphics[width=0.5\textwidth]{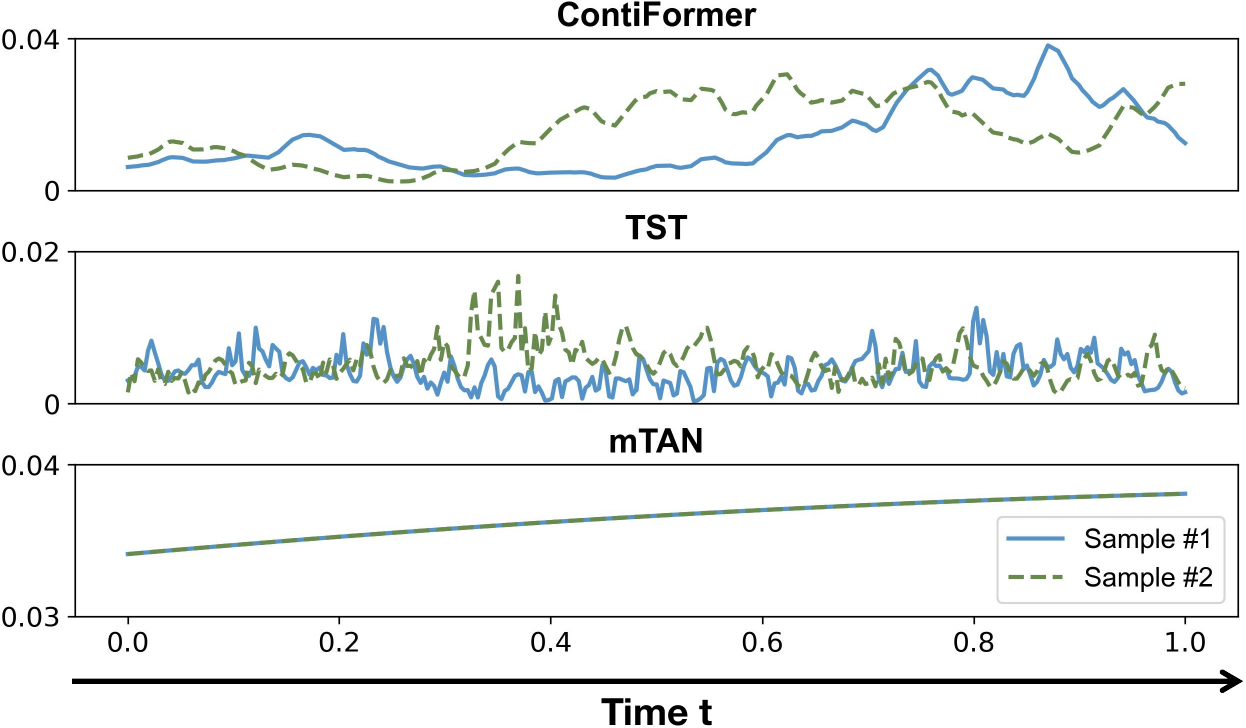}
  \caption{Visualization of attention scores on UWaveGestureLibrary dataset. Colors indicate the attention scores for different instances at time $t=0$. Observations at time $t=0$ are observed and normalize the time interval to $[0, 1]$.}
  \label{fig:attn}
  \vspace{-10pt}
\end{wrapfigure}

Table ~\ref{tab:classify} presents the average accuracy and rank number under different drop ratios. 
ContiFormer outperforms all the baselines on all three settings. The complete results are shown in Appendix ~\ref{sec:full}. 
Moreover, attention-based methods perform better than both RNN-based methods and ODE-based methods, underscoring the significance of effectively modeling the inter-correlation of the observations.
Additionally, we find that our method is more robust than other models. 
ContiFormer exhibits the smallest performance gap between 30\% and 50\%. 

Besides, we study the learned attention patterns with different models. 
Attention captures the impact of previous observations on the future state of a system by measuring the correlation between different observations at different time points. Consequently, it is expected that a model effectively capturing the underlying dynamics would exhibit continuous, smooth, and input-dependent intermediate outcomes. As illustrated in Figure~\ref{fig:attn}, 
ContiFormer and TST are both capable of capturing input-dependent patterns effectively, while mTAN struggles due to the fact that its attention mechanism depends solely on the time information.
In addition, ContiFormer excels in capturing complex and smooth patterns compared to TST, showcasing its ability to model time-evolving correlations effectively and potentially more robust to noisy data. 
Consequently, we believe that ContiFormer better captures the nature of irregular time series.

\subsection{Predicting Irregular Event Sequences} \label{piel}

\begin{table}[t]
    \centering
    \caption{Prediction result of compared models for event prediction. LL for log-likelihood and ACC for accuracy. $\uparrow$ ($\downarrow$) indicates the higher (lower) the better. (bold values indicate the best performance and $^+$ indicates outperforming the best baseline by at least 3 standard deviations.)}
    \label{tab:exp}
    \resizebox{\textwidth}{!}{%
    \begin{tabular}{cccccccc}
    \toprule
        Model & Metric & Synthetic & Neonate & Traffic & MIMIC & StackOverflow & BookOrder \\ 
    \midrule
        \multirow{3}{*}{HP~\cite{hawkes1971spectra}} & LL ($\uparrow$) & -3.084 ± .005 & -4.618 ± .005 & -1.482 ± .005 & -4.618 ± .005 & -5.794 ± .005 & -1.036 ± .000 \\ 
        ~ & Accuracy ($\uparrow$) & 0.756 ± .000 & -- & 0.570 ± .000 & 0.795 ± .000 & 0.441 ± .000 & 0.604 ± .000 \\ 
        ~ & RMSE ($\downarrow$) & 0.953 ± .000 & 10.957 ± .012 & 0.407 ± .000 & 1.021 ± .000 & 1.341 ± .000 & 3.781 ± .000 \\ 
    \midrule
        \multirow{3}{*}{RMTPP~\cite{du2016recurrent}} & LL ($\uparrow$) & -1.025 ± .030 & -2.817 ± .023 & -0.546 ± .012 & -1.184 ± .023 & -2.374 ± .001 & -0.952 ± .007 \\ 
        ~ & Accuracy ($\uparrow$) & 0.841 ± .000 & -- & 0.805 ± .002 & 0.823 ± .004 & 0.461 ± .000 & 0.624 ± .000 \\ 
        ~ & RMSE ($\downarrow$) & 0.369 ± .014 & 9.517 ± .023 & 0.337 ± .001 & 0.864 ± .017 & 0.955 ± .000 & 3.647 ± .003 \\ 
    \midrule
        \multirow{3}{*}{NeuralHP~\cite{mei2017neural}} & LL ($\uparrow$) & -1.371 ± .004 & -2.795 ± .012 & -0.643 ± .004 & -1.239 ± .027 & -2.608 ± .000 & -1.104 ± .005 \\
        ~ & Accuracy ($\uparrow$) & 0.841 ± .000 & -- & 0.759 ± .001 & 0.814 ± .001 & 0.450 ± .000 & 0.621 ± .000 \\ 
        ~ & RMSE ($\downarrow$) & 0.631 ± .002 & 9.614 ± .013 & 0.358 ± .001 & 0.846 ± .007 & 1.022 ± .000 & 3.734 ± .003 \\ 
    \midrule
        \multirow{3}{*}{SAHP~\cite{xu2019self}} & LL ($\uparrow$) & -0.619 ± .063 & -2.646 ± .057 & 0.372 ± .022 & \textbf{-1.110 ± .030} & -2.404 ± .002 & -0.304 ± .002 \\ 
        ~ & Accuracy ($\uparrow$) & 0.841 ± .000 & -- & 0.780 ± .001 & 0.830 ± .004 & 0.455 ± .000 & 0.622 ± .000 \\ 
        ~ & RMSE ($\downarrow$) & 0.521 ± .055 & 9.403 ± .060 & 0.337 ± .000 & 0.851 ± .006 & 0.963 ± .000 & 3.676 ± .011 \\ 
    \midrule
        \multirow{3}{*}{THP~\cite{zuo2020transformer}} & LL ($\uparrow$) & -0.589 ± .017 & -2.702 ± .045 & -0.569 ± .015 & -1.137 ± .038 & -2.354 ± .001 & -1.102 ± .052 \\ 
        ~ & Accuracy ($\uparrow$) & 0.841 ± .000 & -- & 0.818 ± .001 & 0.834 ± .005 & -0.468 ± .000 & 0.622 ± .004 \\ 
        ~ & RMSE ($\downarrow$) & 0.205 ± .006 & 9.471 ± .045 & 0.332 ± .000 & 0.843 ± .017 & 0.951 ± .000 & 3.688 ± .004 \\ 
    \midrule
        \multirow{3}{*}{NSTKA~\cite{ma2022non}} & LL ($\uparrow$) & -1.001 ± .025 & -2.747 ± .061 & -0.667 ± .027 & -1.188 ± .012 & -2.406 ± .003 & -1.098 ± .019 \\ 
        ~ & Accuracy ($\uparrow$) & 0.842 ± .000 & -- & 0.767 ± .001 & 0.833 ± .003 & 0.465 ± .000 & 0.621 ± .000 \\ 
        ~ & RMSE ($\downarrow$) & 0.379 ± .015 & 9.502 ± .049 & 0.346 ± .001 & 0.842 ± .008 & 0.956 ± .000 & 3.731 ± .007 \\ 
    \midrule
        \multirow{3}{*}{GRU-$\Delta$t~\cite{chen2018neural}} & LL ($\uparrow$) & -0.871 ± .050 & -2.736 ± .031 & -0.613 ± .062 & -1.164 ± .026 & -2.389 ± .002 & -0.915 ± .006 \\
        ~ & Accuracy ($\uparrow$) & 0.841 ± .000 & -- & 0.800 ± .004 & 0.832 ± .007 & 0.466 ± .000 & 0.627 ± .000 \\ 
        ~ & RMSE ($\downarrow$) & 0.249 ± .013 & 9.421 ± .050 & 0.335 ± .001 & 0.850 ± .010 & 0.950 ± .000 & 3.666 ± .016 \\
    \midrule
        \multirow{3}{*}{ODE-RNN~\cite{rubanova2019latent}} & LL ($\uparrow$) & -1.032 ± .102 & -2.732 ± .080 & -0.491 ± .011 & -1.183 ± .028 & -2.395 ± .001 & -0.988 ± .006 \\ 
        ~ & Accuracy ($\uparrow$) & 0.841 ± .000 & -- & 0.812 ± .000 & 0.827 ± .006 & 0.467 ± .000 & 0.624 ± .000 \\ 
        ~ & RMSE ($\downarrow$) & 0.342 ± .030 & 9.289 ± .048 & 0.334 ± .000 & 0.865 ± .021 & 0.952 ± .000 & \textbf{3.605 ± .004} \\ 
    \midrule
        \multirow{3}{*}{mTAN~\cite{shukla2021multi}} & LL ($\uparrow$) & -0.920 ± .036 & -2.722 ± .026 & -0.548 ± .023 & -1.149 ± .029 & -2.391 ± .002 & -0.980 ± .004 \\
        ~ & Accuracy ($\uparrow$) & 0.842 ± .000 & -- & 0.811 ± .002 & 0.832 ± .009 & 0.466 ± .000 & 0.620 ± .000 \\ 
        ~ & RMSE ($\downarrow$) & 0.286 ± .008 & 9.363 ± .042 & 0.334 ± .001 & 0.848 ± .006 & 0.950 ± .000 & 3.680 ± .015 \\
    \midrule
        \multirow{3}{*}{ContiFormer} & LL ($\uparrow$) & \textbf{-0.535 ± .028$^+$} & \textbf{-2.550 ± .026} & \textbf{0.635 ± .019$^+$} & -1.135 ± .023 & \textbf{-2.332 ± .001$^+$} & \textbf{-0.270 ± .010$^+$} \\ 
        ~ & Accuracy ($\uparrow$) & \textbf{0.842 ± .000} & -- & \textbf{0.822 ± .001$^+$} & \textbf{0.836 ± .006} & \textbf{0.473 ± .000$^+$} & \textbf{0.628 ± .001$^+$} \\ 
        ~ & RMSE ($\downarrow$) & \textbf{0.192 ± .005} & \textbf{9.233 ± .033} & \textbf{0.328 ± .001$^+$} & \textbf{0.837 ± .007} & \textbf{0.948 ± .000$^+$} & 3.614 ± .020 \\ 
    \bottomrule
    \end{tabular}}
\end{table}

Next, we evaluate different models for predicting the type and occurrence time of the next event with irregular event sequences~\cite{shchur2019intensity, shchur2020fast, chen2021learning, gupta2022proactive}, a.k.a, marked temporal point process (MTPP) task.
To this end, we use one synthetic dataset and five real-world datasets for evaluation, namely Synthetic, Neonate~\cite{stevenson2019dataset}, Traffic~\cite{lai2018modeling},
MIMIC~\cite{du2016recurrent}, BookOrder~\cite{du2016recurrent} and StackOverflow~\cite{leskovec2014snap}. 
We use the 4-fold cross-validation scheme for Synthetic, Neonate, and Traffic datasets following~\cite{fang2023learning}, and the 5-fold cross-validation scheme for the other three datasets following~\cite{mei2017neural, zuo2020transformer}.
We repeat the experiment 3 times and report the mean and standard deviations in Table ~\ref{tab:exp}. More information about the task description, dataset information, and training algorithm can be found in Appendix \ref{sec:mtpp}.

We first compare ContiFormer with 3 models on irregular time series, i.e., GRU-$\Delta$t, ODE-RNN, and mTAN, which respectively belong to RNN-based, ODE-based, and attention-based approaches. 
Additionally, we compare ContiFormer with baselines specifically designed for the MTPP task. These models include parametric method (HP~\cite{hawkes1971spectra}), RNN-based methods (RMTPP ~\cite{du2016recurrent}, NeuralHP~\cite{mei2017neural}), and attention-based methods (SAHP~\cite{zhang2020self}, THP~\cite{zuo2020transformer}, NSTKA~\cite{ma2022non}). 

The experimental results, as presented in Table ~\ref{tab:exp}, demonstrate the overall statistical superiority of ContiFormer to the compared baselines. Notably, the utilization of specific kernel functions, e.g., NSTKA and mTAN, results in subpar performance on certain datasets, highlighting their limitations in modeling complex data patterns in real-world scenarios.
Furthermore, parametric methods like HP, perform poorly on most datasets, indicating their drawbacks in capturing real-life complexities. 

\subsection{Regular Time Series Forecasting}

Lastly, we evaluate ContiFormer's performance in the context of regular time series modeling. To this end, we conducted an extensive experiment following the experimental settings from~\cite{wu2021autoformer}. 
We employed the ETT, Exchange, Weather, and ILI datasets for time series forecasting tasks. 
We compare ContiFormer with the recent state-of-the-art model (TimesNet~\cite{wu2022timesnet}), Transformer-based models (FEDformer\cite{zhou2022fedformer}, Autoformer\cite{wu2021autoformer}) and MLP-based model (DLinear~\cite{zeng2023transformers}). 

\begin{table}[t]
    \centering
    \caption{Results of regular time series forecasting. Input length for ILI is 36 and 96 for the others. Prediction lengths for ETTm2 and ETTh2 are $\{24, 48, 168, 336\}$, $\{96, 192, 336, 540\}$ for Exchange and Weather, and $\{24, 36, 48, 60\}$ for ILI. Results are averaged over the four prediction lengths. Bold/underlined values indicate the best/second-best. MSE stands for mean squared error.} 
 \resizebox{\columnwidth}{!}{%
    \begin{tabular}{c|cccccccccccc}
    \toprule
         Model & \multicolumn{2}{c}{ContiFormer} & \multicolumn{2}{c}{TimesNet\cite{wu2022timesnet}} & \multicolumn{2}{c}{DLinear\cite{zeng2023transformers}} & \multicolumn{2}{c}{FEDformer\cite{zhou2022fedformer}} & \multicolumn{2}{c}{Autoformer\cite{wu2021autoformer}} & \multicolumn{2}{c}{Transformer\cite{vaswani2017attention}} \\ 
    \cmidrule(lr){2-3}
    \cmidrule(lr){4-5}
    \cmidrule(lr){6-7}
    \cmidrule(lr){8-9}
    \cmidrule(lr){10-11}
    \cmidrule(lr){12-13}
         Metric & MSE  & MAE & MSE & MAE & MSE & MAE & MSE & MAE & MSE & MAE & MSE & MAE  \\
    \midrule
        Exchange & 0.339  & 0.399  & \underline{0.320}  & \underline{0.396}  & \textbf{0.277}  & \textbf{0.377}  & 0.392  & 0.446  & 0.440  & 0.477  & 1.325  & 0.949  \\ 
    \midrule
        ETTm2 &  0.226  & \underline{0.300}  & \textbf{0.214}  & \textbf{0.276}  & \underline{0.220}  & 0.305  & 0.226  & \underline{0.300}  & 0.231  & 0.307  & 0.689  & 0.633 \\ 
    \midrule    
        ETTh2 & 0.356  & 0.395  & \textbf{0.321}  & \textbf{0.364}  & 0.358  & 0.396  & \underline{0.351}  & \underline{0.393}  & 0.352  & 0.394  & 2.081  & 1.179  \\ 
    \midrule
        ILI & \textbf{2.632}  & \textbf{1.042}  & \underline{2.874}  & \underline{1.066}  & 4.188  & 1.493  & 4.467  & 1.547  & 4.046  & 1.446  & 5.130  & 1.507 \\ 
    \midrule
        Weather & \textbf{0.257}  & \underline{0.290}  & \underline{0.259}  & \textbf{0.285}  & 0.260  & 0.311  & 0.315  & 0.360  & 0.318  & 0.359  & 0.672  & 0.579 \\ 
    \bottomrule
    \end{tabular}}
    \vspace{-10pt}
    \label{tab:regular2}
\end{table}

Table ~\ref{tab:regular2} summarizes the experimental results. Contiformer achieves the best performance on ILI dataset. Specifically, it gives \textbf{10\%} MSE reduction (2.874 $\rightarrow$ 2.632). Also, Contiformer outperforms Autoformer and FEDformer on Exchange, ETTm2, and Weather datasets. Therefore, it demonstrates that Contiformer performs competitively with the state-of-the-art model on regular time series forecasting. For a more comprehensive exploration of the results, please refer to Appendix \ref{sec:regular}.

\section{Efficiency Analysis}

\begin{wraptable}{r}{0.5\textwidth}
\centering
\vspace{-10pt}
\caption{Actual time cost during training for different models on RacketSports dataset (0\% dropped) from UEA dataset. We adopt RK4 solver for ODE-RNN, Neural CDE, and ContiFormer. Step size refers to the amount of time increment for the RK4 solver. $\uparrow$ ($\downarrow$) indicates the higher (lower) the better.}
\resizebox{0.52\textwidth}{!}{%
\begin{tabular}{cccc}
\toprule
     Model & Step Size & Time Cost ($\downarrow$) & Accuracy ($\uparrow$) \\
\midrule
TST~\cite{zerveas2021transformer} & -- & $0.14 \times$ & 0.826 \\
S5~\cite{smith2022simplified} & -- & $1 \times$ & 0.772 \\
\midrule
\multirow{2}{*}{ODE-RNN~\cite{rubanova2019latent}} & 0.1 & $2.47 \times$ &  0.827 \\
~ & 0.01 & $14.64 \times$ & 0.796 \\
\midrule
\multirow{2}{*}{Neural CDE~\cite{park2022neural}} & 0.1 & $3.48 \times$ & 0.743 \\
~ & 0.01 & $25.03 \times$ & 0.748 \\
\midrule
\multirow{2}{*}{ContiFormer} & 0.1 & $1.88 \times$ & \textbf{0.836} \\
& 0.01 & $5.87 \times$ & \textbf{0.836} \\
\bottomrule
\end{tabular}}
\label{tab:timecost}
\vspace{-5pt}
\end{wraptable}

Employing continuous-time modeling in ContiFormer often results in substantial time and GPU memory overhead. Hence, we conduct a comparative analysis of ContiFormer's actual time costs against those of Transformer-based models (TST), ODE-based models (ODE-
RNN, Neural CDE), and SSM-based models (S5). 

Table \ref{tab:timecost} shows that both TST and S5 are effective models. Additionally, ContiFormer leverages a parallel mechanism for faster inference, resulting in faster inference speed as compared to ODE-RNN and Neural CDE. Furthermore, ContiFormer demonstrates competitive performance even with a larger step size, as detailed in Appendix \ref{sec:tolerance}. Therefore, ContiFormer's robustness to step size allows it to achieve a superior balance between time cost and performance. Please refer to Appendix \ref{sec:cost} for more details.

\section{Conclusion}

We propose ContiFormer, a novel continuous-time Transformer for modeling dynamic systems of irregular time series data. Compared with the existing methods, ContiFormer is more flexible since it abandons explicit closed-form assumptions of the underlying dynamics,
leading to
better generalization in various circumstances. 
Besides, similar to Neural ODE, ContiFormer can produce a continuous and smooth outcome. 
Taking advantage of the self-attention mechanism, ContiFormer can also capture long-term dependency in irregular observation sequences, 
achieving superior performance
on a variety of irregular time series modeling tasks.

\section{Acknowledgement}
This research is supported in part by the National Natural Science Foundation of China under grant 62172107.

\bibliographystyle{plain}
\bibliography{reference}

\begin{thebibliography}{10}

\bibitem{ansari2023neural}
Abdul~Fatir Ansari, Alvin Heng, Andre Lim, and Harold Soh.
\newblock Neural continuous-discrete state space models for irregularly-sampled
  time series.
\newblock {\em arXiv preprint arXiv:2301.11308}, 2023.

\bibitem{arribas2018derivatives}
Imanol~Perez Arribas.
\newblock Derivatives pricing using signature payoffs.
\newblock {\em arXiv preprint arXiv:1809.09466}, 2018.

\bibitem{bagnall2018uea}
Anthony Bagnall, Hoang~Anh Dau, Jason Lines, Michael Flynn, James Large, Aaron
  Bostrom, Paul Southam, and Eamonn Keogh.
\newblock The uea multivariate time series classification archive, 2018.
\newblock {\em arXiv preprint arXiv:1811.00075}, 2018.

\bibitem{bauer2016arrow}
Stefan Bauer, Bernhard Sch{\"o}lkopf, and Jonas Peters.
\newblock The arrow of time in multivariate time series.
\newblock In {\em International Conference on Machine Learning}, pages
  2043--2051. PMLR, 2016.

\bibitem{boutaib2013dimension}
Youness Boutaib, Lajos~Gergely Gyurk{\'o}, Terry Lyons, and Danyu Yang.
\newblock Dimension-free euler estimates of rough differential equations.
\newblock {\em arXiv preprint arXiv:1307.4708}, 2013.

\bibitem{che2018recurrent}
Zhengping Che, Sanjay Purushotham, Kyunghyun Cho, David Sontag, and Yan Liu.
\newblock Recurrent neural networks for multivariate time series with missing
  values.
\newblock {\em Scientific reports}, 8(1):6085, 2018.

\bibitem{chen2021learning}
Chao Chen, Haoyu Geng, Nianzu Yang, Junchi Yan, Daiyue Xue, Jianping Yu, and
  Xiaokang Yang.
\newblock Learning self-modulating attention in continuous time space with
  applications to sequential recommendation.
\newblock In {\em International Conference on Machine Learning}, pages
  1606--1616. PMLR, 2021.

\bibitem{chen2021neuralstpp}
Ricky T.~Q. Chen, Brandon Amos, and Maximilian Nickel.
\newblock Neural spatio-temporal point processes.
\newblock In {\em International Conference on Learning Representations}, 2021.

\bibitem{chen2018neural}
Ricky~TQ Chen, Yulia Rubanova, Jesse Bettencourt, and David~K Duvenaud.
\newblock Neural ordinary differential equations.
\newblock {\em Advances in neural information processing systems}, 31, 2018.

\bibitem{chien2021continuous}
Jen-Tzung Chien and Yi-Hsiang Chen.
\newblock Continuous-time self-attention in neural differential equation.
\newblock In {\em ICASSP 2021-2021 IEEE International Conference on Acoustics,
  Speech and Signal Processing (ICASSP)}, pages 3290--3294. IEEE, 2021.

\bibitem{chien2022learning}
Jen-Tzung Chien and Yi-Hsiang Chen.
\newblock Learning continuous-time dynamics with attention.
\newblock {\em IEEE Transactions on Pattern Analysis and Machine Intelligence},
  2022.

\bibitem{cho2014properties}
Kyunghyun Cho, Bart Van~Merri{\"e}nboer, Dzmitry Bahdanau, and Yoshua Bengio.
\newblock On the properties of neural machine translation: Encoder-decoder
  approaches.
\newblock {\em arXiv preprint arXiv:1409.1259}, 2014.

\bibitem{choi2018mime}
Edward Choi, Cao Xiao, Walter Stewart, and Jimeng Sun.
\newblock Mime: Multilevel medical embedding of electronic health records for
  predictive healthcare.
\newblock {\em Advances in neural information processing systems}, 31, 2018.

\bibitem{chung2014empirical}
Junyoung Chung, Caglar Gulcehre, KyungHyun Cho, and Yoshua Bengio.
\newblock Empirical evaluation of gated recurrent neural networks on sequence
  modeling.
\newblock {\em arXiv preprint arXiv:1412.3555}, 2014.

\bibitem{devlin2018bert}
Jacob Devlin, Ming-Wei Chang, Kenton Lee, and Kristina Toutanova.
\newblock Bert: Pre-training of deep bidirectional transformers for language
  understanding.
\newblock {\em arXiv preprint arXiv:1810.04805}, 2018.

\bibitem{du2016recurrent}
Nan Du, Hanjun Dai, Rakshit Trivedi, Utkarsh Upadhyay, Manuel Gomez-Rodriguez,
  and Le~Song.
\newblock Recurrent marked temporal point processes: Embedding event history to
  vector.
\newblock In {\em Proceedings of the 22nd ACM SIGKDD international conference
  on knowledge discovery and data mining}, pages 1555--1564, 2016.

\bibitem{fang2023learning}
Yuchen Fang, Kan Ren, Caihua Shan, Yifei Shen, You Li, Weinan Zhang, Yong Yu,
  and Dongsheng Li.
\newblock Learning decomposed spatial relations for multi-variate time-series
  modeling.
\newblock In {\em Proceedings of the AAAI Conference on Artificial
  Intelligence}, 2023.

\bibitem{gu2022parameterization}
Albert Gu, Karan Goel, Ankit Gupta, and Christopher R{\'e}.
\newblock On the parameterization and initialization of diagonal state space
  models.
\newblock {\em Advances in Neural Information Processing Systems},
  35:35971--35983, 2022.

\bibitem{gu2021efficiently}
Albert Gu, Karan Goel, and Christopher R{\'e}.
\newblock Efficiently modeling long sequences with structured state spaces.
\newblock {\em arXiv preprint arXiv:2111.00396}, 2021.

\bibitem{guo2022cmt}
Jianyuan Guo, Kai Han, Han Wu, Yehui Tang, Xinghao Chen, Yunhe Wang, and Chang
  Xu.
\newblock Cmt: Convolutional neural networks meet vision transformers.
\newblock In {\em Proceedings of the IEEE/CVF Conference on Computer Vision and
  Pattern Recognition}, pages 12175--12185, 2022.

\bibitem{gupta2022proactive}
Vinayak Gupta and Srikanta Bedathur.
\newblock Proactive: Self-attentive temporal point process flows for activity
  sequences.
\newblock In {\em Proceedings of the 28th ACM SIGKDD Conference on Knowledge
  Discovery and Data Mining}, pages 496--504, 2022.

\bibitem{hawkes1971spectra}
Alan~G Hawkes.
\newblock Spectra of some self-exciting and mutually exciting point processes.
\newblock {\em Biometrika}, 58(1):83--90, 1971.

\bibitem{he2016deep}
Kaiming He, Xiangyu Zhang, Shaoqing Ren, and Jian Sun.
\newblock Deep residual learning for image recognition.
\newblock In {\em Proceedings of the IEEE conference on computer vision and
  pattern recognition}, pages 770--778, 2016.

\bibitem{hochreiter1997long}
Sepp Hochreiter and J{\"u}rgen Schmidhuber.
\newblock Long short-term memory.
\newblock {\em Neural computation}, 9(8):1735--1780, 1997.

\bibitem{hopfield1982neural}
John~J Hopfield.
\newblock Neural networks and physical systems with emergent collective
  computational abilities.
\newblock {\em Proceedings of the national academy of sciences},
  79(8):2554--2558, 1982.

\bibitem{jhin2023attentive}
Sheo~Yon Jhin, Heejoo Shin, Sujie Kim, Seoyoung Hong, Minju Jo, Solhee Park,
  Noseong Park, Seungbeom Lee, Hwiyoung Maeng, and Seungmin Jeon.
\newblock Attentive neural controlled differential equations for time-series
  classification and forecasting.
\newblock {\em Knowledge and Information Systems}, pages 1--31, 2023.

\bibitem{jia2019neural}
Junteng Jia and Austin~R Benson.
\newblock Neural jump stochastic differential equations.
\newblock {\em Advances in Neural Information Processing Systems}, 32, 2019.

\bibitem{kidger2022neural}
Patrick Kidger.
\newblock On neural differential equations.
\newblock {\em arXiv preprint arXiv:2202.02435}, 2022.

\bibitem{kidger2019deep}
Patrick Kidger, Patric Bonnier, Imanol Perez~Arribas, Cristopher Salvi, and
  Terry Lyons.
\newblock Deep signature transforms.
\newblock {\em Advances in Neural Information Processing Systems}, 32, 2019.

\bibitem{kidger2020neural}
Patrick Kidger, James Morrill, James Foster, and Terry Lyons.
\newblock Neural controlled differential equations for irregular time series.
\newblock {\em Advances in Neural Information Processing Systems},
  33:6696--6707, 2020.

\bibitem{kitaev2020reformer}
Nikita Kitaev, {\L}ukasz Kaiser, and Anselm Levskaya.
\newblock Reformer: The efficient transformer.
\newblock {\em arXiv preprint arXiv:2001.04451}, 2020.

\bibitem{lai2018modeling}
Guokun Lai, Wei-Cheng Chang, Yiming Yang, and Hanxiao Liu.
\newblock Modeling long-and short-term temporal patterns with deep neural
  networks.
\newblock In {\em The 41st international ACM SIGIR conference on research \&
  development in information retrieval}, pages 95--104, 2018.

\bibitem{leskovec2014snap}
Jure Leskovec and Andrej Krevl.
\newblock Snap datasets: Stanford large network dataset collection, 2014.

\bibitem{li2020time}
Jiacheng Li, Yujie Wang, and Julian McAuley.
\newblock Time interval aware self-attention for sequential recommendation.
\newblock In {\em Proceedings of the 13th international conference on web
  search and data mining}, pages 322--330, 2020.

\bibitem{li2023towards}
Yan Li, Xinjiang Lu, Haoyi Xiong, Jian Tang, Jiantao Su, Bo~Jin, and Dejing
  Dou.
\newblock Towards long-term time-series forecasting: Feature, pattern, and
  distribution.
\newblock {\em arXiv preprint arXiv:2301.02068}, 2023.

\bibitem{lipton2016directly}
Zachary~C Lipton, David Kale, and Randall Wetzel.
\newblock Directly modeling missing data in sequences with rnns: Improved
  classification of clinical time series.
\newblock In {\em Machine learning for healthcare conference}, pages 253--270.
  PMLR, 2016.

\bibitem{liu2019roberta}
Yinhan Liu, Myle Ott, Naman Goyal, Jingfei Du, Mandar Joshi, Danqi Chen, Omer
  Levy, Mike Lewis, Luke Zettlemoyer, and Veselin Stoyanov.
\newblock Roberta: A robustly optimized bert pretraining approach.
\newblock {\em arXiv preprint arXiv:1907.11692}, 2019.

\bibitem{ma2022non}
Yu~Ma, Zhining Liu, Chenyi Zhuang, Yize Tan, Yi~Dong, Wenliang Zhong, and
  Jinjie Gu.
\newblock Non-stationary time-aware kernelized attention for temporal event
  prediction.
\newblock In {\em Proceedings of the 28th ACM SIGKDD Conference on Knowledge
  Discovery and Data Mining}, pages 1224--1232, 2022.

\bibitem{mei2017neural}
Hongyuan Mei and Jason~M Eisner.
\newblock The neural hawkes process: A neurally self-modulating multivariate
  point process.
\newblock {\em Advances in neural information processing systems}, 30, 2017.

\bibitem{morrill2021neural}
James Morrill, Cristopher Salvi, Patrick Kidger, and James Foster.
\newblock Neural rough differential equations for long time series.
\newblock In {\em International Conference on Machine Learning}, pages
  7829--7838. PMLR, 2021.

\bibitem{mozer2017discrete}
Michael~C Mozer, Denis Kazakov, and Robert~V Lindsey.
\newblock Discrete event, continuous time rnns.
\newblock {\em arXiv preprint arXiv:1710.04110}, 2017.

\bibitem{park2022neural}
Sung~Woo Park, Kyungjae Lee, and Junseok Kwon.
\newblock Neural markov controlled sde: Stochastic optimization for
  continuous-time data.
\newblock In {\em International Conference on Learning Representations}, 2022.

\bibitem{rubanova2019latent}
Yulia Rubanova, Ricky~TQ Chen, and David~K Duvenaud.
\newblock Latent ordinary differential equations for irregularly-sampled time
  series.
\newblock {\em Advances in neural information processing systems}, 32, 2019.

\bibitem{runge1895numerische}
Carl Runge.
\newblock {\"U}ber die numerische aufl{\"o}sung von differentialgleichungen.
\newblock {\em Mathematische Annalen}, 46(2):167--178, 1895.

\bibitem{schirmer2022modeling}
Mona Schirmer, Mazin Eltayeb, Stefan Lessmann, and Maja Rudolph.
\newblock Modeling irregular time series with continuous recurrent units.
\newblock In {\em International Conference on Machine Learning}, pages
  19388--19405. PMLR, 2022.

\bibitem{shchur2019intensity}
Oleksandr Shchur, Marin Bilo{\v{s}}, and Stephan G{\"u}nnemann.
\newblock Intensity-free learning of temporal point processes.
\newblock {\em arXiv preprint arXiv:1909.12127}, 2019.

\bibitem{shchur2020fast}
Oleksandr Shchur, Nicholas Gao, Marin Bilo{\v{s}}, and Stephan G{\"u}nnemann.
\newblock Fast and flexible temporal point processes with triangular maps.
\newblock {\em Advances in Neural Information Processing Systems}, 33:73--84,
  2020.

\bibitem{shukla2021multi}
Satya~Narayan Shukla and Benjamin Marlin.
\newblock Multi-time attention networks for irregularly sampled time series.
\newblock In {\em International Conference on Learning Representations}, 2021.

\bibitem{smith2022simplified}
Jimmy~TH Smith, Andrew Warrington, and Scott~W Linderman.
\newblock Simplified state space layers for sequence modeling.
\newblock {\em arXiv preprint arXiv:2208.04933}, 2022.

\bibitem{stevenson2019dataset}
Nathan~J Stevenson, Karoliina Tapani, Leena Lauronen, and Sampsa Vanhatalo.
\newblock A dataset of neonatal eeg recordings with seizure annotations.
\newblock {\em Scientific data}, 6(1):1--8, 2019.

\bibitem{vaswani2017attention}
Ashish Vaswani, Noam Shazeer, Niki Parmar, Jakob Uszkoreit, Llion Jones,
  Aidan~N Gomez, {\L}ukasz Kaiser, and Illia Polosukhin.
\newblock Attention is all you need.
\newblock {\em Advances in neural information processing systems}, 30, 2017.

\bibitem{wu2022timesnet}
Haixu Wu, Tengge Hu, Yong Liu, Hang Zhou, Jianmin Wang, and Mingsheng Long.
\newblock Timesnet: Temporal 2d-variation modeling for general time series
  analysis.
\newblock {\em arXiv preprint arXiv:2210.02186}, 2022.

\bibitem{wu2021autoformer}
Haixu Wu, Jiehui Xu, Jianmin Wang, and Mingsheng Long.
\newblock Autoformer: Decomposition transformers with auto-correlation for
  long-term series forecasting.
\newblock {\em Advances in Neural Information Processing Systems},
  34:22419--22430, 2021.

\bibitem{xu2019self}
Da~Xu, Chuanwei Ruan, Evren Korpeoglu, Sushant Kumar, and Kannan Achan.
\newblock Self-attention with functional time representation learning.
\newblock {\em Advances in neural information processing systems}, 32, 2019.

\bibitem{yue2022ts2vec}
Zhihan Yue, Yujing Wang, Juanyong Duan, Tianmeng Yang, Congrui Huang, Yunhai
  Tong, and Bixiong Xu.
\newblock Ts2vec: Towards universal representation of time series.
\newblock In {\em Proceedings of the AAAI Conference on Artificial
  Intelligence}, volume~36, pages 8980--8987, 2022.

\bibitem{zeng2023transformers}
Ailing Zeng, Muxi Chen, Lei Zhang, and Qiang Xu.
\newblock Are transformers effective for time series forecasting?
\newblock In {\em Proceedings of the AAAI conference on artificial
  intelligence}, volume~37, pages 11121--11128, 2023.

\bibitem{zerveas2021transformer}
George Zerveas, Srideepika Jayaraman, Dhaval Patel, Anuradha Bhamidipaty, and
  Carsten Eickhoff.
\newblock A transformer-based framework for multivariate time series
  representation learning.
\newblock In {\em Proceedings of the 27th ACM SIGKDD Conference on Knowledge
  Discovery \& Data Mining}, pages 2114--2124, 2021.

\bibitem{zhang2021continuous}
Jing Zhang, Peng Zhang, Baiwen Kong, Junqiu Wei, and Xin Jiang.
\newblock Continuous self-attention models with neural ode networks.
\newblock In {\em Proceedings of the AAAI Conference on Artificial
  Intelligence}, volume~35, pages 14393--14401, 2021.

\bibitem{zhang2020self}
Qiang Zhang, Aldo Lipani, Omer Kirnap, and Emine Yilmaz.
\newblock Self-attentive hawkes process.
\newblock In {\em International conference on machine learning}, pages
  11183--11193. PMLR, 2020.

\bibitem{zhang2019attain}
Yuan Zhang.
\newblock Attain: Attention-based time-aware lstm networks for disease
  progression modeling.
\newblock In {\em In Proceedings of the 28th International Joint Conference on
  Artificial Intelligence (IJCAI-2019), pp. 4369-4375, Macao, China.}, 2019.

\bibitem{zhou2021informer}
Haoyi Zhou, Shanghang Zhang, Jieqi Peng, Shuai Zhang, Jianxin Li, Hui Xiong,
  and Wancai Zhang.
\newblock Informer: Beyond efficient transformer for long sequence time-series
  forecasting.
\newblock In {\em Proceedings of the AAAI conference on artificial
  intelligence}, volume~35, pages 11106--11115, 2021.

\bibitem{zhou2022fedformer}
Tian Zhou, Ziqing Ma, Qingsong Wen, Xue Wang, Liang Sun, and Rong Jin.
\newblock Fedformer: Frequency enhanced decomposed transformer for long-term
  series forecasting.
\newblock In {\em International Conference on Machine Learning}, pages
  27268--27286. PMLR, 2022.

\bibitem{zhu2020deformable}
Xizhou Zhu, Weijie Su, Lewei Lu, Bin Li, Xiaogang Wang, and Jifeng Dai.
\newblock Deformable detr: Deformable transformers for end-to-end object
  detection.
\newblock {\em arXiv preprint arXiv:2010.04159}, 2020.

\bibitem{zuo2020transformer}
Simiao Zuo, Haoming Jiang, Zichong Li, Tuo Zhao, and Hongyuan Zha.
\newblock Transformer hawkes process.
\newblock In {\em International conference on machine learning}, pages
  11692--11702. PMLR, 2020.

\end{thebibliography}

\newpage

\appendix
\section{Implement Details and Complexity Analysis} \label{sec:complex}

\subsection{Background: Time Variable ODE}

A numerical ODE solver integrates a single ODE system $\frac{d \textbf{x}}{d t}=f(t, \textbf{x})$, where $\textbf{x} \in \mathbb{R}^d$ and $f: \mathbb{R}^{d+1} \rightarrow \mathbb{R}^d$. Given an initial state $\textbf{x}_{t_0}$, the ODE solver produces $\textbf{x}_{t_1}, \textbf{x}_{t_2}, ..., \textbf{x}_{t_N}$, which represents the latent state for each time point~\cite{chen2018neural}, i.e.,
\begin{equation}
    \textbf{x}_{t_1}, \textbf{x}_{t_2}, \ldots, \mathbf{z}_{t_N}=\operatorname{ODESolve}\left(\textbf{x}_{t_0}, f, t_0, \ldots, t_N\right) ~.
\label{eq:ode}
\end{equation}
To obtain the solution of Eq. (\ref{eq:ode}), it requires the ODE solver to integrate over the interval $[t_0, t_N]$.

\paragraph{Varied time intervals} Now, we suppose that we need to solve $M$ dynamic system that has different time intervals. Let denote for the $i$-th system, the start point is $t_0^{(i)}$ and the endpoint is $t_1^{(i)}$ and the ODE system is defined by $\frac{d \textbf{x}}{d t}=f(t, \textbf{x})$. We can construct a dummy variable so that the integral interval always starts from $-1$ and ends at $1$ and perform a change of variables to transform every system to use this dummy variable.

For simplifying purposes, we assume that a single system that integrates from $t_0$ to $t_1$ with $f(t, \textbf{x})$ can control the differential equation and the solution of $\textbf{x}(t)$. We introduce a dummy variable $s=\frac{2t - t_0 - t_1}{t_1 - t_0}$ and a solution $\mathbf{\tilde{x}}(s)$ in a fix interval $[-1, 1]$ satisfies
\begin{equation}
    \tilde{\textbf{x}}(s) = \textbf{x}(\frac{s(t_1 - t_0) + t_0 + t_1}{2}) ~.
\end{equation}
The differential equation that solve $\mathbf{x}$ follows that
\begin{equation}
\begin{aligned}
\tilde{f}(s, \tilde{\textbf{x}}(s)) = \frac{d \tilde{\textbf{x}}(s)}{d s} & =\left.\frac{d \textbf{x}(t)}{d t}\right|_{t=\frac{s(t_1 - t_0) + t_0 + t_1}{2}} \frac{d t}{d s} \\
& =\left.f(t, \textbf{x}(t))\right|_{t=\frac{s(t_1 - t_0) + t_0 + t_1}{2}}\left(\frac{t_1 - t_0}{2}\right) \\
& =f\left(\frac{s(t_1 - t_0) + t_0 + t_1}{2}, \tilde{\textbf{x}}(s)\right)\left(\frac{t_1 - t_0}{2}\right) ~.
\end{aligned}
\end{equation}
Since $\tilde{\textbf{x}}(-1)=\textbf{x}(t_0)$ and $\tilde{\textbf{x}}(1)=x_{t_1}$, thus we have
\begin{equation}
    \textbf{x}(t_1)= \operatorname{ODESolve}(\textbf{x}(t_0), f, t_0, t_1) = \operatorname{ODESolve}(\tilde{\textbf{x}}(-1), \tilde{f}, -1, 1) ~.
\end{equation}

\subsection{Background: Approximating Definite Integral of a Function} \label{sec:approximating}

Given a function $f$ that integrates over an interval $[-1, 1]$, a simple method to approximate it is to assume that the change is linear over time, i.e.,
\begin{equation}
\int_{-1}^{1} f(x) d x \approx \frac{f(-1) + f(1)}{2} ~.
\end{equation}
Besides, in numerical analysis, Gauss–Legendre Quadrature provides a more accurate way to approximate an integral over an interval $[-1, 1]$, i.e.,
\begin{equation}
\int_{-1}^1 f(x) \mathrm{d} x \approx \sum_j \omega_j f\left(\xi_j\right) ~,
\end{equation}
where $\left\{\omega_j\right\}$ and $\left\{\xi_j\right\}$ are quadrature weights and nodes such that $-1 \leq \xi_1 < ... < \xi_k \leq 1$.

\subsection{Putting it all together}

Given an irregular-sampled sequence, $\Gamma=[ (x_1, t_1), (x_2, t_2), ..., (x_N, t_N)]$, where $x_i \in \mathbb{R}^{d}$ is the input feature of the $i$-th tokens happen at time $t_i$ and $N$ is the length of the sequence. We analyze the time and space complexity for an autoregressive task where the output of each observation is required for a particular classification or regression task. Specifically, for Eq. (\ref{eq:attn}), $\boldsymbol{\mathrm{\alpha}}_i(t_1), \boldsymbol{\mathrm{\alpha}}_i(t_2), \ldots, \boldsymbol{\mathrm{\alpha}}_i(t_N)$ is required to form the output of Transformer. Therefore, the key difficulty is to solve Eq. (\ref{eq:attn}). By using the numerical approximation method, we have
\begin{equation}
    \boldsymbol{\mathrm{\alpha}}_i(t_j) = \frac{\int_{t_i}^{t_j} \boldsymbol{\mathrm{q}}(\tau) \cdot \boldsymbol{\mathrm{k}}_i(\tau)^\top \text{d}\tau}{t_j - t_i} = \frac{1}{2} \int_{-1}^{1} \tilde{\boldsymbol{\mathrm{q}}}_{i,j}(\tau) \cdot \tilde{\boldsymbol{\mathrm{k}}}_{i,j}(\tau)^\top \text{d} \tau \approx  \frac{1}{2} \sum_{p=1}^{P} \gamma_p \tilde{\boldsymbol{\mathrm{q}}}_{i,j}(\xi_p) \cdot \tilde{\boldsymbol{\mathrm{k}}}_{i,j}(\xi_p)^{\top} ~,
\end{equation}
where $\tilde{\boldsymbol{\mathrm{q}}}_{i,j}$ and $\tilde{\boldsymbol{\mathrm{k}}}_{i,j}$ follows that:
\begin{equation}
\begin{aligned}
  \tilde{\boldsymbol{\mathrm{q}}}_{i,j}(s) & = \boldsymbol{\mathrm{q}}\left(\frac{s(t_i - t_j) + t_i + t_j}{2}\right) ~,  \\
  \tilde{\boldsymbol{\mathrm{k}}}_{i,j}(s) & = \boldsymbol{\mathrm{q}}_j\left(\frac{s(t_i - t_j) + t_i + t_j}{2}\right) ~.
\end{aligned}
\end{equation}
We use $\hat{\boldsymbol{\mathrm{k}}}_{i,j}$ to denote $\frac{d \tilde{\boldsymbol{\mathrm{k}}}_{i,j}(s)}{d s}$. Thus, $\boldsymbol{\mathrm{\alpha}}_i(t_j)$ can be solved by paralleling $N^2$ ODE solvers. Specifically, $\tilde{\boldsymbol{\mathrm{k}}}_{i,j}(\xi_p)$ is given by
\begin{equation}
    \tilde{\boldsymbol{\mathrm{k}}}_{i,j}(\xi_1), \tilde{\boldsymbol{\mathrm{k}}}_{i,j}(\xi_2), ..., \tilde{\boldsymbol{\mathrm{k}}}_{i,j}(\xi_p) = \operatorname{ODESolver} (\boldsymbol{\mathrm{k}}_j(t_j), \hat{\boldsymbol{\mathrm{k}}}_{i,j}, -1, \xi_1, \xi_2, ..., \xi_p) ~.
\end{equation}

As for $\tilde{\boldsymbol{\mathrm{q}}}_{i,j}(\xi_k)$, since the input sequence is irregularly sampled and discrete in time, we propose to use interpolation methods~\cite{kidger2020neural} (e.g., linear interpolation or cubic interpolation) method on $Q$ to construct a continuous-time query. Therefore $\tilde{\boldsymbol{\mathrm{q}}}_{i,j}(\xi_k)$ can be easily calculated in $O(d)$ time complexity.

\section{Representation Power of ContiFormer} \label{sec:proof}

In the section, we mathematically prove that by curated designs of function hypothesis, many Transformer variants can be covered as a special case of ContiFormer. For simplification, we only discuss the self-attention mechanism, where $Q, K, V \in \mathbb{R}^{N \times d}$.

\subsection{Transformer Variants} \label{sec:variants}

We first show several Transformer-based methods for irregular time series. Most of these methods shown below calculate $N \times N$ attention matrix and multiply it with $V$\footnote{Throughout the proof section, we define $\boldsymbol{\mathrm{v}}_i(t_i)=\boldsymbol{\mathrm{v}}_i(t_i)=V_i$, and focus on the proof of attention matrix.}. 

\paragraph{Vanilla Transformer}

Given query $Q \in \mathbb{R}^{N \times d}$ and $V \in \mathbb{R}^{N \times d}$, the attention matrix of a vanilla transformer~\cite{vaswani2017attention} is calculated as 
\begin{equation}
\operatorname{Attn}(Q, K)=\left[\begin{array}{cccc}
Q_1 \cdot K_1^\top & Q_1 \cdot K_2^\top & \cdots & Q_1 \cdot K_N^\top \\
Q_2 \cdot K_1^\top & Q_2 \cdot K_2^\top & \cdots & Q_2 \cdot K_N^\top \\
\vdots & \vdots & \ddots & \vdots \\
Q_N \cdot K_1^\top & Q_N \cdot K_2^\top & \cdots & Q_N \cdot K_N^\top
\end{array}\right] ~,
\end{equation}
and the output of the self-attention mechanism is obtained by\footnote{For simplification, we assume single-head attention is adopted throughout the proof.}
\begin{equation}
Z_i = \operatorname{Softmax}(\operatorname{Attn}(Q, K)_i) V ~.
\end{equation}

\paragraph{Kernelized Attention}

Kernelized attention mechanism \footnote{Not to be confused with methods that regard kernels as a form of approximation of the attention matrix.} incorporate time information through defining specific kernel function $k(x, x')$ and replace the attention matrix with
\begin{equation}
\operatorname{Attn}(Q, K)=\left[\begin{array}{cccc}
Q_1 \cdot K_1^\top * k(t_1, t_1) & Q_1 \cdot K_2^\top * k(t_1, t_2) & \cdots & Q_1 \cdot K_N^\top * k(t_1, t_N) \\
Q_2 \cdot K_1^\top * k(t_2, t_1) & Q_2 \cdot K_2^\top * k(t_2, t_2) & \cdots & Q_2 \cdot K_N^\top * k(t_2, t_N) \\
\vdots & \vdots & \ddots & \vdots \\
Q_N \cdot K_1^\top * k(t_N, t_1) & Q_N \cdot K_2^\top * k(t_N, t_2) & \cdots & Q_N \cdot K_N^\top * k(t_N, t_N)
\end{array}\right] ~.
\end{equation}
Kernel functions are typically symmetric and hold that $\phi(x, x)=1$ for any $x$. For instance, NSTKA~\cite{ma2022non} propose a learnable Generalized Spectral Mixture (GSM) kernel, i.e.,
\begin{equation}
\begin{aligned}
k_\text{GSM}\left(x, x'\right) &=
 \sum_i \phi_i(x) \phi_i\left(x'\right) k_{\text{Gibbs}, i}\left(x, x'\right) \cos \left(2 \pi\left(\mu_i(x) x-\mu_i\left(x'\right) x'\right)\right) ~, \\
\text{ where } k_{\text{Gibbs}, i}\left(x, x'\right) &= \sqrt{\frac{2 l_i(x) l_i\left(x'\right)}{l_i(x)^2+l_i\left(x'\right)^2}} \exp \left(-\frac{\left(x-x'\right)^2}{l_i(x)^2+l_i\left(x'\right)^2}\right) .
\end{aligned}
\end{equation}
where $\mu_i(x)$ denotes the mean, lengthscale as $l_i(x)$ and input-dependent weight as $\phi_i(x)$. 

\paragraph{Time Embedding Methods}

Time embedding methods learn a time representation, adding or concatenating it with the input embedding, i.e.,
\begin{equation}
\operatorname{\widehat{Attn}}(Q, K)=\left[\begin{array}{cccc}
(Q_1, \Phi(t_1)) \cdot (K_1, \Phi(t_1))^\top & \cdots & (Q_1, \Phi(t_1)) \cdot (K_N, \Phi(t_N))^\top  \\ 
(Q_2, \Phi(t_2)) \cdot (K_1, \Phi(t_1))^\top  & \cdots & (Q_2, \Phi(t_2)) \cdot (K_N, \Phi(t_N))^\top \\ 
\vdots & \ddots & \vdots \\
(Q_N, \Phi(t_N)) \cdot (K_1, \Phi(t_1))^\top & \cdots & (Q_N, \Phi(t_N)) \cdot (K_N, \Phi(t_N))^\top
\end{array}\right] ~,
\end{equation}
where $(\cdot, \cdot)$ is the concatenation of two vectors, $\Phi(\cdot): \mathbb{R} \rightarrow \mathbb{R}^{d}$ is a learnable time representation. For simplification, we assume that time embedding is concatenated with the input embedding and ignore linear transformation on the time embedding.\footnote{Using the concatenation operation, we have $(Q_i, \Phi(t_i)) \cdot (K_j, \Phi(t_j))^\top=Q_i \cdot K_j^\top + \Phi(t_i) \cdot \Phi(t_j)^\top$.} 

For instance, Mercer~\cite{xu2019self} transforms the periodic kernel function into the high-dimensional space spanned by truncated Fourier basis under certain frequencies ($\{\omega_i\}$) and coefficients $\{c_i\}$, i.e.,
\begin{equation}
\begin{aligned}
\Phi(t) \mapsto \Phi_d^{\mathcal{M}} & =\left[\Phi_{\omega_1, d}^{\mathcal{M}}(t), \ldots, \Phi_{\omega_k, d}^{\mathcal{M}}(t)\right]^{\top} ~,\\
\text{ where } \Phi_\omega^{\mathcal{M}}(t) & =\left[\sqrt{c_1}, \ldots, \sqrt{c_{2 j}} \cos \left(\frac{j \pi t}{\omega}\right), \sqrt{c_{2 j+1}} \sin \left(\frac{j \pi t}{\omega}\right), \ldots\right] ~.
\end{aligned}
\end{equation}
mTAN~\cite{shukla2021multi} generalizes the notion of a positional encoding used in Transformer-based models to continuous-time domain, i.e., 
\begin{equation}
\Phi(t) = \phi(t) \cdot \textbf{w} ~,
\end{equation}
where $\textbf{w} \in \mathbb{R}^{d \times d}$ is a linear transformation \footnote{In the original paper, the equation is defined as $\Phi(t_i, t_j) = \phi(t_i) \textbf{w}\textbf{v}^\top \phi(t_j)$. For simplicity purpose, we assume that $\textbf{w} = \textbf{v}$.} and 
\begin{equation}
\phi(t)_i = \begin{cases}
    \omega_0 \cdot t + \phi_0, & \text{if } i = 0\\
    \sin (\omega_i \cdot t + \phi_i), & \text{if } 0 < i \leq d
    \end{cases} ~,
\end{equation}
where $\omega$ and $\phi$ are learnable weights. Furthermore, mTAN assumes that $Q=K=\textbf{0}$.

Besides, since $\operatorname{Softmax}(\textbf{x} + c)=\operatorname{Softmax}(\textbf{x})$, where $\textbf{x}$ is a vector and $c$ is a constant. The following equation holds,
\begin{equation}
\operatorname{Softmax}(\operatorname{Attn}(Q, K)) = \operatorname{Softmax}(\operatorname{\widehat{Attn}}(Q, K)) ~,
\end{equation}
where the matrix form of $\operatorname{Softmax}$ normalize each row of the matrix and 
\begin{equation}
\begin{aligned}
\operatorname{Attn}& (Q, K)= \\
& \left[\begin{array}{cccc}
Q_1 \cdot K_1^\top & \cdots & Q_1 \cdot K_N + \Phi(t_1) \cdot (\Phi(t_N) - \Phi(t_1))^\top  \\ 
Q_2 \cdot K_1 + \Phi(t_2) \cdot (\Phi(t_2) - \Phi(t_1))^\top  & \cdots & Q_2 \cdot K_N + \Phi(t_2) \cdot (\Phi(t_N) - \Phi(t_2))^\top \\ 
\vdots & \ddots & \vdots \\
Q_N \cdot K_1 + \Phi(t_N) \cdot (\Phi(t_1) - \Phi(t_N))^\top & \cdots & Q_N \cdot K_N^\top
\end{array}\right] ~.
\end{aligned}
\end{equation}

\subsection{Proof of Extension and Universal Approximation} 

\begin{lemma}[Existence of Continuously Differentiable Vector Function] 
Given a continuously differentiable vector function $u: \mathbb{R} \rightarrow \mathbb{R}^d \in \mathcal{C}^1$, such that $\forall x \in \mathbb{R}, \|u(x)\|_2 < \infty, \|u(x)\|_0 = d$ and a continuously differentiable function $w: \mathbb{R} \rightarrow \mathbb{R} \in \mathcal{C}^1$, there always exists another continuously differentiable vector function $v: \mathbb{R} \rightarrow \mathbb{R}^d \in \mathcal{C}^1$ with a fixed point $p$ satisfying $u(p) \cdot v(p)^\top = w(p)$, such that $\forall x \in \mathbb{R}, u(x) \cdot v(x)^\top = w(x)$ holds. \footnote{Consider an open set $\mathcal{U}$ on the real line and a function $f$ defined on $\mathcal{U}$ with real values. Let $k$ be a non-negative integer. The function $f$ is said to be of differentiability class $\mathcal{C}^{k}$ if the derivatives  $f',f'',\dots ,f^{(k)}$ exist and are continuous on $\mathcal{U}$.}
\end{lemma}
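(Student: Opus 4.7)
The plan is to construct $v$ explicitly by exploiting the non-vanishing of $u$. The hypothesis $\|u(x)\|_0 = d$ for every $x \in \mathbb{R}$ says that every component of $u(x)$ is nonzero, which in particular forces $\|u(x)\|_2^2 = \sum_{i=1}^{d} u_i(x)^2 > 0$ everywhere. Combined with $\|u(x)\|_2 < \infty$, the scalar function $x \mapsto \|u(x)\|_2^2$ is strictly positive, finite, and $\mathcal{C}^1$ (since $u \in \mathcal{C}^1$), so its reciprocal is also $\mathcal{C}^1$. This is the crucial observation that makes a closed-form construction possible.

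Concretely, I would define
\begin{equation}
v(x) \;=\; \frac{w(x)}{\|u(x)\|_2^{2}}\, u(x),
\end{equation}
and then verify three things in order. First, a direct computation gives $u(x)\cdot v(x)^\top = \frac{w(x)}{\|u(x)\|_2^{2}}\, u(x)\cdot u(x)^\top = w(x)$ for every $x \in \mathbb{R}$, so the pointwise identity holds globally. Second, $v \in \mathcal{C}^1$, because it is the product of the $\mathcal{C}^1$ scalar function $w(x)/\|u(x)\|_2^{2}$ (as argued above) with the $\mathcal{C}^1$ vector function $u(x)$. Third, the ``fixed point'' condition $u(p)\cdot v(p)^\top = w(p)$ at the prescribed point $p$ is an immediate specialization of the global identity established in step one, so no separate argument is needed.

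There is essentially no serious obstacle here; the only subtlety is recognizing how the $\ell_0$ assumption is used. Without $\|u(x)\|_0 = d$, the denominator $\|u(x)\|_2^{2}$ could vanish at some point and the reciprocal would lose $\mathcal{C}^1$ regularity (and the whole construction would fail). I would therefore highlight this single step as the key place where the hypothesis is invoked. A brief remark at the end is worth making: the constructed $v$ is not unique — any $\tilde v(x) = v(x) + r(x)$ with $r(x)$ orthogonal to $u(x)$ for every $x$ and $r \in \mathcal{C}^1$ also works, so one has ample freedom, which will be useful when this lemma is applied to realize prescribed attention matrices in the subsequent theorem.
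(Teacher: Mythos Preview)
Your construction $v(x) = w(x)\,u(x)/\|u(x)\|_2^2$ is exactly the particular solution the paper writes down, and your verification is correct. The divergence is in how you read the phrase ``with a fixed point $p$ satisfying $u(p)\cdot v(p)^\top = w(p)$.'' You treat it as the scalar constraint $u(p)\cdot v(p)^\top = w(p)$ at a single point, which is indeed trivially implied by the global identity. The paper, however, intends that the \emph{value} $v(p)=\xi$ is prescribed in advance (with $\xi$ satisfying the compatibility $u(p)\cdot\xi^\top = w(p)$); this is clear from the first line of their proof (``define $\xi:=v(p)$'') and from how Lemma~2 invokes it with the requirement $\boldsymbol{\mathrm{k}}_i(t_i)=K_i$. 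Under that reading your $v$ does not yet hit the prescribed $\xi$ at $p$.

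The paper closes this gap by adding an orthogonal correction: $v(x) = w(x)\,u(x)/\|u(x)\|_2^2 + k(x)$ with $u(x)\cdot k(x)^\top \equiv 0$, and then constructs $k$ componentwise as $k(x)[i]=c_i/u(x)[i]$ for constants $c_i$ chosen so that $k(p)=\xi - w(p)\,u(p)/\|u(p)\|_2^2$. This is precisely the ``add an $r$ orthogonal to $u$'' freedom you mention in your closing remark, but you stop short of building such an $r$ in $\mathcal{C}^1$. Note also that this explicit $k$ is where the hypothesis $\|u(x)\|_0=d$ is genuinely needed in full strength --- dividing by each coordinate $u(x)[i]$ --- whereas your argument only uses $\|u(x)\|_2>0$, which would follow from the weaker $\|u(x)\|_0\ge 1$. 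So your proof is the first half of the paper's; to match the intended statement you would need to promote your final remark to an actual construction of the orthogonal piece.
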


\begin{proof}
Throughout the proof, we define $\xi := v(p)$.

Since $\forall x \in \mathbb{R}, 0 <\|u(x)\|_2 < \infty$, we define
\begin{equation}
v(x) = \frac{u(x)}{\|u(x)\|_2^2} \cdot w(x) + k(x) ~,
\end{equation}
where $k(x) \in \mathcal{C}^1$ such that $\forall x \in \mathbb{R}, u(x) \cdot k(x)^\top=0$ holds. Therefore, for $\forall x  \in \mathbb{R}$, we have
\begin{equation}
u(x) \cdot v(x)^\top = u(x) \cdot \left( \frac{u(x)}{\|u(x)\|_2^2} \cdot w(x) + k(x) \right) = w(x) + u(x) \cdot k(x) = w(x) ~.
\end{equation}
Thus, we only need to prove the existence of $k(x)$. Since $v(x)$ has a fixed point at $p$, therefore,
\begin{equation}
k(p) = \xi - \frac{u(p)}{\|u(p)\|_2^2} \cdot w(p) ~.
\label{eq:cons}
\end{equation}
Let $z[i]$ denote the $i$-th element of a vector $z$, thus, Eq. (\ref{eq:cons}) is equivalent to
\begin{equation}
k(p)[i] = \xi[i] - \frac{u(p)[i]}{\|u(p)\|_2^2} * w(p)[i] ~,
\end{equation}
for $i \in [1, d]$. Since we have $u(p) \cdot v(p)^\top = w(p)$, thus,
\begin{equation}
k(p) \cdot u(p)^\top = \xi \cdot u(p)^\top - w(p) = 0 ~.
\label{eq:kf}
\end{equation}

We denote $c_i := k(p)[i] * u(p)[i]$. Since Eq. (\ref{eq:kf}) holds, therefore $\sum_{i=1}^{d} c_i=0$. Finally, we can construct $k(x)$ as
\begin{equation}
k(x) = \left[\begin{array}{cccc}
k(x)[1]  \\ 
k(x)[2] \\ 
\vdots  \\
k(x)[d] \end{array}\right]^\top = \left[\begin{array}{cccc}
c_1 / u(x)[1]  \\ 
c_2 / u(x)[2] \\ 
\vdots  \\
c_d / u(x)[d] \end{array}\right]^\top ~.
\end{equation}
Since $u, w \in \mathcal{C}^1$, therefore $k \in \mathcal{C}^1$, thus $v \in \mathcal{C}^1$.
\end{proof}

\begin{lemma}[Existence of $\boldsymbol{k}_i(t)$]
Given a continuously differentiable vector function $\boldsymbol{\mathrm{q}}: \mathbb{R} \rightarrow \mathbb{R}^d \in \mathcal{C}^1$ such that $\forall x \in \mathbb{R}, \|\boldsymbol{\mathrm{q}}(x)\|_2 < \infty, \|\boldsymbol{\mathrm{q}}(x)\|_0 = d$ and a continuously differentiable function $f: \mathbb{R} \rightarrow \mathbb{R} \in \mathcal{C}^2$ and a vector $K_i \in \mathbb{R}^{d}$ and satisfies $f(t_i) = \boldsymbol{\mathrm{q}}(t_i) \cdot K_i^\top$, there always exist a continuously differentiable vector function $\boldsymbol{\mathrm{k}}_i: \mathbb{R} \rightarrow \mathbb{R}^d$, such that the following statements hold:
\begin{itemize}[leftmargin=5mm]
    \item $\boldsymbol{\mathrm{k}}_i(t_i)=K_i$,
    \item $\forall t \in \mathbb{R}, t \neq t_i, \boldsymbol{\mathrm{\alpha}}_i(t) := \frac{\int_{t_i}^t \boldsymbol{\mathrm{q}}(\tau) \cdot \boldsymbol{\mathrm{k}}_i(\tau)^\top \text{d} \tau}{t-t_i} = f(t)$.
\end{itemize}
\end{lemma}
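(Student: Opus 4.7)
The plan is to reduce this second lemma to the previous one by differentiating the target integral identity. Rewrite the desired condition $\boldsymbol{\mathrm{\alpha}}_i(t) = f(t)$ as
\begin{equation}
\int_{t_i}^{t} \boldsymbol{\mathrm{q}}(\tau) \cdot \boldsymbol{\mathrm{k}}_i(\tau)^\top \, \text{d}\tau = (t - t_i)\, f(t) ~.
\end{equation}
Differentiating both sides with respect to $t$ and using the fundamental theorem of calculus gives the pointwise constraint
\begin{equation}
\boldsymbol{\mathrm{q}}(t) \cdot \boldsymbol{\mathrm{k}}_i(t)^\top = f(t) + (t - t_i)\, f'(t) ~.
\end{equation}
So first I would define $w(t) := f(t) + (t - t_i) f'(t)$ and look for $\boldsymbol{\mathrm{k}}_i$ satisfying $\boldsymbol{\mathrm{q}}(t) \cdot \boldsymbol{\mathrm{k}}_i(t)^\top = w(t)$ with $\boldsymbol{\mathrm{k}}_i(t_i) = K_i$.

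Next, I would check that the hypotheses of the preceding lemma are met. Since $f \in \mathcal{C}^2$, we have $f' \in \mathcal{C}^1$ and hence $w \in \mathcal{C}^1$. The function $\boldsymbol{\mathrm{q}}$ is already assumed continuously differentiable with $\|\boldsymbol{\mathrm{q}}(x)\|_2 < \infty$ and $\|\boldsymbol{\mathrm{q}}(x)\|_0 = d$ everywhere. The compatibility at the prescribed point $t_i$ also holds, because
\begin{equation}
w(t_i) = f(t_i) = \boldsymbol{\mathrm{q}}(t_i) \cdot K_i^\top ~,
\end{equation}
so the fixed point condition in the previous lemma, with $p = t_i$ and $\xi = K_i$, is satisfied. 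Invoking that lemma yields a continuously differentiable $\boldsymbol{\mathrm{k}}_i$ with $\boldsymbol{\mathrm{k}}_i(t_i) = K_i$ and $\boldsymbol{\mathrm{q}}(t) \cdot \boldsymbol{\mathrm{k}}_i(t)^\top = w(t)$ for all $t \in \mathbb{R}$.

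Finally, I would verify the integrated identity to close the loop. For any $t \neq t_i$, integrating and using integration by parts on the $(\tau - t_i) f'(\tau)$ term gives
\begin{equation}
\int_{t_i}^{t} w(\tau)\, \text{d}\tau = \int_{t_i}^{t} f(\tau)\, \text{d}\tau + \Big[ (\tau - t_i) f(\tau) \Big]_{t_i}^{t} - \int_{t_i}^{t} f(\tau)\, \text{d}\tau = (t - t_i)\, f(t) ~.
\end{equation}
Dividing by $t - t_i$ recovers $\boldsymbol{\mathrm{\alpha}}_i(t) = f(t)$, completing the argument.

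The only genuinely delicate step is the compatibility check at $t = t_i$: one must ensure both that $w$ agrees with $\boldsymbol{\mathrm{q}}(t_i) \cdot K_i^\top$ there (which the hypothesis $f(t_i) = \boldsymbol{\mathrm{q}}(t_i) \cdot K_i^\top$ supplies) and that the regularity $f \in \mathcal{C}^2$ is strong enough to push $w$ into $\mathcal{C}^1$, so the preceding lemma can be applied. Everything else is a routine application of the earlier existence result plus integration by parts.
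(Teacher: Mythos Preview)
Your proposal is correct and follows essentially the same route as the paper: differentiate the integral identity to reduce to the pointwise equation $\boldsymbol{\mathrm{q}}(t)\cdot\boldsymbol{\mathrm{k}}_i(t)^\top = f(t) + (t-t_i)f'(t)$, verify that this right-hand side lies in $\mathcal{C}^1$ and matches $\boldsymbol{\mathrm{q}}(t_i)\cdot K_i^\top$ at $t_i$, and then invoke Lemma~1. Your explicit integration-by-parts check that the pointwise condition integrates back to the desired identity is a nice touch; the paper handles that direction more tersely via an $\iff$ chain justified by both sides vanishing at $t_i$.
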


\begin{proof}
Since $t \neq t_i$, we have
\begin{equation}
\begin{gathered}
\begin{aligned}
& \boldsymbol{\mathrm{\alpha}}_i(t) = f(t) \\
 \iff & \boldsymbol{\mathrm{\alpha}}_i(t) (t - t_i) = f(t) (t - t_i) \\
 \stackrel{\textcircled{\tiny{1}}}{\iff} & \frac{\text{d} \boldsymbol{\mathrm{\alpha}}_i(t) (t - t_i)}{\text{d} t} = \frac{\text{d} f(t) (t - t_i)}{\text{d} t} \\ 
 \iff & \boldsymbol{\mathrm{q}}(t) \cdot  \boldsymbol{\mathrm{k}}_i(t)^\top = \frac{\text{d} f}{\text{d} t}(t - t_i) + f(t) ~, \\
\end{aligned}
\end{gathered}
\label{eq:pp}
\end{equation}
where $\textcircled{\tiny{1}}$ holds since we can define $\boldsymbol{\mathrm{\alpha}}_i(t_i)=f(t_i)$. Besides, since
\begin{equation}
\lim_{|\epsilon| \rightarrow 0} \boldsymbol{\mathrm{\alpha}}_i(t_i + \epsilon) = \lim_{|\epsilon| \rightarrow 0} \frac{\int_{t_i}^{t_i + \epsilon} \boldsymbol{\mathrm{q}}(\tau) \cdot \boldsymbol{\mathrm{k}}_i(\tau) \text{d} \tau}{\epsilon} = \boldsymbol{\mathrm{q}}(t_i) \cdot \boldsymbol{\mathrm{k}}_i(t_i)^\top = \boldsymbol{\mathrm{q}}(t_i) \cdot K_i^\top ~,
\end{equation}
such a definition does not break the continuity of $\boldsymbol{\mathrm{\alpha}}$.

Since $f \in \mathcal{C}^2$, therefore we denote $g(t) := \frac{\text{d} f}{\text{d} t}(t - t_i) + f(t) \in \mathcal{C}^1$. Besides, 
\begin{equation}
g(t_i) = f(t_i) = \boldsymbol{\mathrm{q}}(t_i)\cdot K_i^\top   =  \boldsymbol{\mathrm{q}}(t_i) \cdot \boldsymbol{\mathrm{k}}_i(t_i)^\top 
\end{equation}

By using \textbf{Lemma 1} with $g$ corresponding to $w$, $\boldsymbol{\mathrm{q}}$ to $u$, $\boldsymbol{\mathrm{k}}_i$ to $v$ and $p=t_i$, we can prove the existence of $\boldsymbol{\mathrm{k}}_i$ that satisfies Eq. (\ref{eq:pp}).
\end{proof}

\begin{theorem}[Universal Attention Approximation Theorem]
Given query ($Q$) and key ($K$), such that $\|Q_i\|_2 < \infty, \|Q_i\|_0 = d$ for $i \in [1, ..., N]$.
For any attention matrix with formulation defined in Appendix \ref{sec:variants}, i.e., $\operatorname{Attn}(Q, K)$, there always exists a family of continuously differentiable vector functions $k_1(\cdot), k_2(\cdot), \ldots, k_N(\cdot)$, such that the discrete formulation of $[\boldsymbol{\mathrm{\alpha}}_1(\cdot), \boldsymbol{\mathrm{\alpha}}_2(\cdot), ..., \boldsymbol{\mathrm{\alpha}}_N(\cdot)]$ defined by
\begin{equation}
\operatorname{\widetilde{Attn}}(Q, K)=\left[\begin{array}{cccc}
\boldsymbol{\mathrm{\alpha}}_1(t_1) & \boldsymbol{\mathrm{\alpha}}_2(t_1) & \cdots & \boldsymbol{\mathrm{\alpha}}_N(t_1) \\
\boldsymbol{\mathrm{\alpha}}_1(t_2) & \boldsymbol{\mathrm{\alpha}}_2(t_2) & \cdots & \boldsymbol{\mathrm{\alpha}}_N(t_2) \\
\vdots & \vdots & \ddots & \vdots \\
\boldsymbol{\mathrm{\alpha}}_1(t_N) & \boldsymbol{\mathrm{\alpha}}_2(t_N) & \cdots & \boldsymbol{\mathrm{\alpha}}_N(t_N)
\end{array}\right] ~,
\label{eq:proof}
\end{equation}
satisfies that $\operatorname{\widetilde{Attn}}(Q, K) = \operatorname{Attn}(Q, K)$, where 
\begin{equation}
\boldsymbol{\mathrm{\alpha}}_i(t) = 
\frac{\int_{t_i}^t \boldsymbol{\mathrm{q}}(\tau) \cdot \boldsymbol{\mathrm{k}}_i(\tau)^\top \text{d} \tau}{t-t_i} ,  \text{if } t \neq t_i ~,
\end{equation} 
and $\boldsymbol{\mathrm{\alpha}}_i(t_i) = \boldsymbol{\mathrm{q}}(t_i) \cdot \boldsymbol{\mathrm{k}}_i(t_i)^\top$, $\boldsymbol{\mathrm{q}}(t)$ satisfies $\boldsymbol{\mathrm{q}}(t_i)=Q_i$.
\end{theorem}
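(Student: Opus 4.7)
The plan is to reduce the statement to $N$ independent applications of Lemma~2, one per column of the target attention matrix. For each column index $i \in \{1, \ldots, N\}$, I will construct a single vector function $\mathbf{k}_i(\cdot)$ that reproduces the $i$-th column of $\operatorname{Attn}(Q, K)$ on the reference grid $t_1, \ldots, t_N$; then Eq.~(\ref{eq:proof}) is satisfied entry by entry.

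First, I would fix the continuous query. Construct a continuously differentiable vector function $\mathbf{q}$ with $\mathbf{q}(t_j) = Q_j$, $\|\mathbf{q}(x)\|_2 < \infty$, and $\|\mathbf{q}(x)\|_0 = d$ for every $x \in \mathbb{R}$. A natural cubic spline through the $Q_j$'s does the job after an arbitrarily small generic shift that guarantees no coordinate ever vanishes. Second, I would verify the diagonal consistency condition that Lemma~2 requires, namely $\operatorname{Attn}(Q,K)_{i,i} = \mathbf{q}(t_i) \cdot K_i^\top = Q_i \cdot K_i^\top$. For the vanilla Transformer this is by definition; for kernelized attention it follows from $k(t_i, t_i) = 1$; and for time-embedding methods one first applies the softmax-invariance argument already used in Appendix~\ref{sec:variants} (subtracting a row-wise constant $\Phi(t_i)\cdot\Phi(t_i)^\top$) so that the rewritten matrix has diagonal $Q_i \cdot K_i^\top$ while yielding the same softmax outputs.

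Third, for each fixed $i$, interpolate the finite list of target values $\{\operatorname{Attn}(Q,K)_{j,i}\}_{j=1}^{N}$ by a $\mathcal{C}^2$ scalar function $f_i : \mathbb{R} \to \mathbb{R}$ with $f_i(t_j) = \operatorname{Attn}(Q,K)_{j,i}$; a natural cubic spline suffices. By the previous step, $f_i(t_i) = Q_i \cdot K_i^\top = \mathbf{q}(t_i)\cdot K_i^\top$, so the hypotheses of Lemma~2 hold for the triple $(\mathbf{q}, f_i, K_i)$. Invoking Lemma~2 yields a continuously differentiable $\mathbf{k}_i$ with $\mathbf{k}_i(t_i) = K_i$ and $\boldsymbol{\mathrm{\alpha}}_i(t) = f_i(t)$ for $t \neq t_i$; the convention $\boldsymbol{\mathrm{\alpha}}_i(t_i) = \mathbf{q}(t_i)\cdot\mathbf{k}_i(t_i)^\top = Q_i \cdot K_i^\top = f_i(t_i)$ closes the diagonal case.

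Running this construction over all columns assembles the family $\mathbf{k}_1, \ldots, \mathbf{k}_N$ and gives $\widetilde{\operatorname{Attn}}(Q,K)_{j,i} = \boldsymbol{\mathrm{\alpha}}_i(t_j) = f_i(t_j) = \operatorname{Attn}(Q,K)_{j,i}$ for every $(j,i)$, which is the claim. The main obstacle is the diagonal-consistency step for the time-embedding variant, because there the identity holds only modulo a row-wise constant absorbed by the softmax; one must be careful to interpret the theorem for the post-shift matrix rather than the raw $\widehat{\operatorname{Attn}}$, and the appendix's reformulation is exactly what is needed. A secondary nuisance is guaranteeing $\|\mathbf{q}(x)\|_0 = d$ globally rather than only at the knots, which forces the small generic perturbation mentioned above so that Lemma~1's division by $\|\mathbf{q}(x)\|_2^2$ and by individual coordinates $\mathbf{q}(x)[\ell]$ remain well-defined on all of $\mathbb{R}$.
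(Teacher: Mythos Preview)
Your proposal is correct and follows essentially the same route as the paper's proof: verify the diagonal identity $\operatorname{Attn}(Q,K)_{i,i}=Q_i\cdot K_i^\top$ for each variant, interpolate each column by a $\mathcal{C}^2$ cubic spline $f_i$, and invoke Lemma~2 column by column to produce the $\mathbf{k}_i$. You are in fact more careful than the paper on two points it leaves implicit---the global $\|\mathbf{q}(x)\|_0=d$ condition (your generic perturbation) and the softmax-shift reformulation needed for the time-embedding case---so nothing further is required.
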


\begin{proof}
First, we show that the diagonal elements in Eq. (\ref{eq:proof}), i.e., 
\begin{equation}
    \boldsymbol{\mathrm{\alpha}}_i(t_i) = \boldsymbol{\mathrm{q}}(t_i) \cdot \boldsymbol{\mathrm{k}}_i(t_i)^\top = Q_i \cdot K_i^\top ~,
\end{equation}
holds for different forms of attention matrix defined in Appendix \ref{sec:variants}.

Next, for a given $i \in [1, N]$, there exists a continuous cubic spline $h_i: \mathbb{R} \rightarrow \mathbb{R}$ satisfies $h_i(t_j)=\operatorname{Attn}(Q, K)_{j, i}$ for $\forall j \in [1, \ldots, N]$ and $h_i \in \mathcal{C}^2$. By using \textbf{Lemma 2} with $h_i$ corresponding to $f$, there exists a function $\boldsymbol{\mathrm{k}}_i$ such that $\boldsymbol{\mathrm{\alpha}}_i(t) = h_i(t)$. Therefore, $\boldsymbol{\mathrm{\alpha}}_i(t_j) = h_i(t_j)$ for $j \in [1, ..., N]$, and we finish the proof.
\end{proof}

\section{Experiment Details}

\subsection{Modeling Continuous-Time Function} \label{sec:interpolate}

\subsubsection{Background: 2D Spiral}

A spiral is a curve that starts from a central point and gradually moves away from it while turning around it. We consider two types of spirals that start from time $s$ and end at time $e$. The first type of spiral is defined as
\begin{equation}
    \begin{aligned}
        x(t) & = (a + bt) \cdot cos(t) ~,\\
        y(t) & = (a + bt) \cdot sin(t) ~,
    \end{aligned}
\end{equation}
while the second type of spiral is defined as
\begin{equation}
    \begin{aligned}
        x(t) & = \left(a + 50b / \left(e-t\right)\right) \cdot cos(e-t) ~,\\
        y(t) & = \left(a + 50b / \left(e-t\right)\right) \cdot sin(e-t) ~,\\
    \end{aligned}
\end{equation}
where $a$ and $b$ are parameters of the Archimedean spiral.

\subsubsection{Experiment Setting}

We generate 300 spirals and 200/100 spirals are used for training/testing respectively. For each spiral, it randomly belongs to either a clockwise spiral or a counter-clockwise spiral, we sample $a \sim \mathcal{N}(0, \alpha), b \sim \mathcal{N}(0.3, \alpha)$ and we set $\alpha=0.02$ for Table ~\ref{tab:interp}. Also, we add Gaussian noise from $\mathcal{N}(0, \beta)$ to all training samples and we set $\beta=0.1$. We test different models for interpolation and extrapolation. Specifically, each spiral is sampled at 150 equally-spaced time points. To generate an irregular time series, we randomly sample 30 points from the first half of the trajectory. Thus, the first half of the trajectories are used for the interpolation task while the second half of the trajectories, which is totally unobserved during both training and testing, are used for the extrapolation task. All the models are trained for 5000 iterations with a batch size equal to 64.

\subsubsection{Baseline Implementation}

We compare ContiFormer with Neural ODE-based models and Transformer. 

\begin{itemize}[leftmargin=5mm]
    \item \textit{Neural ODE\cite{chen2018neural}.} This model uses the hidden vectors from the last observation as input, and obtain both the interpolation and extrapolation result from a single ODE solver. We train the model by minimizing the mean square loss.
    \item \textit{Latent ODE (w/ RNN Enc)\cite{chen2018neural}.} We run an RNN encoder through the time series and infer the parameters for a posterior over $z_{t_0}$. Next, we sample $\mathbf{z}_{t_0} \sim q\left(\mathbf{z}_{t_0} \mid\left\{\mathbf{x}_{t_i}, t_i\right\}\right)$. Finally, we obtain both the interpolation and extrapolation result from a single ODE solver, i.e., $\operatorname{ODESolve}(\mathbf{z}_{t_0}, f, \theta_f , t_0, . . . , t_M)$, where $(t_0, t_1, ..., t_M)$ is the time of the trajectory. We train the model by maximizing the evidence lower bound (ELBO). 
    \item \textit{Latent ODE (w/ ODE Enc)\cite{rubanova2019latent}.} We replace the RNN encoder in Latent ODE (w/ RNN Enc) with the ODE-RNN encoder following \cite{rubanova2019latent}. Again, we train the model by maximizing the evidence lower bound (ELBO).
    \item \textit{Neural CDE\cite{park2022neural}.} We follow the official implementation to construct the CDE function. For the interpolation, the hidden vectors are directly obtained from the continuous-time path from Neural CDE. For the extrapolation, we input the hidden vectors from the last timestamp of the Neural CDE to a single ODE solver. We train the model by minimizing the mean square loss.
    \item \textit{Transformer\cite{vaswani2017attention}.} In order to obtain the prediction for both seen and unseen time points, we use a learnable mask for unseen inputs. Instead of using position embedding, the input is added with a temporal encoding before forwarding it to the transformer layer. We train the model by minimizing the mean square loss.
    \item \textit{ContiFormer.} We use linear interpolation to deal with unseen observations to avoid the numerical error caused by constructing the continuous-time query. Similar to Transformer, temporal encoding is adopted and the model is trained by minimizing the mean square loss.
\end{itemize}

\begin{table*}[]
    \centering
    \caption{More experimental results on spiral 2d.}
    \resizebox{\textwidth}{!}{%
    \begin{tabular}{c|c|cccccc}
    \toprule
        \multirow{2}{*}{$\alpha$} & \multirow{2}{*}{Metric} & \multirow{2}{*}{Transformer\cite{vaswani2017attention}} & \multirow{2}{*}{Neural ODE\cite{chen2018neural}} & Latent ODE\cite{chen2018neural} & Latent ODE\cite{rubanova2019latent} & \multirow{2}{*}{Neural CDE\cite{park2022neural}} & \multirow{2}{*}{\textbf{ContiFormer}} \\
        & & & & (w/ RNN Enc) & (w/ ODE Enc) & & \\
    \midrule
    \midrule
    \multicolumn{8}{c}{Task = \textit{Interpolation} ($\times 10^{-2}$)} \\
    \midrule
        \multirow{2}{*}{0.02} & RMSE ($\downarrow$) & 1.37 $\pm$ 0.15 & 1.90 $\pm$ 0.17 & 2.09 $\pm$ 0.22 & 4.53 $\pm$ 1.34 & 4.80 $\pm$ 0.50 &  \textbf{0.50 $\pm$ 0.06} \\
        & MAE ($\downarrow$) & 1.42 $\pm$ 0.14 & 1.90 $\pm$ 0.17 & 1.95 $\pm$ 0.26 & 4.25 $\pm$ 1.48 & 5.12 $\pm$ 0.47 & \textbf{0.53 $\pm$ 0.07} \\
    \midrule
        \multirow{2}{*}{0.2} & RMSE ($\downarrow$) & 2.30 $\pm$ 0.51 & 1.88 $\pm$ 0.09 & 3.34 $\pm$ 0.10 & 4.82 $\pm$ 1.97 & 4.60 $\pm$ 1.08 & \textbf{0.42 $\pm$ 0.04} \\
        & MAE ($\downarrow$) & 1.75 $\pm$ 0.18 & 1.58 $\pm$ 0.12 & 3.16 $\pm$ 0.43 & 4.73 $\pm$ 1.21 & 4.58 $\pm$ 0.93 & \textbf{0.42 $\pm$ 0.01} \\
    \midrule
        \multirow{2}{*}{2} & RMSE ($\downarrow$) & 3.75 $\pm$ 0.50 & 2.1 $\pm$ 0.21 & 3.04 $\pm$ 0.48 & 4.49 $\pm$ 1.26 & 2.92 $\pm$ 0.32 & \textbf{0.45 $\pm$ 0.11} \\
        & MAE ($\downarrow$) & 2.33 $\pm$ 0.60 & 2.49 $\pm$ 0.30 & 2.88 $\pm$ 0.54 & 4.12 $\pm$ 1.00 & 2.96 $\pm$ 0.23 & \textbf{0.41 $\pm$ 0.07} \\
    \midrule
    \midrule
    \multicolumn{8}{c}{Task = \textit{Extrapolation} ($\times 10^{-2}$)} \\
    \midrule
    \multirow{2}{*}{0.02} & RMSE ($\downarrow$) & 1.37 $\pm$ 0.11 & 2.07 $\pm$ 0.02 & 1.59 $\pm$ 0.05 & 2.69 $\pm$ 0.82 & 2.24 $\pm$ 0.03 & \textbf{0.64 $\pm$ 0.10} \\
        & MAE ($\downarrow$) & 1.49 $\pm$ 0.12 & 2.02 $\pm$ 0.07 & 1.52 $\pm$ 0.09 & 2.83 $\pm$ 0.88 & 2.38 $\pm$ 0.04 & \textbf{0.65 $\pm$ 0.08} \\
    \midrule
        \multirow{2}{*}{0.2} & RMSE ($\downarrow$) & 2.96 $\pm$ 1.46 & 4.38 $\pm$ 0.35 & 4.78 $\pm$ 0.15 & 5.03 $\pm$ 1.96 & 3.81 $\pm$ 1.02 & \textbf{0.74 $\pm$ 0.10} \\
        & MAE ($\downarrow$) & 2.79 $\pm$ 1.11 & 3.35 $\pm$ 0.43 & 4.02 $\pm$ 0.59 & 4.72 $\pm$ 1.96 & 3.37 $\pm$ 0.78 & \textbf{0.69 $\pm$ 0.09} \\
    \midrule
        \multirow{2}{*}{2} & RMSE ($\downarrow$) & 5.36 $\pm$ 0.97 & 6.02 $\pm$ 0.58 & 5.70 $\pm$ 0.57 & 4.68 $\pm$ 1.70 & 3.44 $\pm$ 1.06 & \textbf{0.75 $\pm$ 0.07} \\
        & MAE ($\downarrow$) & 4.48 $\pm$ 0.82 & 4.84 $\pm$ 0.43 & 4.46 $\pm$ 0.60 & 4.29 $\pm$ 1.34 & 3.32 $\pm$ 0.85 & \textbf{0.69 $\pm$ 0.07} \\
    \bottomrule
    \end{tabular}}
    \label{tab:int2}
\end{table*}

\subsubsection{Experimental Results with Different $\alpha$}

In this subsection, we test Transformer, Neural ODE and ContiFormer with different $\alpha$ to show the robustness of these models. As shown in Table \ref{tab:int2}, our model can achieve the best result on different parameters.

\subsubsection{More Visualization Results}

More visual results with $\alpha$=0.02 can be found in Figure ~\ref{fig:int2}.

\begin{figure}[htbp]
    \centering
    \includegraphics[width=\textwidth]{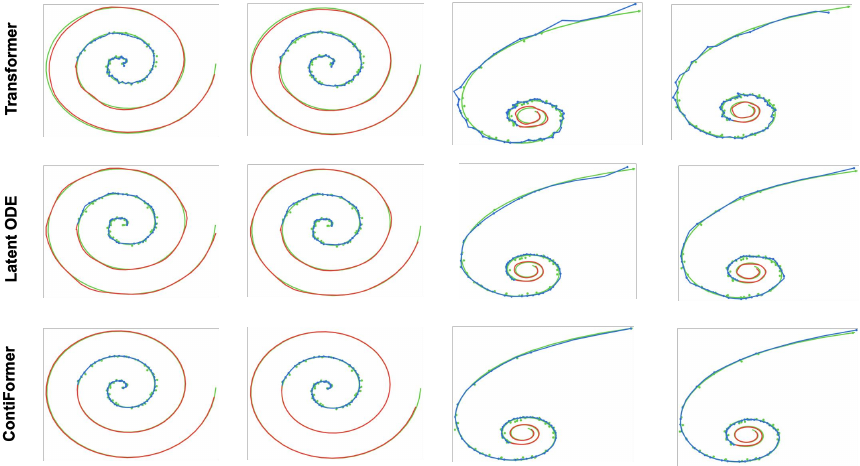}
    \caption{More visualization results with $\alpha=0.02$. Here, Latent ODE refers to Latent ODE w/ RNN Encoder~\cite{chen2018neural}.}
    \label{fig:int2}
\end{figure}

\subsection{Irregular Time Series Classification} \label{sec:classify}

\subsubsection{Training Details}

For both our model and the baseline models, we adopted a fixed learning rate of $10^{-2}$ and a batch size of $64$. To ensure the reliability of our results, all experiments were conducted three times, and the reported values represent the mean. The training process for all models lasted for 1000 epochs. However, if the training accuracy failed to show any improvement for 100 consecutive epochs, the training was stopped to avoid unnecessary computation. 

\subsubsection{Dataset Information}

We select 20 datasets from UEA Time Series Classification Archieve with diverse characteristics in terms of the number, dimensionality and length of time series samples, as well as the number of classes. Detailed information is listed in Table \ref{tab:ueadata}.

\begin{table*}[htbp]
    \centering
    \caption{Dataset Information of UEA.}
    \resizebox{\textwidth}{!}{%
    \begin{tabular}{ccccccc}
    \toprule
         Dataset & Ablation & \# Train & \# Test & Length & \# Classes & \# Variates    \\
    \midrule
ArticularyWordRecognition & AWR &  275 &	300 &	144 &	25 &	9  \\
BasicMotions & BM & 40 & 	40 &	100 &	4 &	6 \\
CharacterTrajectories & CT & 1422 &	1436 &	182 &	20 &	3 \\
DuckDuckGeese & DDG & 60 &	40 &	270 &	5 &	1345 \\
Epilepsy & EP & 137 &	138 &	207 &	4 &	3 \\
ERing & ER & 30 &	270 &	65 &	6 &	4 \\
FingerMovements & FM & 316 &	100 &	50 &	2 &	28\\
HandMovementDirection & HMD & 160 &	74 &	400 &	4 &	10 \\
Handwriting & HW & 150 & 	850 & 	152 & 	26 &	3 \\
Heartbeat & HB & 204 &	205 &	405 &	2 &	61 \\
JapaneseVowels & JV & 270 &	370 &	29 &	9 &	12 \\
Libras & LB & 180 &	180 &	45 &	15 &	2 \\
LSST & LSST & 2459 &	2466 &	36 &	14 &	6\\
NATOPS & NATOPS & 180 &	180 &	51 &	6 & 	24 \\
PEMS-SF & PEMS & 267 &	173 &	144 &	7 &	963 \\
PenDigits & PD & 7494 &	3498 &	8 &	10 &	2 \\
RacketSports & RS & 151 &	152 &	30 &	4 &	6 \\
SelfRegulationSCP1 & SR & 268 &	293 &	896 &	2 &	6 \\
SpokenArabicDigits & SA & 6599 &	2199 &	93 &	10 &	13 \\
UWaveGestureLibrary & UGL & 2238 &	2241 &	315 &	8 &	3 \\
\bottomrule
    \end{tabular}}
    \label{tab:ueadata}
\end{table*}

\subsubsection{Full Experimental Results} \label{sec:full}

We follow the setting of \cite{kidger2020neural} to randomly drop either 30\%, 50\% or 70\% observations. Experimental results are shown in Table \ref{tab:uea30}, Table \ref{tab:uea50} and Table \ref{tab:uea70}. We can find that ContiFormer outperforms RNN-based methods, ODE-based methods and Transformer-based methods. Also, we observe that ContiFormer outperforms all the baselines by a large margin when 70\% observations are dropped, which confirms that ContiFormer is more suitable for irregular time series modeling.

\begin{table*}[htbp]
\centering
\caption{Experimental result on UEA when 30\% observations are dropped. (ODE. stands for ODE-RNN, N. CDE stands for Neural CDE, ACC. stands for Accuracy.)}
\resizebox{\textwidth}{!}{%
\begin{tabular}{cccccccccc}
\toprule
Dataset & GRU-D & GRU-$\Delta$t      & ODE. & CADN    & N. CDE &  S5  & mTAN & TST        & \textbf{ContiFormer} \\
\midrule
        AWR & 0.8533  & 0.8956  & 0.8789  & 0.8056  & 0.9089  & 0.9011  & 0.8067  & \textbf{0.9722}  & 0.9633  \\ 
        BM & 0.9333  & 0.9417  & 0.8250  & 0.9583  & \textbf{0.9917}  & 0.9833  & \textbf{0.9917}  & 0.9667  & 0.9750  \\ 
        CT & 0.9325  & 0.8558  & 0.9415  & 0.9575  & 0.9276  & 0.9610  & 0.9529  & 0.9742  & \textbf{0.9833}  \\ 
        DDG & 0.5867  & 0.5933  & 0.5067  & 0.4600  & 0.3400  & 0.5667  & 0.6000  & 0.6533  & \textbf{0.7267}  \\ 
        EP & 0.7995  & 0.8043  & 0.8454  & 0.8123  & 0.7657  & 0.9074  & 0.9203  & \textbf{0.9589}  & 0.9324  \\ 
        ER & 0.7062  & 0.7148  & 0.6296  & 0.8623  & 0.7543  & 0.8346  & 0.7123  & 0.9160  & \textbf{0.9210}  \\
        FM & 0.6167  & 0.6167  & 0.5667  & \textbf{0.6233}  & 0.5867  & 0.6033  & 0.5433  & 0.6133  & 0.6000  \\ 
        HMD & 0.3559  & 0.4099  & 0.5450  & 0.4505  & 0.3333  & \textbf{0.6171}  & 0.4414  & 0.5946  & 0.4550  \\ 
        HW & 0.0722  & 0.1251  & 0.0675  & 0.0925  & 0.1831  & 0.1302  & 0.0937  & \textbf{0.2196}  & 0.2000  \\ 
        HB & 0.7577  & 0.7593  & 0.7805  & 0.7431  & 0.7333  & 0.7431  & \textbf{0.7789}  & 0.7398  & 0.7561  \\ 
        JV & 0.9703  & 0.9550  & 0.9432  & 0.9703  & 0.9225  & 0.9676  & 0.9595  & 0.9865  & \textbf{0.9919}  \\ 
        LB & 0.5037  & 0.4741  & 0.5222  & 0.5938  & 0.7481  & \textbf{0.9216}  & 0.5426  & 0.7519  & 0.8778  \\
        LSST & 0.5589  & 0.5615  & 0.5715  & 0.5574  & 0.3940  & \textbf{0.6389}  & 0.5307  & 0.5520  & 0.6004  \\ 
        NATOPS & 0.9204  & 0.9315  & 0.8704  & 0.8944  & 0.8148  & 0.8500  & 0.9019  & \textbf{0.9352}  & 0.9222  \\ 
        PEMS & 0.7938  & 0.7669  & 0.8208  & 0.7958  & 0.7572  & 0.9133  & 0.8189  & \textbf{0.8420}  & 0.8401  \\ 
        PD & 0.9309  & 0.9222  & 0.9433  & 0.9402  & 0.8422  & 0.8110  & 0.9029  & \textbf{0.9520}  & 0.9517  \\ 
        RS & 0.7434  & 0.7281  & 0.7654  & 0.7325  & 0.6798  & 0.7500  & 0.7325  & 0.8158  & \textbf{0.8487}  \\ 
        SR & 0.8942  & \textbf{0.9124}  & 0.9101  & 0.8259  & 0.8783  & 0.8737  & 0.8646  & 0.9044  & 0.9101  \\ 
        SA & 0.9663  & 0.9660  & 0.9830  & 0.9809  & 0.8998  & 0.9291  & 0.9321  & 0.9744  & \textbf{0.9836}  \\ 
        UGL & 0.6729  & 0.6625  & 0.6917  & 0.7479  & 0.8219  & 0.8042  & 0.7344  & \textbf{0.8552}  & 0.8135  \\ 
    \midrule
        \textbf{Acc.} ($\uparrow$) & 0.7284  & 0.7298  & 0.7304  & 0.7402  & 0.7142  & 0.7854  & 0.7381  & 0.8089  & \textbf{0.8126}  \\
        \textbf{Rank}  ($\downarrow$) & 6 & 5.75 & 5.45 & 5.4 & 6.85 & 4.4 & 5.7 & 2.75 & \textbf{2.4} \\ 
\midrule
\bottomrule
\end{tabular}}
\label{tab:uea30}
\end{table*}

\begin{table*}[htbp]
\centering
\caption{Experimental result on UEA when 50\% observations are dropped. (ODE. stands for ODE-RNN, N. CDE stands for Neural CDE, ACC. stands for Accuracy.)}
\resizebox{\textwidth}{!}{%
\begin{tabular}{cccccccccc}
\toprule
Dataset & GRU-D & GRU-$\Delta$t      & ODE.  & CADN   & N. CDE & S5  & mTAN & TST        & \textbf{ContiFormer} \\
\midrule
AWR & 0.8811  & 0.8978  & 0.8989  & 0.8067  & 0.9133  & 0.9078  & 0.8067  & \textbf{0.9633}  & \textbf{0.9633}  \\ 
BM & 0.9417  & 0.8500  & 0.7667  & 0.9417  & 0.9583  & \textbf{0.9917}  & 0.9583  & 0.9500  & 0.9583  \\ 
CT & 0.8872  & 0.9088  & 0.9506  & \textbf{0.9798}  & 0.9285  & 0.9638  & 0.9431  & 0.9601  & \textbf{0.9798}  \\ 
DDG & 0.5467  & 0.6000  & 0.4933  & 0.5467  & 0.3467  & 0.5667  & 0.6067  & 0.6533  & \textbf{0.6867}  \\
EP & 0.7295  & 0.7899  & 0.7512  & 0.6728  & 0.7101  & 0.9012  & 0.9106  & \textbf{0.9589}  & 0.9469  \\ 
ER & 0.7358  & 0.7432  & 0.5704  & 0.8527  & 0.7691  & 0.8284  & 0.6432  & \textbf{0.9012}  & 0.8901  \\ 
FM & 0.5800  & 0.5967  & 0.5933  & 0.5933  & 0.6000  & 0.6033  & 0.5700  & \textbf{0.6067}  & \textbf{0.6067}  \\
HMD & 0.4099  & 0.4054  & 0.4685  & 0.4550  & 0.3243  & \textbf{0.5811}  & 0.4099  & 0.5270  & 0.4595  \\ 
HW & 0.0769  & 0.0945  & 0.0588  & 0.1004  & 0.1855  & 0.1357  & 0.0737  & 0.1600  & \textbf{0.1757}  \\ 
HB & 0.7642  & 0.7593  & \textbf{0.7707}  & 0.7512  & 0.7333  & 0.7463  & 0.7789  & 0.7350  & 0.7610  \\ 
JV & 0.9604  & 0.9532  & 0.9126  & 0.9649  & 0.9180  & 0.9631  & 0.9595  & 0.9775  & \textbf{0.9856}  \\ 
LB & 0.3796  & 0.4352  & 0.4537  & 0.5680  & 0.7315  & 0.6352  & 0.5259  & 0.6556  & \textbf{0.8537}  \\ 
LSST & 0.5203  & 0.5292  & 0.5223  & 0.5222  & 0.3969  & \textbf{0.7204}  & 0.5126  & 0.5151  & 0.5661  \\ 
NATOPS & 0.9185  & 0.9167  & 0.8444  & 0.8741  & 0.8019  & 0.8315  & 0.8778  & 0.8778  & \textbf{0.9222}  \\ 
PEMS & 0.7823  & 0.7861  & 0.8170  & 0.7611  & 0.7534  & \textbf{0.9114 } & 0.8015  & 0.7669  & 0.8401  \\
PD & 0.8245  & 0.7804  & 0.8563  & 0.8369  & 0.6389  & 0.6544  & 0.6898  & \textbf{0.8827}  & 0.8799  \\ 
RS & 0.7412  & 0.7368  & 0.7127  & 0.7632  & 0.5526  & 0.7193  & 0.6601  & 0.7807  & \textbf{0.8355}  \\ 
SR & 0.9135  & \textbf{0.9170}  & 0.9044  & 0.8123  & 0.8760  & 0.8885  & 0.8658  & 0.8987  & 0.9067  \\ 
SA & 0.9583  & 0.9556  & 0.9759  & 0.9663  & 0.8907  & 0.9218  & 0.9254  & 0.9667  & \textbf{0.9767}  \\ 
UGL & 0.6833  & 0.6583  & 0.6781  & 0.6531  & 0.8281  & 0.8042  & 0.7167  & \textbf{0.8490}  & 0.8000  \\ 
\midrule
\textbf{ACC.} ($\uparrow$) & 0.7117  & 0.7157  & 0.7000  & 0.7211  & 0.6929  & 0.7638  & 0.7118  & 0.7793  & \textbf{0.7997} \\ 
\textbf{Rank}  ($\downarrow$) & 5.8 & 5.65 & 5.75 & 5.55 & 6.5 & 4.25 & 5.85 & 3.15 & \textbf{1.9} \\ 
\bottomrule
\end{tabular}}
\label{tab:uea50}
\end{table*}

\begin{table*}[t]
\centering
\caption{Experimental result on UEA when 70\% observations are dropped. (ODE. stands for ODE-RNN, N. CDE stands for Neural CDE, ACC. stands for Accuracy)}
\resizebox{\textwidth}{!}{%
\begin{tabular}{cccccccccc}
\toprule
Dataset & GRU-D & GRU-$\Delta$t      & ODE.  & CADN   & N. CDE  & S5 & mTAN & TST  & \textbf{ContiFormer} \\
\midrule
AWR & 0.8689  & 0.8822  & 0.8433  & 0.7911  & 0.9178  & 0.8956  & 0.7956  & 0.9089  & \textbf{0.9511}  \\ 
        BM & 0.7583  & 0.7917  & 0.7167  & \textbf{0.9750}  & 0.9000  & 0.9167  & 0.9417  & 0.9167  & 0.9167  \\ 
        CT & 0.8433  & 0.8496  & 0.9471  & 0.9582  & 0.9241  & 0.9547  & 0.9009  & 0.9313  & \textbf{0.9675}  \\ 
        DDG & 0.5200  & 0.6000  & 0.4800  & 0.5667  & 0.3533  & 0.5667  & 0.6067  & 0.5867  & \textbf{0.7267}  \\ 
        EP & 0.7464  & 0.7222  & 0.6643  & 0.6667  & 0.6667  & 0.8802  & 0.9058  & \textbf{0.9420}  & 0.9275  \\ 
        ER & 0.6741  & 0.6617  & 0.5383  & \textbf{0.8647}  & 0.7630  & 0.8025  & 0.7407  & 0.8198  & 0.8580  \\ 
        FM & 0.5467  & 0.5933  & 0.5733  & 0.6000  & 0.5767  & 0.6133  & 0.5700  & 0.5300  & \textbf{0.6233}  \\
        HMD & 0.3829  & 0.3784  & 0.4459  & 0.4324  & 0.3378  & \textbf{0.5991}  & 0.3694  & 0.4324  & 0.4144  \\
        HW & 0.0784  & 0.0831  & 0.0541  & 0.0957  & \textbf{0.1533}  & 0.1078  & 0.0604  & 0.1255  & 0.1247  \\
        HB & 0.7610  & 0.7528  & 0.7593  & 0.7610  & 0.7561  & 0.7512  & \textbf{0.7691}  & 0.7593  & 0.7593  \\ 
        JV & 0.9514  & 0.9414  & 0.9261  & 0.9685  & 0.8964  & 0.9559  & 0.9613  & 0.9595  & \textbf{0.9766}  \\ 
        LB & 0.3648  & 0.3722  & 0.3463  & 0.5412  & 0.6796  & 0.6278  & 0.4926  & 0.5500  & \textbf{0.7648}  \\ 
        LSST & 0.4803  & 0.4895  & 0.4505  & \textbf{0.5278}  & 0.4093  & 0.6778  & 0.4992  & 0.4874  & 0.5149  \\ 
        NATOPS & 0.8722  & 0.8981  & 0.7889  & 0.9074  & 0.7944  & 0.8259  & 0.8463  & 0.8315  & \textbf{0.9204}  \\ 
        PEMS & 0.7418  & 0.7611  & 0.7842  & 0.7726  & 0.7303  & \textbf{0.8902}  & 0.7861  & 0.6840  & 0.8227  \\ 
        PD & 0.7451  & 0.6670  & 0.7802  & 0.7596  & 0.5201  & 0.5197  & 0.6063  & \textbf{0.8078}  & 0.8069  \\ 
        RS & 0.6711  & 0.6776  & 0.6316  & 0.7018  & 0.5570  & 0.6382  & 0.6228  & 0.6930  & \textbf{0.7697}  \\ 
        SR & 0.9090  & \textbf{0.9170}  & 0.9158  & 0.8515  & 0.8726  & 0.8874  & 0.8635  & 0.8612  & 0.8953  \\ 
        SA & 0.9304  & 0.9295  & 0.9523  & 0.9513  & 0.8622  & 0.9024  & 0.8925  & 0.9447  & \textbf{0.9600}  \\
        UGL & 0.6031  & 0.6219  & 0.5896  & 0.6729  & \textbf{0.8344}  & 0.7896  & 0.6792  & 0.8219  & 0.7969  \\
    \midrule
        \textbf{ACC.} ($\uparrow$) & 0.6725  & 0.6795  & 0.6594  & 0.7183  & 0.6753  & 0.7401  & 0.6955  & 0.7297  & \textbf{0.7749}  \\ 
        \textbf{Rank}  ($\downarrow$) & 6.15 & 5.7 & 6.4 & 3.9 & 6.3 & 4.45 & 5.4 & 4.2 & \textbf{2.1} \\
\bottomrule
\end{tabular}}
\label{tab:uea70}
\end{table*}

\subsection{Predicting Irregular-Sampled Sequences} \label{sec:mtpp}

\subsubsection{Background: Marked Temporal Point Process (MTPP)}

The marked Temporal Point Process (MTPP) has been widely studied in recent years. We define a sequence of events and types as $\Gamma=[(t_1, k_1), (t_2, k_2), \ldots, (t_N, k_N)]$, where $t_i$ is the time point when the $i$-th event occurs, with $t_1 < t_2 < \ldots < t_N$, $k_i \in \{ 1, 2, \ldots, \mathcal{K} \}$ is the type of the $i$-th event and $\mathcal{K}$ is the total number of event types. A history event is the event sequence up to but not including time $t$, denoted by $\mathcal{H}_t={(t_j, k_j): t_j<t}$. The goal of the temporal point process is to model the conditional intensity function (CIF), denoted by $\lambda_{k}^{*}(t)$, which represents the likelihood of an event occurring at time $t$, given the historical events. Models are trained so that the log probability of an event sequence, i.e., 

\begin{equation}
    \log p(\Gamma)=\sum_{j=1}^N \log \lambda_{k_j}^*\left(t_j\right)-\int_{t_1}^{t_N} \lambda^*(\tau) d \tau ~.
\end{equation}

\subsubsection{Dataset Description} 

We use one synthetic dataset and five real-world datasets, namely Synthetic, Neonate~\cite{stevenson2019dataset}, Traffic~\cite{lai2018modeling},
MIMIC~\cite{du2016recurrent}, BookOrder~\cite{du2016recurrent} and StackOverflow~\cite{leskovec2014snap} to evaluate our model. 

\paragraph{Synthetic}

To evaluate the effectiveness of the proposed method, we create a synthetic dataset. Specifically, we consider $10$ types of event where each type of event contain three properties, i.e., $(X_i, V_i, D_i)$. Hence, given an event $\textbf{e}$ with type $j$ is represented as $\textbf{e} = (e.x, e.v, e.d, e.k)$, where $e.x \sim \mathcal{N}(X_j, \mu_1)$, $e.v \sim \mathcal{N}(V_j, \mu_2)$, $e.d=D_j > 0 $ and $e.k$ is the type of the event. We set $\mu_1=\mu_2=0.01$ and $X, V, D \in \mathbb{R}^{10}$ is randomly generated and pre-defined.

To model both the correlation between events and the dynamic changes of events within the system. We first define the dynamic attention (weight) for each event, which is composed of inter-event dependency modeling and intra-event dynamic modeling. For inter-event dependency, it aims to calculate time-aware attention (weight) between each pair of events. Specifically, we use the following equation to define the weight between event $\textbf{e}$ and $\hat{\textbf{e}}$,
\begin{equation}
    \omega(\textbf{e}, \hat{\textbf{e}}) = (e.t - \hat{e}.t) \cdot (e.x * \hat{e}.v - \hat{e}.x * e.v) ~.
\end{equation}
Next, the define $\lambda_{k}^{*}(t)$ and $\lambda^{*}(t)$ in which we assume that only the last event attributes to the decision of the next event, following Markov decision process (MDP), i.e.

\begin{equation}
\begin{aligned}
\lambda_{k}^{*}(t) & = P(k | \hat{e}.k) \cdot \text{Softplus} \left(\sum_{\textbf{e} \in \mathcal{H}_t} \omega(\textbf{e}, \hat{\textbf{e}}) \cdot \text{exp} \left( \frac{- (t - e.t)}{e.d} \right) \right) ~,\\
\lambda^{*}(t) & = \text{Softplus} \left(\sum_{\textbf{e} \in \mathcal{H}_t} \omega(\textbf{e}, \hat{\textbf{e}}) \cdot \text{exp} \left( \frac{- (t - e.t)}{e.d} \right) \right) ~.\\
\end{aligned}
\label{eq:lam}
\end{equation}
where $\text{exp} \left( - (t - e.t) / e.d \right)$ is the exponential decay effect for each event, $\hat{\textbf{e}}$ is the last event as of time $t$. The existence of the exponential decay function ensures that all the events will eventually disappear in the system as time goes to infinity so as to stabilize the conditional intensify function. $P(k | \hat{k})$ is a pre-defined matrix that means the probability of the next event to with type $k$ given that the type of the previous event is $\hat{\textbf{e}}$.

Next, we predict the occurrence time of the next event with
\begin{equation}
    \begin{aligned}
        \widehat{t}_{j+1} & =\int_{t_j}^{\infty} t \cdot p\left(t \mid \mathcal{H}_t\right) d t ~,\\
        \text{ where \quad} p\left(t \mid \mathcal{H}_t\right) & =\lambda^{*}(t) \exp \left(-\int_{t_j}^t \lambda^{*}(t) d t\right) ~.\\
    \end{aligned}
    \label{eq:pre}
\end{equation}
To predict the type of the next event, we sample the type of the next event from $P(\cdot | \hat{e}.k)$, where $\hat{e}.k$ is the type of the previous event.

Finally, we use 500 different random seeds to generate a total of 500 event sequences. Also, we use 4-fold cross-validation for evaluation. The generation process is shown in Algorithm \ref{algo:gen}.

\begin{algorithm}
\caption{Generate Synthetic Dataset with Time-Aware Dynamic Kernel}
\label{algo:gen}
\begin{algorithmic} 
\STATE \textbf{Input} $\textit{seed} \in [0, 500)$, $\textit{tmax}=20$, $X = [0.78, 0.11, -1.0, -0.56, -0.78, 1.0, 0.56, -0.33, 0.3, $ $-0.11]$, $V=[-0.17, 0.5, -0.28, 0.28, 0.17, 0.39, -0.05, 0.05, -0.39, -0.5]$, $D=[0.5, 0., 0.28, $ $ 0.17, 0.22, 0.44, 0.33, 0.11, 0.05, 0.39]$ 

\STATE \textbf{Output} A generated event sequence $E$. 

\STATE Initialize the random state of Numpy using $\textit{seed}$ and defined the probabilistic matrix $P$.  
\STATE Initialize the event sequence $E$ to be an empty array, and set current time $T=0$.
\STATE Random generate the type of the first event $e.k$ and sample $e.x, e.v$ and $e.d$.
\STATE Add the event to the event sequence.
\WHILE {$ t \leq \textit{tmax}$}
    \STATE Calculate the conditional intensity function using Eq. (\ref{eq:lam}).
    \STATE Predict the time of the next event $e.t$ using Eq. (\ref{eq:pre}).
    \STATE Assume the type of previous event is $\hat{k}$, sample the type of the next event $e.k$ from $P(\cdot | \hat{k})$.
    \STATE Sample sample $e.x, e.v$ and $e.d$ given $e.k$.
    \STATE Add the generated event to the event sequence $E$.
    \STATE Update $T=e.t$.
\ENDWHILE
\RETURN $E$
\end{algorithmic}
\end{algorithm}

\paragraph{Neonate.} Neonate dataset is a clinical dataset from 79 patients, among which 48 patients have incurred at least two seizures during the observation. The dataset contains only one type of event, i.e., the start point of a seizure (in minutes). The task is to predict when the next seizure will happen. Note that we only clear the start point of the seizure. Thus, we only have one type of event.

\paragraph{Traffic.} Traffic dataset is a benchmark dataset for time-series forecasting collected by the Caltrans Performance Measurement System (PeMS). This dataset contains the traffic state monitoring of 861 sensors in a two-year period. The data is collected every 1 hour. We extract time points with fluctuations higher than 50\% as events, whose types are up and down according to the direction. We consider two events that are within 6 hours as duplicated events and remove the latter one from the dataset. Besides, for each sensor, it contains a long sequence, we partition these event sequences by month, i.e., for each event sequence, it contains the event of a sensor in one month.

\paragraph{MIMIC.} MIMIC dataset is a clinical dataset that collects patients' visits to a hospital's ICU in a seven-year period. We treat the visits of each patient as a separate sequence.

\paragraph{BookOrder.} BookOrder dataset is a financial dataset that contains transaction records of stock in one day. We record the time (in milliseconds) and the action that was taken in each transaction. The dataset is a single long sequence with only two types of events: “buy” and “sell”. We clip the event sequences so that the maximum length is 500 in both the train and test sets. Furthermore, the mean $\mu$ and standard derivative $\sigma$ of the time difference between two events in the dataset is 1.32 and 20.24 respectively. Therefore, we clip the maximum time difference to 70.

\paragraph{StackOverflow.} StackOverflow dataset is a question-answering website. The website rewards users with badges to promote engagement in the community, and the same badge can be rewarded multiple times to the same user. We collect data in a two-year period, and we treat each user’s reward history as a sequence.

For MIMIC, Bookorder, and StackOverflow datasets, we use the dataset from the Google Drive\footnote{\url{https://drive.google.com/drive/folders/0BwqmV0EcoUc8UklIR1BKV25YR1U?resourcekey=0-OrlU87jyc1m-dVMmY5aC4w}}. Statistic information can be found in Table ~\ref{tab:data}. We use the 4-fold cross-validation scheme for Synthetic, Neonate, and Traffic datasets following~\cite{fang2023learning}, and the 5-fold cross-validation scheme for the other three datasets following~\cite{mei2017neural, zuo2020transformer}.

\begin{table*}[htbp]
    \centering
    \caption{Dataset statistics: name of the dataset, number of event types, number of events in the dataset, average length per sequence, and number of sequences in training/test sets.}
    \begin{tabular}{c|c|c|c|c|c}
    \toprule
         Dataset & K & \# Events & Avg. Length & \# Train Seqs & \# Test Seqs  \\
    \midrule
         Synthetic & 10 & 8, 618 & 17 & 375 & 125\\
         Neonate & 1 & 534 & 11 & 38 & 10 \\
         Traffic & 2 & 154, 747 & 60 & 1, 938 & 647 \\
         MIMIC & 75 & 2, 419 & 4 & 527 & 65 \\
         BookOrder & 2 & 414, 800 & 2, 074 & 90 & 100 \\
         StackOverflow & 22 & 480, 413 & 72 & 4, 777 & 1, 326 \\
    \bottomrule
    \end{tabular}
    \label{tab:data}
\end{table*}

\subsubsection{Implementation Details} 

\paragraph{Input Embedding}

Given the history event $\mathcal{H}_t=\left\{\left(t_j, k_j\right): t_j<t\right\}$ with a total of $\mathcal{K}$ events, the input of Transformer-based model and RNN-based model is obtained through event embedding and temporal encoding, i.e., for the $i$-th event $x_i = \text{emb}(k) + \text{enc}(t)$, where $\text{emb}(k)$ is the $k$-th column of the embedding table $\mathcal{E} \in \mathbb{R}^{\mathcal{K} \times d}$ and temporal encoding is defined by a combination of sine and cosine function~\cite{zuo2020transformer}.

\paragraph{Conditional Intensity Function}

Given the output from the model, i.e. $\mathbf{h}\left(t_1\right), \mathbf{h}\left(t_2\right),  .., \mathbf{h}\left(t_N\right)$. The dynamics of the temporal point process are described by a continuous conditional intensity function. To reduce the memory cost and accelerate the training process. Instead of using the continuous-time output from ContiFormer, we only use ContiFormer to generate a hidden representation for discrete time points, i.e., $t_1, t_2, ..., t_j, ..., t_N$. Following ~\cite{zuo2020transformer}, the conditional intensity function is defined by
\begin{equation}
\lambda\left(t \mid \mathcal{H}_t\right)=\sum_{k=1}^K \lambda_k\left(t \mid \mathcal{H}_t\right) ~,
\end{equation}
where each of the type-specific conditional intensity function is defined by
\begin{equation}
\lambda_k\left(t \mid \mathcal{H}_t\right)=f_k\left(\alpha_k \frac{t-t_j}{t_j}+\mathbf{w}_k^{\top} \mathbf{h}\left(t_j\right)+b_k\right) ~,
\end{equation}
where on interval $t \in [t_j, t_{j+1})$ and $f_k(x)=\beta_k \log \left(1+\exp \left(x / \beta_k\right)\right)$ is the softplus function with parameter $\beta_k$. The use of such a closed-form intensity function allows for $O(1)$ interpolation.

\paragraph{Training Loss} \label{sec:loss}

We train our model and all the baseline models by jointing maximizing the log-likelihood of the sequence and minimizing the prediction losses.

Specifically, for a sequence $\mathcal{S}$ over an interval $[t_i, t_N]$, given the conditional intensity function $\lambda\left(t \mid \mathcal{H}_t\right)$, the log-likelihood is given by
\begin{equation}
\ell_{LL}(\mathcal{S})=\underbrace{\sum_{j=1}^N \log \lambda\left(t_j \mid \mathcal{H}_j\right)}_{\text {event log-likelihood }}-\underbrace{\int_{t_1}^{t_N} \lambda\left(t \mid \mathcal{H}_t\right) d t}_{\text {non-event log-likelihood }} .
\end{equation}
One challenge to compute the log-likelihood is to solve the integral, i.e., $\Lambda=\int_{t_1}^{t_N} \lambda\left(t \mid \mathcal{H}_t\right) d t$. Since \textit{softplus} function has no closed-form computation, we apply the Monte Carlo integration method, i.e.,
\begin{equation}
\widehat{\Lambda}_{\mathrm{MC}}=\sum_{j=2}^P\left(t_j-t_{j-1}\right)\left(\frac{1}{P} \sum_{i=1}^P \lambda\left(u_i\right)\right) ~,
\end{equation}
where $u_i \sim \operatorname{Unif}(t_{j-1}, t_j)$ for $i \in [1, ..., P]$ is sampled from a uniform distribution over an interval $[t_{j-1}, t_j]$.

Besides the log-likelihood loss, we also use the cross-entropy loss and regression loss for predicting the type and time of the next events.

Specifically, given the conditional intensity function, the next time stamp prediction is given by
\begin{equation}
\begin{aligned}
& p\left(t \mid \mathcal{H}_t\right)=\lambda\left(t \mid \mathcal{H}_t\right) \exp \left(-\int_{t_j}^t \lambda\left(\tau \mid \mathcal{H}_\tau\right) d \tau\right) ~, \\
& \widehat{t}_{j+1}=\int_{t_j}^{\infty} t \cdot p\left(t \mid \mathcal{H}_t\right) d t ~. \\
\end{aligned}
\end{equation}
Thus, the regression loss and the cross-entropy loss are given by
\begin{equation}
\ell_{reg}(\mathcal{S}) = \sum_{i=2}^{N} (\widehat{t}_{i} - t_i)^2 ~, 
\ell_{pred}(\mathcal{S}) = \sum_{i=2}^{N} -\log \left( \frac{\lambda_{k_i}\left(\widehat{t}_{i}\right)}{\lambda\left(\widehat{t}_{i} \right)} \right) ~.
\end{equation}
Overall, the modeling is trained in a multi-task manner, i.e., 
\begin{equation}
    \ell(\mathcal{S}) = \ell_{LL} + \alpha_1 \ell_{reg}(\mathcal{S}) + \alpha_2 \ell_{pred}(\mathcal{S}) ~,
\end{equation}
where $\alpha_1$ and $\alpha_2$ are the pre-defined weight. By default, we set $\alpha_1=0.01$ and $\alpha_2=1$.

\subsubsection{Significance test on Synthetic and BookOrder datasets}

To demonstrate the statistical superiority of ContiFormer, we conducted a t-test on Synthetic and BookOrder datasets using 10 random seeds. The t-test results are listed in Table \ref{tab:ttest}.

\begin{table}[htbp]
    \centering
    \caption{Significance test on Synthetic and BookOrder dataset with 10 repeats. The P-value represents the significant value between the ContiFormer and the baseline model. Bolder values represent P-value $< 10^{-6}$ in significance test.}
    \resizebox{\textwidth}{!}{%
    \begin{tabular}{ccccccccccccc}
    \toprule
        Dataset & \multicolumn{6}{c}{Synthetic} & \multicolumn{6}{c}{BookOrder} \\
        \cmidrule(lr){2-7}
        \cmidrule(lr){8-13}
        Metric & \multicolumn{2}{c}{LL ($\uparrow$)} & \multicolumn{2}{c}{Accuracy ($\uparrow$)} & \multicolumn{2}{c}{RMSE ($\downarrow$)} & \multicolumn{2}{c}{LL ($\uparrow$)} & \multicolumn{2}{c}{Accuracy ($\uparrow$)} & \multicolumn{2}{c}{RMSE ($\downarrow$)} \\ 
        \cmidrule(lr){2-3}
        \cmidrule(lr){4-5}
        \cmidrule(lr){6-7}
        \cmidrule(lr){8-9}
        \cmidrule(lr){10-11}
        \cmidrule(lr){12-13}
        Statistic & Mean & P-value & Mean & P-value & Mean & P-value & Mean & P-value & Mean & P-value & Mean & P-value \\ 
    \midrule
        GRU-dt & -0.8611 & 5.38E-08 & 0.8420 & 0.003 & 0.2556 & 3.49E-09 & -1.0137 & 1.07E-31 & 0.6274 & 3.11E-08 & 3.6680 & 1.32E-05 \\ 
        NSTKA & -1.0088 & 1.27E-21 & 0.8419 & 0.0003 & 0.3863 & 1.74E-12 & -1.0916 & 2.95E-27 & 0.6223 & 5.48E-15 & 3.7175 & 1.88E-08 \\ 
        ODE-RNN & -0.8432 & 2.39E-11 & 0.8420 & 0.098 & 0.2401 & 3.77E-08 & -0.9458 & 1.44E-26 & 0.6278 & 1.17E-07 & 3.5898 & 1.94E-03 \\ 
        RMTPP & -1.0132 & 3.57E-19 & 0.8420 & 0.0143 & 0.3711 & 6.70E-18 & -0.9551 & 9.70E-26 & 0.6245 & 1.45E-13 & 3.6475 & 9.76E-04 \\ 
        SAHP & -0.6247 & 2.21E-05 & 0.8420 & 0.1778 & 0.5218 & 2.37E-10 & -0.3053 & 1.83E-07 & 0.6227 & 1.45E-13 & 3.6810 & 9.55E-07 \\ 
        THP & -0.6035 & 6.28E-06 & 0.8420 & 0.1354 & 0.2100 & 4.85E-06 & -1.0788 & 1.55E-17 & 0.6251 & 1.96E-06 & 3.6907 & 8.02E-07 \\
    \midrule
        ContiFormer & \textbf{-0.5445} & N/A & 0.8421 & N/A & \textbf{0.1943} & N/A & \textbf{-0.2745} & N/A & \textbf{0.6295} & N/A & 3.6171 & N/A \\
    \bottomrule
    \end{tabular}}
    \label{tab:ttest}
\end{table}

\subsection{Regular Time Series Forecasting} \label{sec:regular}

Time series forecasting has garnered considerable attention in recent years, as evidenced by various studies~\cite{wu2021autoformer, zhou2021informer, li2023towards}. In this study, we assess ContiFormer against several models tailored for regular time series forecasting. 

\paragraph{Datasets}

We evaluate ContiFormer on five experimental datasets. i) ETT dataset \cite{zhou2021informer} contains the data collected from electricity transformers, including load and oil temperature that are recorded every 15 minutes between July 2016 and July 2018. Here, we use ETTm2 and ETTh2 datasets. ii) Exchange dataset \cite{wu2021autoformer} records the daily exchange rates of eight different countries ranging from 1990 to 2016. iii) Weather dataset is recorded every 10 minutes for 2020 whole year, which contains 21 meteorological indicators, such as air temperature, humidity, etc. v) ILI dataset includes the weekly recorded influenza-like illness (ILI) patients data from Centers for
Disease Control and Prevention of the United States between 2002 and 2021. 

\paragraph{Baselines}

We compare ContiFormer with several transformer-based models for regular time series forecasting under the multivariate setting. These transformer-based models includes: FEDformer\cite{zhou2022fedformer}, Autoformer\cite{wu2021autoformer}, Informer\cite{zhou2021informer} and Reformer\cite{kitaev2020reformer}. Also, we chose one recent state-of-the-art model. i.e. TimesNet\cite{wu2022timesnet} for comparison.

\paragraph{Results of Time Series Forecasting}

The experimental results are depicted in Table~\ref{tab:regular}. ContiFormer achieves the best performance on ILI dataset. Specifically, it gives \textbf{10\%} MSE reduction (2.874 $\rightarrow$ 2.632). Also, ContiFormer outperforms Autoformer and FEDformer on Exchange, ETTm2, and Weather datasets. Therefore, it demonstrates that ContiFormer performs competitively with the state-of-the-art model.

\begin{table}[t]
    \centering
    \caption{Results of regular time series forecasting. We set the
input length as 36 for ILI and 96 for the others. Avg. stands for Average. Bold values indicate the best performance and underlined values indicate the second-best performance (lower the better).}
 \resizebox{\columnwidth}{!}{%
    \begin{tabular}{c|c|cccccccccccccccc}
    \toprule
         \multicolumn{2}{c}{Models} & \multicolumn{2}{c}{ContiFormer} & \multicolumn{2}{c}{TimesNet\cite{wu2022timesnet}} & \multicolumn{2}{c}{DLinear\cite{zeng2023transformers}} & \multicolumn{2}{c}{FEDformer\cite{zhou2022fedformer}} & \multicolumn{2}{c}{Autoformer\cite{wu2021autoformer}} & \multicolumn{2}{c}{Transformer\cite{vaswani2017attention}} & \multicolumn{2}{c}{Informer\cite{zhou2021informer}} &  \multicolumn{2}{c}{Reformer\cite{kitaev2020reformer}} \\ 
    \cmidrule(lr){3-4}
    \cmidrule(lr){5-6}
    \cmidrule(lr){7-8}
    \cmidrule(lr){9-10}
    \cmidrule(lr){11-12}
    \cmidrule(lr){13-14}
    \cmidrule(lr){15-16}
    \cmidrule(lr){17-18}
         \multicolumn{2}{c}{Metrics} & MSE  & MAE & MSE & MAE & MSE & MAE & MSE & MAE & MSE & MAE & MSE & MAE & MSE & MAE & MSE & MAE \\
    \midrule

        \multirow{5}{*}{\rotatebox{90}{Exchange}} & 96 & \underline{0.105} & \underline{0.232} & 0.107 & 0.238 & \textbf{0.088} & \textbf{0.217} & 0.153 & 0.284 & 0.153 & 0.283 & 0.891 & 0.743 & 1.348 & 0.966 & 1.208 & 0.961 \\ 
        ~ & 192 & 0.215 & 0.339 & \underline{0.201} & \underline{0.327} & \textbf{0.175} & \textbf{0.313} & 0.264 & 0.375 & 0.327 & 0.413 & 1.338 & 0.955 & 1.296 & 0.941 & 1.329 & 1.001 \\ 
        ~ & 336 & \underline{0.336} & \underline{0.427} & 0.354 & 0.435 & \textbf{0.310} & \textbf{0.421} & 0.437 & 0.485 & 0.555 & 0.560 & 1.553 & 1.056 & 1.641 & 1.072 & 3.023 & 1.427 \\ 
        ~ & 540 & 0.698 & 0.598 & \underline{0.614} & \underline{0.582} & \textbf{0.534} & \textbf{0.555} & 0.710 & 0.639 & 0.724 & 0.648 & 1.515 & 1.042 & 2.430 & 1.284 & 1.801 & 1.159 \\
        ~ & \textbf{Avg.} & 0.339  & 0.399  & \underline{0.320}  & \underline{0.396}  & \textbf{0.277}  & \textbf{0.377}  & 0.392  & 0.446  & 0.440  & 0.477  & 1.325  & 0.949  & 1.679  & 1.066  & 1.841  & 1.137  \\ 
    \midrule
        \multirow{5}{*}{\rotatebox{90}{ETTm2}} & 24 & 0.126  & 0.226  & \textbf{0.104}  & \textbf{0.200}  & \underline{0.109}  & \underline{0.213}  & 0.149  & 0.253  & 0.157  & 0.262  & 0.282  & 0.397  & 0.391  & 0.475  & 0.405  & 0.499 \\ 
        ~ & 48 & 0.175  & 0.276  & \textbf{0.143}  & \textbf{0.238}  & \underline{0.145}  & \underline{0.249}  & 0.172  & 0.268  & 0.177  & 0.276  & 0.501  & 0.535  & 0.420  & 0.484  & 0.541  & 0.569 \\
        ~ & 168 & 0.272  & 0.336  & \textbf{0.255}  & \underline{0.308}  & 0.262  & 0.345  & \textbf{0.255}  & 0.318  & 0.262  & 0.324  & 0.824  & 0.730  & 1.068  & 0.869  & 1.149  & 0.846 \\ 
        ~ & 336 & 0.331  & \underline{0.361}  & 0.352  & \textbf{0.359}  & 0.366  & 0.414  & \textbf{0.326}  & \underline{0.361}  & \underline{0.328}  & 0.365  & 1.149  & 0.871  & 1.379  & 0.983  & 2.258  & 1.153 \\
        ~ & \textbf{Avg.} & 0.226  & \underline{0.300}  & \textbf{0.214}  & \textbf{0.276}  & \underline{0.220}  & 0.305  & 0.226  & \underline{0.300}  & 0.231  & 0.307  & 0.689  & 0.633  & 0.815  & 0.703  & 1.088  & 0.767 \\ 
    \midrule    
        \multirow{5}{*}{\rotatebox{90}{ETTh2}} & 24 & 0.237  & 0.327  & \underline{0.196}  & \underline{0.281}  & \textbf{0.179}  & \textbf{0.277}  & 0.262  & 0.340  & 0.266  & 0.343  & 0.946  & 0.800  & 1.408  & 1.028  & 0.762  & 0.703 \\ 
        ~ & 48 & 0.287  & 0.355  & \textbf{0.244}  & \textbf{0.317}  & \underline{0.247}  & \underline{0.329}  & 0.291  & 0.354  & 0.294  & 0.358  & 1.009  & 0.826  & 1.642  & 1.100  & 1.101  & 0.856  \\ 
        ~ & 168 & 0.417  & 0.426  & \textbf{0.397}  & \textbf{0.409}  & 0.430  & 0.448  & \underline{0.403}  & 0.420  & \underline{0.403}  & \underline{0.419}  & 3.014  & 1.483  & 3.261  & 1.511  & 2.076  & 1.144 \\ 
        ~ & 336 & 0.484  & 0.471  & \underline{0.446}  & \textbf{0.449}  & 0.577  & 0.532  & 0.448  & 0.458  & \textbf{0.445}  & \underline{0.456}  & 3.357  & 1.606  & 3.155  & 1.481  & 2.633  & 1.297 \\ 
        ~ & \textbf{Avg.} & 0.356  & 0.395  & \textbf{0.321}  & \textbf{0.364}  & 0.358  & 0.396  & \underline{0.351}  & \underline{0.393}  & 0.352  & 0.394  & 2.081  & 1.179  & 2.366  & 1.280  & 1.643  & 1.000 \\ 
    \midrule
        \multirow{5}{*}{\rotatebox{90}{ILI}} & 24 & \textbf{2.391}  & \textbf{1.004}  & \underline{2.925}  & \underline{1.061}  & 4.412  & 1.576  & 4.979  & 1.668  & 4.334  & 1.527  & 4.789  & 1.431  & 5.718  & 1.623  & 4.678  & 1.455 \\
        ~ & 36 & \textbf{2.673}  & \textbf{1.006}  & \underline{3.145}  & \underline{1.127}  & 4.314  & 1.529  & 4.812  & 1.628  & 4.084  & 1.438  & 4.995  & 1.474  & 5.353  & 1.564  & 4.801  & 1.490 \\
        ~ & 48 & \textbf{2.536}  & \underline{1.039}  & \underline{2.716}  & \textbf{1.027}  & 3.910  & 1.424  & 4.112  & 1.465  & 3.870  & 1.408  & 5.283  & 1.545  & 5.120  & 1.537  & 4.946  & 1.510 \\ 
        ~ & 60 & \underline{2.930}  & \underline{1.117}  & \textbf{2.711}  & \textbf{1.049}  & 4.114  & 1.446  & 3.965  & 1.428  & 3.895  & 1.411  & 5.454  & 1.578  & 5.556  & 1.613  & 5.134  & 1.545 \\ 
        ~ & \textbf{Avg.} & \textbf{2.632}  & \textbf{1.042}  & \underline{2.874}  & \underline{1.066}  & 4.188  & 1.493  & 4.467  & 1.547  & 4.046  & 1.446  & 5.130  & 1.507  & 5.437  & 1.584  & 4.890  & 1.500 \\ 
    \midrule
        \multirow{5}{*}{\rotatebox{90}{Weather}} & 96 & \textbf{0.170}  & \textbf{0.218}  & \underline{0.171}  & \underline{0.221}  & 0.194  & 0.250  & 0.254  & 0.323  & 0.253  & 0.317  & 0.579  & 0.539  & 0.599  & 0.541  & 0.940  & 0.675 \\ 
        ~ & 192 & \textbf{0.229}  & \underline{0.271}  & \underline{0.230}  & \textbf{0.269}  & 0.237  & 0.297  & 0.287  & 0.344  & 0.293  & 0.347  & 0.738  & 0.613  & 0.459  & 0.453  & 1.061  & 0.735 \\ 
        ~ & 336 & \underline{0.288}  & \underline{0.317}  & 0.289  & \textbf{0.302}  & \textbf{0.282}  & 0.334  & 0.336  & 0.374  & 0.346  & 0.380  & 0.464  & 0.466  & 0.575  & 0.495  & 1.201  & 0.802 \\ 
        ~ & 540 & \underline{0.341}  & \underline{0.355}  & 0.345  & \textbf{0.348}  & \textbf{0.325}  & 0.365  & 0.384  & 0.400  & 0.379  & 0.394  & 0.909  & 0.700  & 0.862  & 0.625  & 0.958  & 0.702 \\ 
        ~ & \textbf{Avg.} & \textbf{0.257}  & \underline{0.290}  & \underline{0.259}  & \textbf{0.285}  & 0.260  & 0.311  & 0.315  & 0.360  & 0.318  & 0.359  & 0.672  & 0.579  & 0.624  & 0.529  & 1.040  & 0.729 \\ 
    \bottomrule
    \end{tabular}}
    \vspace{-10pt}
    \label{tab:regular}
\end{table}

\subsection{Pendulum Regression Task} \label{sec:pendulum}

The objective of pendulum regression is to assess the models' capability in accommodating observations obtained at irregular intervals~\cite{schirmer2022modeling, smith2022simplified}. The input sequence comprises $50$ images, each measuring $24$ by $24$ pixels, and is afflicted by correlated noise. These images are sampled at irregular intervals from a continuous trajectory spanning a duration of $100$ units, while the targets correspond to the sine and cosine values of the pendulum's angle. It's important to note that the pendulum follows a nonlinear dynamical system, with the velocity remaining unobserved. Table \ref{tab:pendulum} summarizes the results of this experiment. ContiFormer outperforms CRU and ODE-RNN.

\begin{table}[htbp]
    \centering
    \caption{Regression MSE $\times 10^{-3}$ (mean $\pm$ std) on pendulum regression task.}
    \begin{tabular}{cc}
    \toprule
        Model & Regression MSE ($\times 10^{-3}$) \\
    \midrule
        mTAN~\cite{shukla2021multi} & 65.64 (4.05) \\
        ODE-RNN~\cite{rubanova2019latent} & 7.26 (0.41) \\
        CRU~\cite{schirmer2022modeling} & 4.63 (1.07) \\
        S5~\cite{smith2022simplified} & \textbf{3.41 (0.27)} \\
    \midrule
        ContiFormer & 4.21 (0.24) \\
    \bottomrule
    \end{tabular}
    \label{tab:pendulum}
\end{table}

\section{Ablation Study and Parameter Study}

\subsection{Effects of Different Attention Mechanism}

We study the impact of different attention mechanisms to show the significance of both continuous-evolving attention scores and continuous value functions. We claim that both of these two continuous designs attribute to the modeling of complex dynamic systems. We conduct the experiment on 6 datasets for event prediction task. As shown in Table. \ref{tab:abl}, ContiFormer outperforms both of the two variants, which shows that jointly modeling the dynamic relationship between input observations and meanwhile capturing the dynamic change of the observations is critical for modeling complex event sequences in reality.

\begin{table*}[t]
    \centering
    \caption{Effects of different attention mechanism design on event prediction task. Attn. stands for Attention. {\scriptsize \CheckmarkBold} refers to a continuous version of attention/value, while {\scriptsize \XSolidBrush} refers to a static version. It is important to note that when both attention and value are static, the model is equivalent to Transformer. Conversely, when both are continuous, the model is referred to as ContiFormer.}
    \resizebox{\textwidth}{!}{%
    \begin{tabular}{C{0.053\textwidth}C{0.053\textwidth}|C{0.07\textwidth}C{0.09\textwidth}C{0.11\textwidth}|C{0.135\textwidth}C{0.135\textwidth}|C{0.07\textwidth}C{0.09\textwidth}C{0.11\textwidth}}
    \toprule
         \multicolumn{2}{c}{Dataset} & \multicolumn{3}{c}{Synthetic} & \multicolumn{2}{c}{Neonate} & \multicolumn{3}{c}{Traffic} \\
    \cmidrule(l){1-2}
    \cmidrule(l){3-5}
    \cmidrule(l){6-7}
    \cmidrule(l){8-10}
         Attn. & Value & LL ($\uparrow$) & ACC ($\uparrow$) & RMSE ($\downarrow$) & LL ($\uparrow$)  & RMSE ($\downarrow$) & LL ($\uparrow$) & ACC ($\uparrow$) & RMSE ($\downarrow$) \\
    \midrule
    {\scriptsize \XSolidBrush} & {\scriptsize \XSolidBrush} & -0.589 & 0.841 & 0.205 & -2.702 & 9.471 & 0.569 & 0.818 & 0.332 \\
    {\scriptsize \XSolidBrush} & {\scriptsize \CheckmarkBold} & -0.576 & \textbf{0.842} & 0.202 & -2.634 & 9.249 & 0.598 & 0.819 & \textbf{0.327} \\
    {\scriptsize \CheckmarkBold} & {\scriptsize \XSolidBrush} & -0.563 & 0.842 & 0.200 & -2.590 & 9.267 & 0.544 & 0.814 & 0.329 \\
    \midrule
    {\scriptsize \CheckmarkBold} & {\scriptsize \CheckmarkBold}  & \textbf{-0.535} & \textbf{0.842} & \textbf{0.192} & \textbf{-2.550}  & \textbf{9.233} & \textbf{0.635} & \textbf{0.822} & 0.328 \\
    \bottomrule
    \end{tabular}}
    
    \resizebox{\textwidth}{!}{%
    \begin{tabular}{cc|ccc|ccc|ccc}
    \toprule
         \multicolumn{2}{c}{Dataset} & \multicolumn{3}{c}{MIMIC} & \multicolumn{3}{c}{BookOrder} & \multicolumn{3}{c}{StackOverflow} \\
    \cmidrule(l){1-2}
    \cmidrule(l){3-5}
    \cmidrule(l){6-8}
    \cmidrule(l){9-11}
         Attn. & Value & LL ($\uparrow$) & ACC ($\uparrow$) & RMSE ($\downarrow$) &  LL ($\uparrow$) & ACC ($\uparrow$) & RMSE ($\downarrow$) & LL ($\uparrow$) & ACC ($\uparrow$) & RMSE ($\downarrow$) \\
    \midrule
    {\scriptsize \XSolidBrush} & {\scriptsize \XSolidBrush} & -1.137 & 0.834 & 0.843 & -0.302 & 0.622 & 3.688 & -2.354 & 0.468 & 0.951 \\
    {\scriptsize \XSolidBrush} & {\scriptsize \CheckmarkBold}  & -1.147 & 0.832 & \textbf{0.837} & -0.312 & 0.627 & 3.690 & -2.354 & 0.468 & 0.950 \\
    {\scriptsize \CheckmarkBold} & {\scriptsize \XSolidBrush} & -1.137 & \textbf{0.837} & 0.838 & -0.272 & \textbf{0.629} & 3.632 & -2.337 & \textbf{0.472} & \textbf{0.948} \\
    \midrule
    {\scriptsize \CheckmarkBold} & {\scriptsize \CheckmarkBold} & \textbf{-1.135} & 0.836 & \textbf{0.837} & \textbf{-0.270} & 0.628 & \textbf{3.614} & \textbf{-2.334} & \textbf{0.472} & \textbf{0.948} \\
    \bottomrule
    \end{tabular}}

    \label{tab:abl}
\end{table*}

\subsection{Effects of Different Numeric Approximation Methods}

As discussed in Appendix ~\ref{sec:approximating}, linear interpolation and Gauss-Legendre Quadrature are two methods for approximating an integral in a close interval $[-1, 1]$. We study the impact of these two methods and set $P$, i.e., the number of intermediate steps for integral approximation, in the Gauss-Legendre quadrature from $\{ 2, 3, 4, 5\}$. We conducted the experiment on 6 datasets for event prediction task. The experimental results can be found in Table ~\ref{tab:ffr}. As shown in the table, our model exhibits low sensitivity to the choice of approximation methods.

\begin{table*}[t]
    \centering
    \caption{Effects of difference numerical approximation methods on event prediction task. (default to linear approximation instead of the Gauss-Legendre quadrature method).}
    \resizebox{\textwidth}{!}{%
    \begin{tabular}{C{0.20\textwidth}C{0.07\textwidth}C{0.09\textwidth}C{0.11\textwidth}|C{0.135\textwidth}C{0.135\textwidth}|C{0.07\textwidth}C{0.09\textwidth}C{0.11\textwidth}}
    \toprule
         \multirow{2}{*}{\diagbox{$P$}{Dataset}} & \multicolumn{3}{c}{Synthetic} & \multicolumn{2}{c}{Neonate} & \multicolumn{3}{c}{Traffic} \\
    \cmidrule(l){2-4}
    \cmidrule(l){5-6}
    \cmidrule(l){7-9}
          & LL ($\uparrow$) & ACC ($\uparrow$) & RMSE ($\downarrow$) & LL ($\uparrow$)  & RMSE ($\downarrow$) & LL ($\uparrow$) & ACC ($\uparrow$) & RMSE ($\downarrow$) \\
    \midrule
    \midrule
    \multicolumn{9}{c}{Approxiamation Method = \textit{Linear}} \\
    \midrule
    -- & -0.535 & \textbf{0.842} & \textbf{0.192} & \textbf{-2.550}  & \textbf{9.233} & \textbf{0.635} & \textbf{0.822} & \textbf{0.328} \\
    \midrule
    \midrule
    \multicolumn{9}{c}{Approxiamation Method = \textit{Gauss-Legendre Quadrature}} \\
    \midrule
    2 & -0.545 & \textbf{0.842} & 0.198 & -2.571 & 9.235 & 0.599 & 0.817 & 0.330 \\
    3 & \textbf{-0.531} & \textbf{0.842} & 0.194 & -2.594 & 9.236 & 0.610 & 0.820 & \textbf{0.328} \\
    4 & -0.556 & \textbf{0.842} & 0.199 & -2.590 & 9.253 & 0.589 & 0.819 & \textbf{0.328} \\
    5 & -0.550 & \textbf{0.842} & 0.193 & -2.602 & 9.258 & 0.601 & 0.820 & \textbf{0.328} \\
    \bottomrule
    \end{tabular}}  

    \resizebox{\textwidth}{!}{%
    \begin{tabular}{cccc|ccc|ccc}
    \toprule
         \multirow{2}{*}{\diagbox{$P$}{Dataset}} & \multicolumn{3}{c}{MIMIC} & \multicolumn{3}{c}{BookOrder} & \multicolumn{3}{c}{StackOverflow} \\
    \cmidrule(l){2-4}
    \cmidrule(l){5-7}
    \cmidrule(l){8-10}
           & LL ($\uparrow$) & ACC ($\uparrow$) & RMSE ($\downarrow$) &  LL ($\uparrow$) & ACC ($\uparrow$) & RMSE ($\downarrow$) & LL ($\uparrow$) & ACC ($\uparrow$) & RMSE ($\downarrow$) \\
    \midrule
    \midrule
    \multicolumn{10}{c}{Approxiamation Method = \textit{Linear}} \\
    \midrule
    -- & -1.135 & 0.836 & \textbf{0.837} & \textbf{-0.270} & \textbf{0.628} & 3.614 & \textbf{-2.332} & \textbf{0.473} & \textbf{0.948} \\
    \midrule
    \midrule
    \multicolumn{10}{c}{Approxiamation Method = \textit{Gauss-Legendre Quadrature}} \\
    \midrule
    2 & -1.144 & 0.834 & 0.840 & -0.291 & 0.627 & 3.653 & -2.337 & 0.472 & \textbf{0.948} \\
    3 & -1.142 & \textbf{0.837} & 0.839 & -0.294 & 0.627 & 3.667 & -2.337 & 0.471 & \textbf{0.948} \\
    4 & -1.143 & \textbf{0.837} & 0.839 & -0.287 & 0.627 & 3.647 & -2.340 & 0.471 & \textbf{0.948} \\
    5 & \textbf{-1.132} & 0.834 & 0.838 & -0.290 & \textbf{0.628} & \textbf{3.563} & -2.338 & 0.471 & \textbf{0.948} \\
    \bottomrule
    \end{tabular}} 
    \label{tab:ffr}
\end{table*}

\subsection{Effects of Tolerance Errors in ODESolver} \label{sec:tolerance}

ODE solvers play a crucial role in ensuring that the model's output remains within a specified tolerance of the true solution~\cite{chen2018neural}. By adjusting this tolerance, we can observe changes in the behavior of the neural network. Given our use of a fixed-step ODE solver, we sought to investigate the impact of the step size on prediction performance throughout the experiment. Specifically, we explored different step sizes from the set $\{0.5, 0.1, 0.05, 0.01, 0.005\}$. A smaller step size corresponds to a greater number of forward passes required to solve the ordinary differential equations, resulting in more accurate solutions at the expense of increased computational time. We conducted extensive experiments on 6 datasets for the event prediction task. The experimental results are presented in Table ~\ref{tab:eff}. The table reveals that our model demonstrates low sensitivity to the tolerance error. Therefore, we set the tolerance error to $0.1$ in order to make a desirable balance between computational time and prediction accuracy. We explain the phenomena in two aspects. First, our framework, similar to the Transformer architecture, circumvents cumulative errors by eliminating the necessity of sequentially passing neural networks in a regressive manner. Second, since the output from the attention module is actually a weighted sum of the tokens, the total variance is lowered. For instance, assume that $X_1, X_2, ..., X_N \sim \mathcal{N}(\mu, \sigma^2)$
, then the total variance of the mean value $\bar{x}=\frac{1}{N} (X_1+X_2+...+X_N)$ is $\frac{\sigma^2}{N}$, and the variance is significantly lowered given the large 
. Overall, our model is not sensitive to tolerance error, which makes our model robust to different scenarios.

\subsection{Effects of Vector Fields in ODE}

Neural ODEs are controlled by a vector field that defines the change of $x$ over time, i.e.,
\begin{equation}
    \frac{\text{d} \textbf{x}}{\text{d} t} = f(\textbf{x}, t) ~.
\end{equation}
where $f: \mathbb{R}^{d+1} \rightarrow \mathbb{R}^{d}$. We investigate two types of $f$ to implement Eq. (\ref{eq:kv}). 

Specifically, we denote $\text{Concat}$ as
\begin{equation}
f(\textbf{x}, t)=\text{Actfn}(\text{Linear}^{d, d}(\text{Linear}^{d, d}(\textbf{x}) + \text{Linear}^{1, d}(t))) ~,
\end{equation}
and $\text{ConcatNorm}$ as
\begin{equation}
f(\textbf{x}, t)=\text{Actfn}(\text{LN}(\text{Linear}^{d, d}(\text{Linear}^{d, d}(\textbf{x}) + \text{Linear}^{1, d}(t)))) ~, 
\end{equation}
where $\text{Actfn}(\cdot)$ is either $\text{tanh}$ or $\text{sigmoid}$ activation function, $\text{Linear}^{a,b}(\cdot): \mathbb{R}^a \rightarrow \mathbb{R}^b$: is a linear transformer from dimension $a$ to dimension $b$, $\text{LN}$ denotes the layer normalization. We conduct the experiment on 6 datasets for event prediction task. The experimental results are shown in Table ~\ref{tab:func}. As shown in the table, incorporating normalization before the activation function leads to improved predictive performance.

\begin{table*}[t]
    \centering
    \caption{Effects of tolerance errors in ODESolver on event prediction task (default to 0.5 for Neonate dataset and 0.1 for other datasets). (The use of a double dash (--) indicates that the experiment requires an excessively long duration of time.)}
    \resizebox{\textwidth}{!}{%
    \begin{tabular}{C{0.21\textwidth}C{0.07\textwidth}C{0.09\textwidth}C{0.11\textwidth}|C{0.135\textwidth}C{0.135\textwidth}|C{0.07\textwidth}C{0.09\textwidth}C{0.11\textwidth}}
    \toprule
         \multirow{2}{*}{\diagbox{$P$}{Dataset}} & \multicolumn{3}{c}{Synthetic} & \multicolumn{2}{c}{Neonate} & \multicolumn{3}{c}{Traffic} \\
    \cmidrule(l){2-4}
    \cmidrule(l){5-6}
    \cmidrule(l){7-9}
          & LL ($\uparrow$) & ACC ($\uparrow$) & RMSE ($\downarrow$) & LL ($\uparrow$)  & RMSE ($\downarrow$) & LL ($\uparrow$) & ACC ($\uparrow$) & RMSE ($\downarrow$) \\
    \midrule
    5e-1 & -0.559 & 0.841 & 0.196 & \textbf{-2.550} & 9.233 & 0.614 & 0.820 & \textbf{0.328} \\
    1e-1 & \textbf{-0.535} & \textbf{0.842} & \textbf{0.192} & -2.561 & \textbf{9.229} & \textbf{0.635} & \textbf{0.822} & \textbf{0.328} \\
    5e-2 & -0.549 & \textbf{0.842} & 0.195 & -2.568 & 9.236 & 0.620 & 0.819 & 0.329 \\
    1e-2 & -0.539 & \textbf{0.842} & 0.195 & -2.573 & 9.233 & 0.619 & 0.820 & 0.329 \\
    5e-3 & -0.549 & \textbf{0.842} & 0.199 & -2.559 & 9.238 & 0.613 & 0.819 & 0.329 \\
    \bottomrule
    \end{tabular}}

    \resizebox{\textwidth}{!}{%
    \begin{tabular}{cccc|ccc|ccc}
    \toprule
         \multirow{2}{*}{\diagbox{Stepsize}{Dataset}} & \multicolumn{3}{c}{MIMIC} & \multicolumn{3}{c}{BookOrder} & \multicolumn{3}{c}{StackOverflow} \\
    \cmidrule(l){2-4}
    \cmidrule(l){5-7}
    \cmidrule(l){8-10}
          & LL ($\uparrow$) & ACC ($\uparrow$) & RMSE ($\downarrow$) &  LL ($\uparrow$) & ACC ($\uparrow$) & RMSE ($\downarrow$) & LL ($\uparrow$) & ACC ($\uparrow$) & RMSE ($\downarrow$) \\
    \midrule
    5e-1 & -1.138 & \textbf{0.837} & \textbf{0.837} & -0.268 & \textbf{0.629} & 3.619 & -2.333 & \textbf{0.473} & \textbf{0.948} \\
    1e-1 & \textbf{-1.135} & 0.836 & \textbf{0.837} & -0.270 & 0.628 & 3.614 & \textbf{-2.332} & \textbf{0.473} & \textbf{0.948} \\
    5e-2 & -1.137 & 0.835 & \textbf{0.837} & \textbf{-0.263} & \textbf{0.629} & \textbf{3.609} & \textbf{-2.332} & \textbf{0.473} & \textbf{0.948} \\
    1e-2 & -1.143 & 0.836 & 0.838 & \textbf{-0.263} & \textbf{0.629} & 3.612 & -- & -- & --  \\
    5e-3 & -1.147 & \textbf{0.837} & 0.838 & -- & -- & -- & -- & -- & --  \\
    \bottomrule
    \end{tabular}} 
    \label{tab:eff}
\end{table*}

\begin{table*}[t]
    \centering
    \caption{Effects of different vector field designs on event prediction task. Attn. stands for Attention. {\scriptsize \CheckmarkBold} refers to $\text{ConcatNorm}$, while {\scriptsize \XSolidBrush} refers to $\text{Concat}$.}
    \resizebox{\textwidth}{!}{%
    \begin{tabular}{C{0.06\textwidth}C{0.08\textwidth}|C{0.07\textwidth}C{0.09\textwidth}C{0.11\textwidth}|C{0.135\textwidth}C{0.135\textwidth}|C{0.07\textwidth}C{0.09\textwidth}C{0.11\textwidth}}
    \toprule
         \multicolumn{2}{c}{Dataset} & \multicolumn{3}{c}{Synthetic} & \multicolumn{2}{c}{Neonate} & \multicolumn{3}{c}{Traffic} \\
    \cmidrule(l){1-2}
    \cmidrule(l){3-5}
    \cmidrule(l){6-7}
    \cmidrule(l){8-10}
         Norm & Actfn & LL ($\uparrow$) & ACC ($\uparrow$) & RMSE ($\downarrow$) & LL ($\uparrow$)  & RMSE ($\downarrow$) & LL ($\uparrow$) & ACC ($\uparrow$) & RMSE ($\downarrow$) \\
    \midrule
    {\scriptsize \XSolidBrush} & tanh & -0.552 & \textbf{0.842} & 0.198 & \textbf{-2.550} & 9.233 & \textbf{0.639} & 0.820 & 0.329  \\
    {\scriptsize \XSolidBrush} & sigmoid & -0.561 & \textbf{0.842} & 0.201 & -2.602 & 9.263 & 0.635 & \textbf{0.822} & \textbf{0.328}  \\
    {\scriptsize \CheckmarkBold} & tanh & -0.556 & \textbf{0.842} & 0.199 & -2.584 & 9.242 & 0.628 & 0.820 & \textbf{0.328} \\
    {\scriptsize \CheckmarkBold} & sigmoid & \textbf{-0.535} & \textbf{0.842} & \textbf{0.192} & -2.601 & \textbf{9.232} & 0.616 & 0.819 & 0.329  \\
    \bottomrule
    \end{tabular}}
    
    \resizebox{\textwidth}{!}{%
    \begin{tabular}{cc|ccc|ccc|ccc}
    \toprule
         \multicolumn{2}{c}{Dataset} & \multicolumn{3}{c}{MIMIC} & \multicolumn{3}{c}{BookOrder} & \multicolumn{3}{c}{StackOverflow} \\
    \cmidrule(l){1-2}
    \cmidrule(l){3-5}
    \cmidrule(l){6-8}
    \cmidrule(l){9-11}
         Norm & Actfn  & LL ($\uparrow$) & ACC ($\uparrow$) & RMSE ($\downarrow$) &  LL ($\uparrow$) & ACC ($\uparrow$) & RMSE ($\downarrow$) & LL ($\uparrow$) & ACC ($\uparrow$) & RMSE ($\downarrow$) \\
    \midrule
    {\scriptsize \XSolidBrush} & tanh & -1.162 & 0.831 & \textbf{0.837} & -0.274 & 0.627 & 3.615 & -2.337 & 0.472 & 0.948   \\
    {\scriptsize \XSolidBrush} & sigmoid & -1.144 & \textbf{0.836} & 0.838 & \textbf{-0.270} & \textbf{0.628} & \textbf{3.614} & -2.334 & 0.472 & 0.948  \\
    {\scriptsize \CheckmarkBold} & tanh & -1.157 & 0.833 & 0.840 & -0.307 & 0.622 & 3.681 & -2.335 & \textbf{0.473} & \textbf{0.947}  \\
    {\scriptsize \CheckmarkBold} & sigmoid & \textbf{-1.135} & \textbf{0.836} & \textbf{0.837} & -0.302 & 0.625 & 3.689 & \textbf{-2.332} & \textbf{0.473} & 0.948  \\
    \bottomrule
    \end{tabular}}

    \label{tab:func}
\end{table*}

\subsection{Effects of Dropout Rate}

\begin{figure}[htbp]
    \centering
    \includegraphics[width=\textwidth]{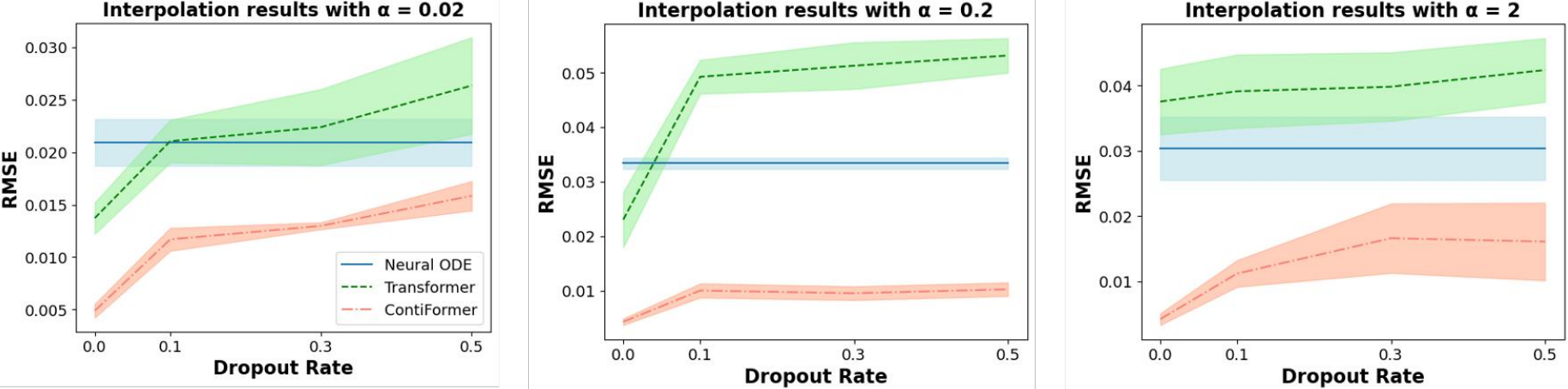}
    \caption{Interpolation results on 2D spiral under different dropout rates.}
    \label{fig:dropout}
\end{figure}

Dropout in the Transformer can prevent overfitting and boost performance. However, with the increment of the dropout rate, we output of ContiFormer tends to be less continuous, leading to a sub-optimal solution for tasks that prefer a continuous-time output. To this end, we conduct experiments on both modeling continuous-time function and event prediction tasks. As for the task of modeling continuous-time function, it is desirable to have a dynamic system where outputs are ``almost'' continuous and smooth. As shown in Figure ~\ref{fig:dropout}, the performance of both Transformer and ContiFormer drops significantly as the dropout rate increases. For prediction tasks, the dynamic system can be shaky, which reflects noise or other uncontrolled influences on the system. The experimental results on event prediction can be found in Table ~\ref{tab:drp}. As shown in the table, the dropout rate has low sensitivity to the prediction results. 

\subsection{Effects of Time Normalization}

As shown in Eq. (\ref{eq:attn}) and Eq. (\ref{eq:vv}), due to the nature of sequence data, it is common to encounter abnormal points, such as events with significantly large time differences. To avoid numeric instability during training, we divide the integrated solution by the time difference. To examine the effect of normalization, we conduct experiments on 6 datasets for event prediction task. The experimental results can be found in Table ~\ref{tab:norm}, which shows that the normalization is significant on most datasets, including Synthetic, Neonate, MIMIC and StackOverflow datasets.

\begin{table*}[t]
    \centering
    \caption{Effects of dropout rate in ContiFormer on event prediction task (default to 0.1).}
    \resizebox{\textwidth}{!}{%
    \begin{tabular}{C{0.21\textwidth}C{0.07\textwidth}C{0.09\textwidth}C{0.11\textwidth}C{0.135\textwidth}C{0.135\textwidth}C{0.07\textwidth}C{0.09\textwidth}C{0.11\textwidth}}
    \toprule
         \multirow{2}{*}{\diagbox{Dropout}{Dataset}} & \multicolumn{3}{c}{Synthetic} & \multicolumn{2}{c}{Neonate} & \multicolumn{3}{c}{Traffic} \\
    \cmidrule(l){2-4}
    \cmidrule(l){5-6}
    \cmidrule(l){7-9}
          & LL ($\uparrow$) & ACC ($\uparrow$) & RMSE ($\downarrow$) & LL ($\uparrow$)  & RMSE ($\downarrow$) & LL ($\uparrow$) & ACC ($\uparrow$) & RMSE ($\downarrow$) \\
    \midrule
    0 & -0.557 & \textbf{0.842} & \textbf{0.191} & -2.588 & \textbf{9.215} & 0.633 & \textbf{0.824} & \textbf{0.328}  \\
    0.1 & \textbf{-0.535} & \textbf{0.842} & 0.192 & \textbf{-2.550} & 9.233 & \textbf{0.635} & 0.822 & \textbf{0.328} \\
    0.3 & -0.644 & \textbf{0.842} & 0.223 & -2.583 & 9.242 & 0.577 & 0.814 & 0.332 \\
    0.5 & -0.696 & 0.841 & 0.230 & -2.579 & 9.280 & 0.525 & 0.813 & 0.331 \\
    \bottomrule
    \end{tabular}}

    \resizebox{\textwidth}{!}{%
    \begin{tabular}{cccccccccc}
    \toprule
         \multirow{2}{*}{\diagbox{Dropout}{Dataset}} & \multicolumn{3}{c}{MIMIC} & \multicolumn{3}{c}{BookOrder} & \multicolumn{3}{c}{StackOverflow} \\
    \cmidrule(l){2-4}
    \cmidrule(l){5-7}
    \cmidrule(l){8-10}
          & LL ($\uparrow$) & ACC ($\uparrow$) & RMSE ($\downarrow$) &  LL ($\uparrow$) & ACC ($\uparrow$) & RMSE ($\downarrow$) & LL ($\uparrow$) & ACC ($\uparrow$) & RMSE ($\downarrow$) \\
    \midrule
    0 & -1.193 & 0.825 & 0.839 & -0.268 & 0.629 & 3.616 & -2.338 & 0.472 & 0.949 \\
    0.1 & \textbf{-1.135} & \textbf{0.836} & 0.837 & -0.270 & 0.628 & 3.614 & \textbf{-2.332} & \textbf{0.473} &\textbf{ 0.948} \\
    0.3 & -1.142 & 0.833 & \textbf{0.836} & \textbf{-0.266} & \textbf{0.630} & \textbf{3.608} & -2.336 & 0.472 & \textbf{0.948} \\
    0.5 & -1.149 & 0.834 & 0.844 & -0.270 & 0.629 & 3.624 & -2.342 & 0.471 & 0.948 \\
    \bottomrule
    \end{tabular}} 

    \label{tab:drp}
\end{table*}

\begin{table*}[t]
    \centering
    \caption{Effects of normalization on event prediction task. Attn. stands for Attention. {\scriptsize \CheckmarkBold} refers to normalizing the attention/value as shown in Eq. (\ref{eq:attn})/Eq. (\ref{eq:vv}), while {\scriptsize \XSolidBrush} refers to removing the normalization, i.e. $t - t_i$. Note that ContiFormer use both normalization for attention and values.}
    \resizebox{\textwidth}{!}{%
    \begin{tabular}{C{0.053\textwidth}C{0.053\textwidth}|C{0.07\textwidth}C{0.09\textwidth}C{0.11\textwidth}|C{0.135\textwidth}C{0.135\textwidth}|C{0.07\textwidth}C{0.09\textwidth}C{0.11\textwidth}}
    \toprule
         \multicolumn{2}{c}{Dataset} & \multicolumn{3}{c}{Synthetic} & \multicolumn{2}{c}{Neonate} & \multicolumn{3}{c}{Traffic} \\
    \cmidrule(l){1-2}
    \cmidrule(l){3-5}
    \cmidrule(l){6-7}
    \cmidrule(l){8-10}
         Attn. & Value & LL ($\uparrow$) & ACC ($\uparrow$) & RMSE ($\downarrow$) & LL ($\uparrow$)  & RMSE ($\downarrow$) & LL ($\uparrow$) & ACC ($\uparrow$) & RMSE ($\downarrow$) \\
    \midrule
    {\scriptsize \XSolidBrush} & {\scriptsize \XSolidBrush} & -0.618 & \textbf{0.842} & 0.215 & -2.612 & 9.277 & \textbf{0.721} & \textbf{0.822} & 0.329 \\
    {\scriptsize \XSolidBrush} & {\scriptsize \CheckmarkBold} & -0.601 & 0.841 & 0.206 & -2.576 & 9.240 & 0.536 & 0.814 & 0.336 \\
    {\scriptsize \CheckmarkBold} & {\scriptsize \XSolidBrush} & -0.584 & \textbf{0.842} & 0.205 & -2.601 & 9.272 & 0.576 & 0.818 & \textbf{0.327} \\
    \midrule
    {\scriptsize \CheckmarkBold} & {\scriptsize \CheckmarkBold}  & \textbf{-0.535} & \textbf{0.842} & \textbf{0.192} & \textbf{-2.550}  & \textbf{9.233} & 0.635 & \textbf{0.822} & 0.328 \\
    \bottomrule
    \end{tabular}}
    
    \resizebox{\textwidth}{!}{%
    \begin{tabular}{cc|ccc|ccc|ccc}
    \toprule
         \multicolumn{2}{c}{Dataset} & \multicolumn{3}{c}{MIMIC} & \multicolumn{3}{c}{BookOrder} & \multicolumn{3}{c}{StackOverflow} \\
    \cmidrule(l){1-2}
    \cmidrule(l){3-5}
    \cmidrule(l){6-8}
    \cmidrule(l){9-11}
         Attn. & Value & LL ($\uparrow$) & ACC ($\uparrow$) & RMSE ($\downarrow$) &  LL ($\uparrow$) & ACC ($\uparrow$) & RMSE ($\downarrow$) & LL ($\uparrow$) & ACC ($\uparrow$) & RMSE ($\downarrow$) \\
    \midrule
    {\scriptsize \XSolidBrush} & {\scriptsize \XSolidBrush} & -1.167 & 0.826 & 0.848 & -0.311 & 0.627 & 3.710 & -2.346 & 0.470 & \textbf{0.948} \\
    {\scriptsize \XSolidBrush} & {\scriptsize \CheckmarkBold} & -1.149 & 0.833 & 0.840 & \textbf{-0.269} & \textbf{0.630} & 3.634 & -2.341 & 0.471 & \textbf{0.948} \\
    {\scriptsize \CheckmarkBold} & {\scriptsize \XSolidBrush} & -1.172 & 0.826 & 0.844 & -0.277 & 0.629 & 3.654 & -2.336 & \textbf{0.472} & \textbf{0.948} \\
    \midrule
    {\scriptsize \CheckmarkBold} & {\scriptsize \CheckmarkBold} & \textbf{-1.135} & \textbf{0.836} & \textbf{0.837} & -0.270 & 0.628 & \textbf{3.614} & \textbf{-2.334} & \textbf{0.472} & \textbf{0.948} \\
    \bottomrule
    \end{tabular}}
    
    \label{tab:norm}
\end{table*}

\section{Time Cost v.s. Memory Cost v.s. Accuracy} \label{sec:cost}

\begin{table}[t]
    \centering
    \caption{Time Cost v.s. Accuracy for different models on UEA classification datasets (0\% dropped). For all ODE-based models and our model, we adopt the RK4 solver and step size refers to the amount of time increment for the RK4 solver. $\uparrow$ ($\downarrow$) indicates the higher (lower) the better. Note that since the results are aggregated over 3 random seeds, achieving the same results doesn't indicate the same outcome for each random seed.}
    \resizebox{\textwidth}{!}{%
    \begin{tabular}{cccccccc}
    \toprule
        Dataset & & \multicolumn{2}{c}{BasicMotions} & \multicolumn{2}{c}{JapaneseVowels} &  \multicolumn{2}{c}{Libras}  \\ 
        Model & Step Size & Time Cost ($\downarrow$) & Accuracy ($\uparrow$) &  Time Cost ($\downarrow$) & Accuracy ($\uparrow$) & Time Cost ($\downarrow$) & Accuracy ($\uparrow$)  \\ 
    \midrule
        TST & - & $0.56 \times$  & 0.975  & $0.10 \times$  & 0.986  & $0.36 \times$  & 0.843  \\ 
        S5 & - & $1 \times$  & 0.958  & $1 \times$  & 0.926  & $1 \times$  & 0.655  \\ 
    \midrule
        \multirow{2}{*}{ODE-RNN (w/o norm)} & 0.1 & $3.06 \times$  & 0.917  & $0.79 \times$  & 0.952  & $2.67 \times$  & 0.602  \\ 
        ~ & 0.01 & $18.94 \times$  & 0.925  & $5.26 \times$  & 0.955  & $16.29 \times$  & 0.611  \\ 
    \midrule
        \multirow{2}{*}{ODE-RNN (w/ norm)} & 0.1 & $1.46 \times$  & \textbf{1.000}  & $0.32 \times$  & 0.980  & $1.24 \times$  & 0.637  \\ 
        ~ & 0.01 & $1.59 \times$  & \textbf{1.000}  & $0.47 \times$  & 0.980  & $1.59 \times$  & 0.619  \\ 
    \midrule
        \multirow{2}{*}{Neural CDE} & 0.1 & $4.47 \times$  & 0.958  & $1.16 \times$  & 0.932  & $3.87 \times$  & 0.763  \\ 
        ~ & 0.01 & $34.26 \times$  & 0.958  & $9.57 \times$  & 0.942  & $28.48 \times$  & 0.744  \\ 
    \midrule
        \multirow{2}{*}{ContiFormer} & 0.1 & $2.89 \times$  & 0.975  & $0.49 \times$  & 0.990  & $2.26 \times$  & \textbf{0.870}  \\ 
        ~ & 0.01 & $13.75 \times$  & 0.975  & $1.87 \times$  & \textbf{0.991}  & $8.74 \times$  & \textbf{0.870} \\ 
    \bottomrule
    \end{tabular}}

    \resizebox{\textwidth}{!}{%
    \begin{tabular}{cccccccc}
    \toprule
        Dataset & & \multicolumn{2}{c}{NATOPS} & \multicolumn{2}{c}{PEMS-SF} &  \multicolumn{2}{c}{RacketSports}  \\ 
        Model & Step Size & Time Cost ($\downarrow$) & Accuracy ($\uparrow$) &  Time Cost ($\downarrow$) & Accuracy ($\uparrow$) & Time Cost ($\downarrow$) & Accuracy ($\uparrow$)  \\ 
    \midrule
        TST & - & $0.32 \times$  & \textbf{0.963}  & $0.15 \times$  & 0.890  & $0.14 \times$ & 0.826  \\ 
        S5 & - & $1 \times$  & 0.839  & $1 \times$  & \textbf{0.896}  & $1 \times$  & 0.772  \\ 
    \midrule
        \multirow{2}{*}{ODE-RNN (w/o norm)} & 0.1 & $2.67 \times$  & 0.909  & $4 \times$  & 0.778  & $2.47 \times$  & 0.827  \\ 
        ~ & 0.01 & $16.52 \times$  & 0.907  & $28.45 \times$  & 0.761  & $14.65 \times$  & 0.796  \\ 
    \midrule
        \multirow{2}{*}{ODE-RNN (w/ norm)} & 0.1 & $1.28 \times$  & 0.898  & $1.47 \times$  & 0.775  & $1.15 \times$  & 0.785  \\ 
        ~ & 0.01 & $1.45 \times$  & 0.898  & $1.44 \times$  & 0.775  & $1.57 \times$  & 0.787  \\ 
    \midrule
        \multirow{2}{*}{Neural CDE} & 0.1 & $3.83 \times$  & 0.789  & $6.04 \times$  & 0.763  & $3.48 \times$  & 0.743  \\ 
        ~ & 0.01 & $29.24 \times$ & 0.787  & $50.36 \times$  & 0.763  & $25.04 \times$  & 0.748  \\ 
    \midrule
        \multirow{2}{*}{ContiFormer} & 0.1 & $2.32 \times$  & 0.935  & $5.45 \times$  & 0.823  & $1.87 \times$  & \textbf{0.836}  \\ 
        ~ & 0.01 & $9.76 \times$  & 0.935  & $37.08 \times$  & 0.823  & $5.88 \times$  & \textbf{0.836} \\ 
    \bottomrule
    \end{tabular}}
    
    \label{tab:costfull}
\end{table}

\begin{figure}[htbp]
    \centering
    \vspace{20pt}
    \includegraphics[width=\textwidth]{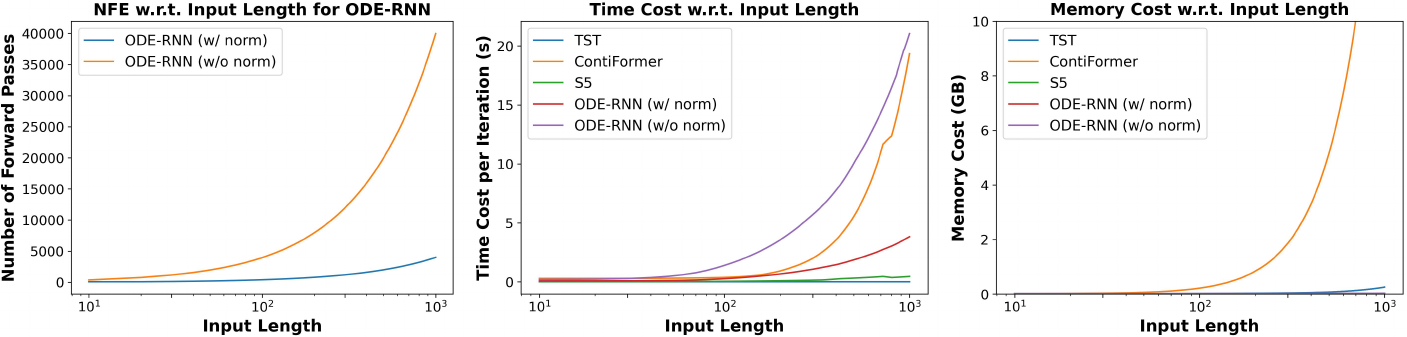}
    \caption{Time Cost v.s. Memory Cost v.s. Accuracy.}
    \label{fig:cost}
\end{figure}

Utilizing continuous-time modeling in ContiFormer often results in substantial time and GPU memory overhead. We have conducted a comparative analysis of ContiFormer's actual time costs in comparison to Transformer-based models (TST), ODE-based models (ODE-RNN, Neural CDE), and SSM-based models (S5). Throughout our analysis, we employ an RK4 solver for ODE-based models and our model, setting the step size to $0.1$ and $0.01$. It's worth noting that, for ODE-RNN and other ODE-based models, the input times have an impact on the number of forward passes when using a fixed-step ODE solver, thus, affecting the time costs.

The left figure in Figure \ref{fig:cost} illustrates the influence of input times concerning input length. For the model with normalization (a.k.a., w/ norm), the input times consistently range from $[0,1]$, regardless of the input length. Conversely, for the model without normalization (a.k.a., w/o norm), the input times span from $[0, L]$, where $L$ represents the input length.

As depicted in the middle figure in Figure \ref{fig:cost}, our model performs comparably to ODE-RNN (w/ norm) when the input length is up to $100$, but becomes approximately four times slower as the input length extends to $1000$. The right figure in Figure \ref{fig:cost} displays GPU memory usage, revealing a significantly higher memory cost associated with our model.

Table \ref{tab:costfull} provides a more comprehensive overview of different models on UEA datasets. It is evident that our model strikes a better balance between accuracy and time cost. Specifically, our model achieves the best performance on 3 out of 6 selected datasets while incurring only twice the time cost compared to S5.

\newpage
\section{Broader Impact}

The proposed model, ContiFormer, presents several potential impacts within the field of machine learning and irregular time series analysis. These include:
\begin{itemize}[leftmargin=5mm]
    \item Enhanced Modeling Capability: ContiFormer extends the Transformer architecture to effectively capture complex continuous-time dynamic systems. By leveraging continuous-time modeling techniques, the model demonstrates improved performance in handling irregular time series data, offering a more flexible and powerful modeling ability.
    \item Practical Applications: Our proposed method achieves state-of-the-art performance in a variety of tasks and real-life datasets, which makes it more promising to tackle real-world applications. The ability to model continuous-time dynamics offers valuable insights into time-evolving systems, facilitating improved decision-making and predictive capabilities in diverse domains.
\end{itemize}

\end{document}